\documentclass[]{fairmeta}
\usepackage{amsmath}
\usepackage{amssymb}
\usepackage{array}
\usepackage{tabularx}
\usepackage{enumitem}
\usepackage{wrapfig}
\usepackage{rotating}
\newcommand{\ms}[2]{$#1_{\pm #2}$}
\newcommand{\emj}[1]{{\scriptsize\textsf{[#1]}}}
\newcommand{\method}{\textsc{MIMESIS}}

\newtcbtheorem{Example}{\bfseries Example}{enhanced,drop shadow={black!50!white},
  coltitle=black,
  top=0.3in,
  attach boxed title to top left=
  {xshift=1.5em,yshift=-\tcboxedtitleheight/2},
  boxed title style={size=small,colback=blue!20}
}{Example}
\newtcolorbox{TranscriptBox}[1]{%
  enhanced,breakable,colback=white,colframe=black!30,boxrule=0.4pt,
  arc=1pt,left=6pt,right=6pt,top=5pt,bottom=5pt,
  title={#1},fonttitle=\bfseries,coltitle=black,
  attach boxed title to top left={xshift=6pt,yshift=-\tcboxedtitleheight/2},
  boxed title style={size=small,colback=blue!10,colframe=black!30}}
\newcommand{\bms}[2]{%
  \ensuremath{\mathbf{#1}_{\boldsymbol{\pm}\mathbf{#2}}}%
}

\usepackage{amsmath,amssymb,amsthm}
\theoremstyle{plain}
\newtheorem{csdtheorem}{Theorem}[section]

\newtheorem{csdproposition}[csdtheorem]{Proposition}

\title{{\raisebox{-0.32em}{\includegraphics[width=.08\linewidth]{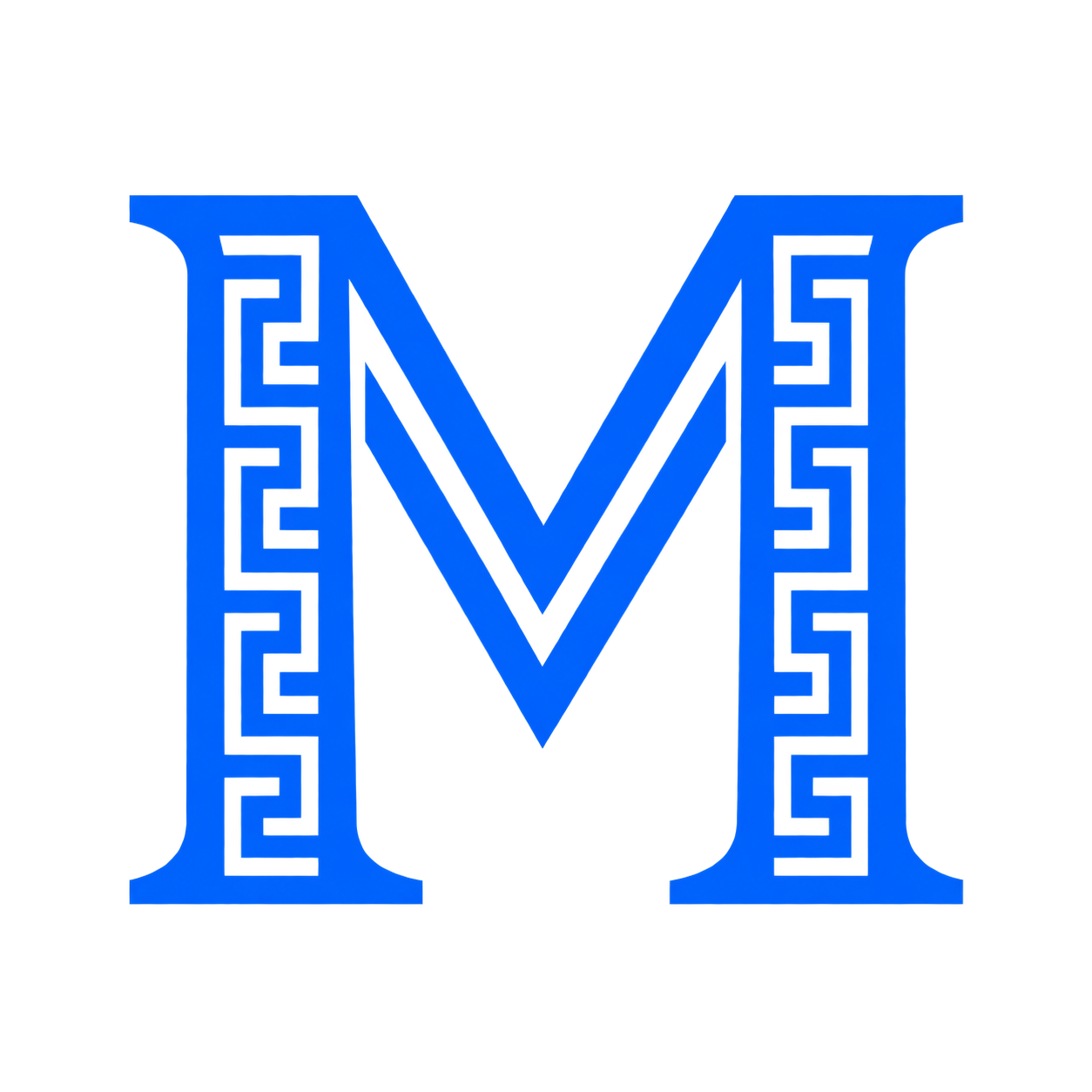}}}IMESIS: Learning User Simulators as Training Environments for Interactive Agents}

\author[1,2,*]{Hoang Phan}
\author[1]{Dat Huynh}
\author[]{Andrey Zhmoginov}
\author[1]{Qi Zeng}
\author[1]{Wancen Mu}
\author[1]{Yue Cao}
\author[1]{Shengjie Bi}
\author[1]{Yun He}
\author[1,3,*]{Changdae Oh}
\author[1]{Deren Lei}
\affiliation[1]{Meta Superintelligence Labs}
\affiliation[2]{New York University}
\affiliation[3]{University of Wisconsin - Madison}
\contribution[*]{Work done at Meta}

\abstract{{
Training and evaluating interactive language agents typically requires rich user interactions, yet collecting human feedback is expensive and difficult to scale. Simulated users offer a scalable alternative, but they must both resemble real user behavior and provide useful learning experiences for agents. In contrast, most agent-training frameworks rely on off-the-shelf assistant LLMs, whose helpfulness can make them overly cooperative, explicit, and behaviorally homogeneous compared with real users. We introduce MIMESIS, a purpose-built user simulator trained on human conversations with explicit reasoning supervision and 13 realistic behavioral patterns derived from real user interactions. Empirically, our 9B model achieves a SOUL-Index of 65.7, surpassing the strongest frontier model. Compared with  Claude-Opus-5, the strongest baseline on RealUserSim and SimulatorArena, MIMESIS improves behavioral fidelity by 13.4 points and reduces Turing distance by 3.6 points, respectively. We then freeze the simulator and train an agent by interacting with the frozen simulator using multi-turn reinforcement learning.  Across eight environments, training with MIMESIS yields better agent performance than training with GPT-5.5 under all nine unseen user simulators, demonstrating stronger generalization to new  user simulators. Moreover, we propose Coached On-Policy Self-Distillation (CSD), which leverages simulator-generated private reasoning traces and subsequent utterances as feedback on how well the agent addresses user needs. A coach converts this information into concise coaching notes that describe how the agent can better anticipate user needs and adapt its behavior over the course of an interaction. CSD turns this feedback into dense, token-level supervision beyond sparse task rewards, yielding further gains across all nine evaluation user models. %Our code is available \href{https://anonymous.4open.science/r/mimesis-DD3C/}{here}. 
}}

\correspondence{Hoang Phan at \email{hvp2011@nyu.edu}}
\metadata[Project Page]{\url{https://viethoang1512.github.io/mimesis/}}
\metadata[Code]{\url{https://github.com/VietHoang1512/mimesis}}
\begin{document}
\maketitle

\begin{figure}[ht]
\centering
\vspace*{-2mm}
\includegraphics[width=1.\textwidth]{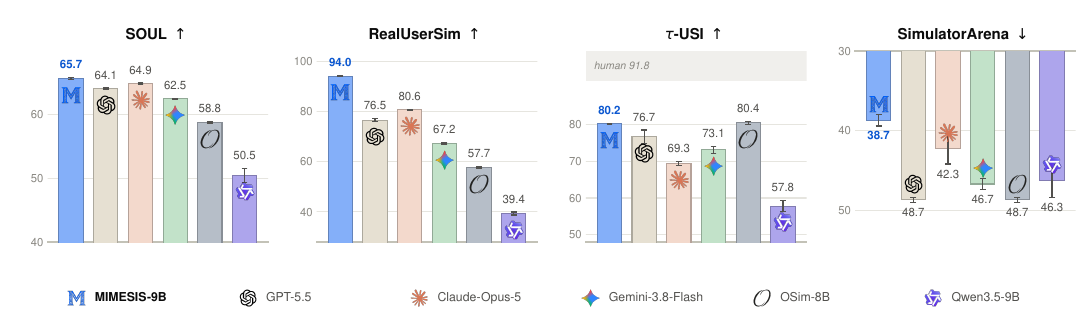}
\caption{\textbf{User-simulation performance across four benchmarks.}
We compare MIMESIS-9B with frontier API models (GPT-5.5, Claude-Opus-5, and Gemini-3.8-Flash) and open-weight simulators (Osim-8B and Qwen3.5-9B)  on SOUL~\citep{zhou2026odyssim}, RealUserSim~\citep{zhu2026realusersim}, $\tau$-USI~\citep{zhou2026sim2real}, and SimulatorArena~\citep{dou2025simulatorarena}. Higher is better for SOUL, RealUserSim, and $\tau$-USI, while lower Turing distance \citep{wang2026learning} is better for SimulatorArena. The shaded band in $\tau$-USI denotes the human inter-annotator score (91.8).}
\label{fig:main}
\end{figure}

\clearpage
\section{Introduction}
\label{sec:introduction}
Large language models are increasingly deployed as interactive agents that operate over
extended trajectories in partially observable and stateful environments
\citep{liu2024agentbench,ma2024agentboard,guan2026evaluating}. In these settings, success
cannot be reduced to producing a correct response at an isolated turn. Agents must
maintain conversational and environmental state, preserve instructions and constraints,
identify missing information, incorporate feedback, and condition later actions on
earlier decisions
\citep{lu2025toolsandbox,yao2024taubench}. Multi-turn interaction therefore provides a
more faithful evaluation of agentic competence than single-turn tasks, exposing
limitations in long-horizon reasoning, state tracking, instruction retention, and error
recovery
\citep{ma2024agentboard,lu2025toolsandbox,deshpande2025multichallenge}.  Recent work has therefore increasingly evaluated agents through multi-turn interaction with users, tools, and environments, spanning general reasoning and tool use~\citep{wang2024mint}, conversational web navigation~\citep{deng2024multi}, realistic tool--agent--user workflows~\citep{yao2024taubench}, and stateful tool execution~\citep{lu2025toolsandbox}.

% place at the START of a paragraph, not inside a list or right before \section
\begin{wrapfigure}{r}{0.59\textwidth}
  \centering
  \vspace*{-8mm}
\includegraphics[width=0.59\textwidth]{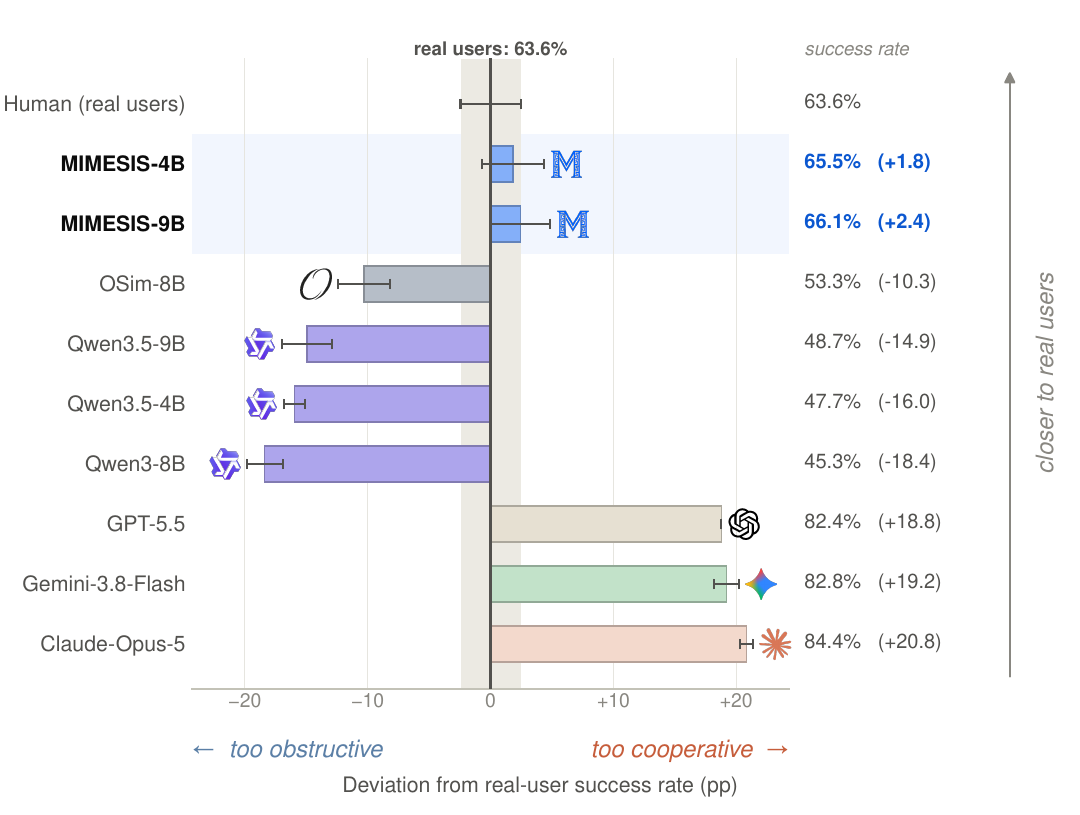}
  \caption{%
    \textbf{Off-the-shelf LLMs misrepresent agentic task difficulty.}
    Task success rate of a fixed \textsc{gpt-5.5} agent on $\tau$-bench when each simulator plays the user role.  Frontier API users make the task substantially
easier than interactions with real users, whereas pretrained models make it
harder. Both serve as poor proxies for the user interactions used to train or evaluate agents while purpose-built simulators more closely reproduce the human-user
success rate. %\deren{add sth like not ideal for agent training} %Purpose-built simulators close most of the gap; \textsc{MIMESIS} closes nearly all of it, landing within $1.8$ and $3.2$ points of the human rate. Reproducing real-user \emph{difficulty}, not merely fluent dialogue, is something a simulator has tobe trained for.% 
  }
  \vspace*{-2mm}
  \label{fig:tau-success}
\end{wrapfigure}

This move toward interactive evaluation introduces a practical challenge: in many tasks, the trajectory cannot be determined by the agent and environment alone, since users may  provide missing information, answer clarifications, or react to intermediate outcomes. While human users provide the most direct source of such interactions, collecting human trajectories is costly, difficult to reproduce, and hard to scale for evaluation and reinforcement learning. User simulators offer a practical alternative and have become common in interactive benchmarks and, increasingly, agent post-training \citep{yao2024taubench,qian2025userrl,suh2026utility}. A common approach is to prompt instruction-tuned language models to role-play users, but this creates a mismatch: assistant models are optimized to be helpful and explicit, whereas real users may be ambiguous, impatient, or uncooperative. Accordingly, assistant LMs tend to produce overly structured and cooperative interactions \citep{naous2025userlm}, while comparisons with real users reveal systematic differences in information disclosure, ambiguity, frustration, and interaction difficulty \citep{zhou2026sim2real}. Moreover, properties such as willingness to clarify, patience, and responses to agent errors can substantially alter measured agent performance \citep{shim2025noncollaborative,chopra2026ppol,zhu2026realusersim}. User simulators therefore affect not only dialogue style but also the
distribution of trajectories encountered by an agent during training and
evaluation. Figure~\ref{fig:tau-success} illustrates this mismatch on
$\tau$-bench. A fixed GPT-5.5 agent succeeds on 63.6\% of interactions
with real users, whereas the three frontier API simulators raise its success
rate to 82.4--84.4\% and pretrained Qwen3.5-4B/9B reduce it by 16.0 and
14.9 percentage points, respectively. By contrast, purpose-built simulators are
substantially better calibrated:  every trained simulator deviates less than every off-the-shelf model, with at most 10.3 points, versus at least 14.9. This result suggests that simulator
quality cannot be assessed from surface naturalness alone; preserving the
interaction difficulty induced by human users is also important when the
simulator determines the trajectories used for agent training or evaluation.

Research on user simulation and interactive-agent training has largely addressed
different sides of this problem. Purpose-built user models improve next-turn realism,
latent-state modeling, or behavioral simulation
\citep{naous2025userlm,wu2026humanlm,wang2026learning,sun2026ditto,zhou2026odyssim}.
In parallel, UserRL~\citep{qian2025userrl} studies how to optimize agents from multi-turn trajectories
generated by a simulator. The remaining question is whether a simulator can be both behaviorally realistic and useful for learning. We therefore
evaluate it at two levels: fidelity to human interactions and the robustness of agents
trained on its trajectories when the evaluation user changes.
%Recent work on the downstream utility of user simulators makes the second criterion explicit~\citep{suh2026utility}; the remaining challenge is to develop and evaluate both stages together.

\begin{figure*}[t]
    \centering
    \includegraphics[width=.95\columnwidth]{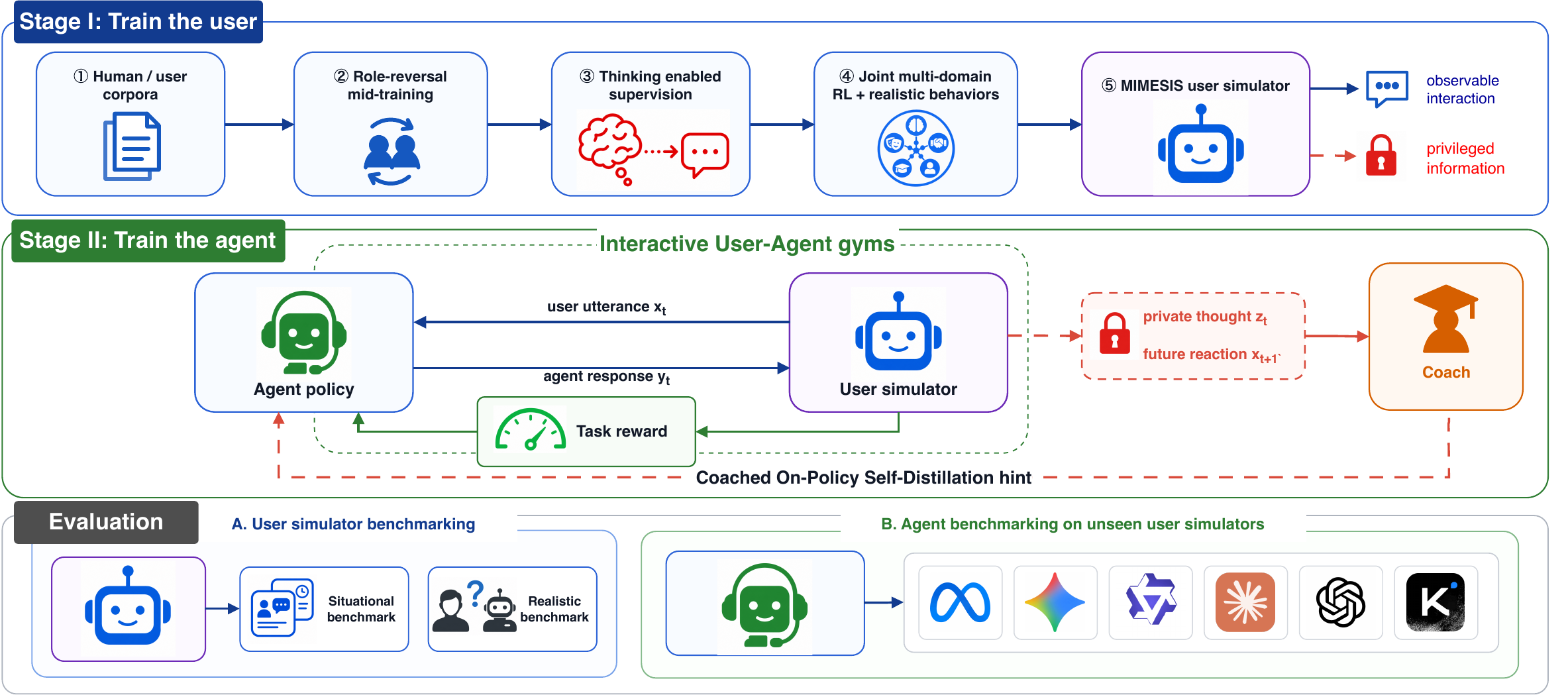}
\caption{\textbf{Overview of our proposed method.}
We couple user-simulator learning with simulator-driven agent post-training.
\textbf{Stage I} trains MIMESIS through user-side mid-training, ThoughtTrace reasoning supervision, and joint RL with a realistic-behavior objective.
\textbf{Stage II} freezes the learned simulator and uses it to generate multi-turn experience for agent RL.
The agent acts only on the observable conversation, while simulator-generated thoughts and reactions provide privileged, training-only feedback to the coach, which converts them into dense guidance for the agent.
%We evaluate the framework at both levels: the simulator's fidelity as a model of users and the resulting agent's ability to generalize across held-out user simulators.
}
    \label{fig:pipeline}
\end{figure*}

We propose a two-stage training approach that connects simulator learning to agent
post-training (Figure~\ref{fig:pipeline}).  In the first stage, we train 4B and 9B user models on human conversations and a broad collection of human-simulation tasks. Most existing released simulators are trained on human conversation data that provide only observable utterances and no explicit reasoning trajectories, making it difficult to model the latent user state underlying each response. To address this limitation, we enable an explicit reasoning mode by appending a supervised fine-tuning stage using ThoughtTrace~\citep{jin2026thoughttrace}, which pairs user messages with self-reported motivations and reactions to preceding assistant responses. We use these annotations as observable proxies for otherwise unobserved user state and supervise a private reasoning trace before the public utterance. We further derive from ThoughtTrace a taxonomy of 13 realistic user behaviors reflected in users' latent thoughts and conversational patterns, and instantiate these behaviors during joint multi-domain reinforcement learning to expose the simulator to a broader and more challenging distribution of interactions.

In the second stage, we freeze the learned user simulator and use it as an interactive environment for multi-turn agent reinforcement learning. After each sampled agent response, the simulator generates a private thought and the next public user utterance. We leverage this privileged feedback during training through \emph{Coached On-Policy Self-Distillation} (CSD). CSD constructs feedback-conditioned coaching signals for each sampled agent response and uses a self-teacher conditioned on this additional context to provide dense token-level guidance to the policy. Crucially, this privileged information is available only during optimization: both the rollout policy and the deployed agent condition solely on the observable conversation history. Our contributions are depicted as follows:

\begin{itemize}
    \item We identify a behavioral mismatch that limits off-the-shelf LLMs as human proxies for agent training and evaluation: their simulated interactions distort task difficulty, yielding agent success rates that diverge substantially from those observed with human users. We therefore propose a two-stage approach that first learns a realistic user simulator with explicit behavioral modeling and then uses the frozen simulator as an environment for interactive agent post-training.

    \item We introduce \method{}, a purpose-built user simulator with 4B and 9B parameters, combining user-side adaptation and explicit reasoning supervision with a new realistic-behavior training objective. We derive 13 recurring interaction patterns from real user conversations and train the simulator to express these behaviors appropriately. Our 9B model surpasses frontier models on SOUL-Index, RealUserSim, $\tau$-USI, and SimulatorArena. Ablations further establish the contribution of reasoning supervision and realistic behavior training to simulator quality.

    \item We demonstrate that training with \method{} improves agent generalization beyond the training simulator. Across eight environments, agents trained through multi-turn reinforcement learning with \method{} outperform those trained with off-the-shelf GPT-5.5 under all nine unseen evaluation user models. We further introduce \emph{Coached On-Policy Self-Distillation} (CSD), which uses simulator-generated thoughts and subsequent user utterances to construct retrospective coaching about user needs and interaction feedback. A teacher conditioned on this coaching provides dense, token-level supervision, yielding additional gains under all nine evaluation users while the deployed agent conditions only on public dialogue.
\end{itemize}
\section{Related Work}
\label{sec:related}
% \paragraph{Multi-turn agentic interaction.}
% Evaluation of language agents has increasingly shifted from isolated response or
% function-call accuracy toward competence over complete interaction trajectories.
% Early benchmarks such as AgentBoard and MINT emphasize multi-round interaction
% in partially observable environments and show that strong static performance does
% not necessarily translate to effective interactive behavior
% \citep{ma2024agentboard,wang2024mint}.
% Subsequent work makes this setting increasingly realistic:
% ToolSandbox, BFCL, and DialogTool evaluate persistent environment state,
% dependencies across tool calls, and long-horizon execution
% \citep{lu2025toolsandbox,patil2025bfcl,wang2025dialogtool},
% while $\tau$-bench and $\tau^2$-bench couple tool use with conversational
% coordination, with the latter allowing both the agent and user to act on a shared
% environment \citep{yao2024taubench,barres2025tau2}.
% Recent benchmarks further isolate the capabilities required by such trajectories,
% including planning and state tracking in LUMINA, adaptation to changing tool
% conditions in CostBench, and preference elicitation in multi-turn task completion
% \citep{rakhsha2026lumina,liu2026costbench,cheng2026travelbench}.
% In parallel, agentic post-training increasingly optimizes policies directly over
% interactive rollouts, using scalable multi-turn environments, finer-grained credit
% assignment, and online environment feedback
% \citep{du2026codegym,ding2026gtpo,sun2026karl}.

\textbf{Multi-turn agentic interaction.}
Evaluation of language agents has increasingly shifted from isolated response or
function-call accuracy toward competence over complete interaction trajectories.
Early benchmarks such as AgentBoard and MINT emphasize multi-round interaction
in partially observable environments and show that strong static performance does
not necessarily translate to effective interactive behavior
\citep{ma2024agentboard,wang2024mint}. ToolSandbox, BFCL, and DialogTool evaluate persistent state, tool-call dependencies, and long-horizon execution
\citep{lu2025toolsandbox,patil2025bfcl,wang2025dialogtool},
while $\tau$-bench and $\tau^2$-bench combine tool use with conversational coordination; the latter allows both agents and users to act on a shared environment
\citep{yao2024taubench,barres2025tau2}.
Recent benchmarks examine planning and state tracking in LUMINA, adaptation to changing tool conditions in CostBench, and preference elicitation during task completion
\citep{rakhsha2026lumina,liu2026costbench,cheng2026travelbench}.
In parallel, agentic post-training optimizes policies over interactive rollouts through scalable multi-turn environments, finer-grained credit assignment, and online environment feedback
\citep{du2026codegym,ding2026gtpo,sun2026karl}.

\textbf{Behavioral variation and difficult users.}
Several lines of work address the cooperative and homogeneous behavior of prompted LLM users.
Non-Collaborative User Simulators introduces four difficult interaction patterns while preserving the information required for task completion~\citep{shim2025noncollaborative}.
Persona Policies searches for human-like, task-preserving behavior policies and shows that training on the resulting trajectories improves robustness~\citep{chopra2026ppol}.
RealUserSim grounds behavioral profiles in authentic interactions and documents Directive Amplification, where hand-authored directives elicit exaggerated, backbone-dependent behavior~\citep{zhu2026realusersim}.
UserIDA learns explicit control over local interaction intent~\citep{wang2026userida}.
%These works make the evidentiary burden clear: a useful behavior control must be interventionally reliable, distinct from neutral behavior, calibrated to real interaction, and prevented from changing task feasibility.
%Our contribution lies in internalizing such controls in the learned simulator and using the resulting model for agent RL, not in the raw number of behavior labels.

\textbf{Simulator-driven agent training.}
$\tau$-bench established rule-grounded evaluation with LLM users~\citep{yao2024taubench}, while UserRL studies how multi-turn reward assignment, trajectory scoring, and simulator choice affect agent optimization~\citep{qian2025userrl}.
UserLM-R1 subsequently uses a learned strategic simulator for downstream task-agent RL~\citep{zhang2026userlmr1}.
Most closely, ~\citet{suh2026utility} hold the assistant training procedure fixed, vary the simulator, and validate the resulting policies with other simulators and 283 human participants.
%Our study follows the same downstream-utility principle but asks a different controlled question: whether a  simulator jointly trained for broad user behaviors can outperform a frontier general-purpose model as the sole training user when neither appears in the held-out evaluation panel.

% \paragraph{Cross-partner generalization.}
% Training with one interactive partner can overfit conventions specific to that partner, a problem studied in zero-shot coordination and multi-agent learning~\citep{hu2020otherplay}.
% Simulator Collapse shows an analogous failure for LLM user environments: single-simulator RL can concentrate the agent policy around strategies that exploit the simulator's modal behavior, even as held-out and human performance decline~\citep{yu2026collapse}.
% Our six-user panel is motivated by this failure mode.
% Unlike population co-training, however, our intervention changes the quality and behavioral support of one learned training simulator.
% We therefore report training-partner reward, held-out reward, and policy entropy separately and avoid claiming that a single endpoint proves resistance to collapse.

\textbf{Learning from textual feedback and privileged context.}
ThoughtTrace demonstrates that users' stated reasons and reactions contain information that is difficult to reconstruct from the public transcript and useful for downstream assistant training~\citep{jin2026thoughttrace}.
SDPO~\citep{hubotter2026sdpo} distills a feedback-conditioned self-teacher into a policy that acts without feedback, and on-policy context distillation studies the broader problem of transferring behavior from an augmented-context policy~\citep{ye2026opcd}.
~\citet{song2026rltf} and ~\citet{shi2026experiential} similarly internalize textual critiques or reflections.
Ditto applies related ideas to the simulator itself by optimizing feedback-conditioned refined rollouts jointly with the base policy~\citep{sun2026ditto}.
Our proposed CSD differs operationally by re-scoring the already sampled agent response rather than decoding a refinement.
% Its most distinctive information source is state generated inside the counterparty user after observing the agent action, although our current experiments do not yet show that this state improves over public-transcript coaching.

% \paragraph{Evaluating simulator validity.}
% Dialogue research has long argued that a simulator should be assessed both directly and through the policies it trains~\citep{shi2019usersim}.
% MirrorBench separates linguistic human-likeness from task success~\citep{hathidara2026mirrorbench}, SimulatorArena asks whether simulator-based assistant rankings agree with real-user preferences~\citep{dou2025simulatorarena}, and large-scale comparisons reveal systematic gaps between LLM users and people~\citep{zhou2026sim2real}.
% We adopt the same separation: SOUL-Index and next-turn discrimination measure properties of the simulator, while the held-out panel measures a property of the policy produced by training against it.

% \authorcomment{Re-check all concurrent-work descriptions and the closest-work matrix immediately before submission. In particular, verify whether any new version reports end-to-end simulator-to-agent RL, private-user-state distillation, or held-out-user transfer that changes the novelty boundary.}

\section{Problem Setup and Overview}
\label{sec:problem}

We study the problem of learning an interactive agent from multi-turn
experience with a simulated user. The user simulator is trained from human conversations and simulation-task feedback.
Once trained, it supplies the user turns in the conversations used to optimize the agent.

\textbf{Learning a user simulator.}
Given a task context $c$ and a conversation history $H$, a user simulator $u_\phi(x\mid c,H)$ defines a distribution over the next user utterance $x$.
Here, $c$ specifies the interaction setting, while $H$ contains the preceding conversation.
Human--agent dialogues provide examples of how users respond in these contexts, forming the basis for learning the user model~\citep{naous2025userlm,wu2026humanlm}.
We further train the simulator on a distribution of human-simulation tasks $\mathcal D_{\mathrm{sim}}$.
Each task supplies an input context and an evaluation rule for the generated response or interaction~\citep{sun2026ditto}.
Let $\rho$ denote this generated output, $P_\phi(\rho\mid c)$ its distribution when running the simulator in task $c$, and $R_{\mathrm{sim}}(c,\rho)$ the corresponding simulation reward.
The simulator's reinforcement-learning objective is
\begin{equation}
    J_{\mathrm{sim}}(\phi)
    =
    \mathbb E_{c\sim\mathcal D_{\mathrm{sim}}}
    \mathbb E_{\rho\sim P_\phi(\cdot\mid c)}
    \left[R_{\mathrm{sim}}(c,\rho)\right].
    \label{eq:problem-simulator-objective}
\end{equation}
For a single-turn task, $\rho$ is one response; for an interactive task, it is a rollout with the counterpart specified by that task.
The parameters $\phi$ are optimized to maximize this objective.

\textbf{Interaction with the learned simulator.}
We fix the trained simulator $u_{\hat\phi}$ and use it as the user in multi-turn interactions with an agent $\pi_\theta$.
Let $H_t$ denote the history available to the agent through the current user utterance $x_t$.
At each conversational turn, the agent generates a response and the simulator generates the next user utterance:
\begin{equation}
    y_t\sim\pi_\theta(\cdot\mid H_t),
    \qquad
    x_{t+1}\sim u_{\hat\phi}(\cdot\mid c,H_t,y_t).
    \label{eq:user-transition}
\end{equation}
Both utterances are added to the conversation history.
Tool calls and their outcomes, when present, are also included in the history.
The interaction continues until the task's stopping condition or turn limit is reached, yielding a trajectory $\tau$.
The simulator therefore determines how the conversation continues in response to the agent's actions~\citep{suh2026utility}.

\textbf{Learning the agent.}
Let $\mathcal D_{\mathrm{agent}}$ denote the distribution of agent-training tasks, and let $R_{\mathrm{agent}}(c,\tau)$ be the task reward assigned to a trajectory.
Following multi-turn reinforcement learning with simulated users~\citep{qian2025userrl}, we optimize
\begin{equation}
    J_{\mathrm{agent}}(\theta;\hat\phi)
    =
    \mathbb E_{c\sim\mathcal D_{\mathrm{agent}}}
    \mathbb E_{\tau\sim P_{\theta,\hat\phi}(\cdot\mid c)}
    \left[R_{\mathrm{agent}}(c,\tau)\right],
    \label{eq:agent-objective}
\end{equation}
where $P_{\theta,\hat\phi}$ is the trajectory distribution induced by the agent, the frozen simulator, and the task environment.
Agent training updates $\theta$ to maximize this expected reward while keeping $\hat\phi$ fixed.
The two objectives assess different aspects of the interaction: $R_{\mathrm{sim}}$ evaluates the simulated user's output, whereas $R_{\mathrm{agent}}$ evaluates the agent's task performance.

\section{MIMESIS: Learning a Realistic User Simulator}
\label{sec:simulator}

We train \method{} to model user behavior and provide realistic interactions for agent reinforcement learning.
Both the 4B and 9B models with Qwen3.5 \citep{qwen35blog} backbones follow the same procedure: user-side mid-training on human conversations, thought-augmented supervised fine-tuning, and joint multi-domain reinforcement learning that includes an explicit objective for realistic user behaviors.

\subsection{Learning User Responses and Reasoning}

We adapt the language model to the user role through supervised mid-training on human--assistant conversations.
Following user language modeling~\citep{naous2025userlm}, we use the preceding dialogue as context and the human user's next utterance as the prediction target.
Assistant turns remain part of the context, while the model learns to continue the conversation from the user's perspective.

\textbf{Challenges.}
Ordinary conversation logs record what users say but generally omit why they say it and how they interpret the assistant's response.
These unspoken reasons and reactions can contain information that is absent from the corresponding user messages~\citep{jin2026thoughttrace}.
Consequently, mid-training on user utterances alone provides no explicit target for the reasoning that precedes them.
When the target consists only of the public utterance, the model is trained to produce that utterance directly.
For a simulator intended to reason before responding, this creates a potential mismatch: response-only supervision may suppress explicit reasoning-trace generation, even though the missing traces reflect how the data were recorded.
There is also a semantic limitation: matching a user's wording does not necessarily capture the motivation behind the response.
~\citet{wu2026humanlm}  identify a related failure of response imitation, where models reproduce linguistic patterns while missing the user states expressed in the reference response.

\textbf{Thought-augmented supervision.}
To provide explicit supervision for simulator reasoning, we incorporate ThoughtTrace, which pairs human--assistant conversations with users' self-reported reasons for sending messages and reactions to assistant responses~\citep{jin2026thoughttrace}.
For annotated examples, we use these reports to supervise a private reasoning trace before the simulator’s public response, while retaining the observed user utterance as the response target.
\citet{wu2026humanlm}  obtain the state-alignment signal by evaluating generated states against observed user responses. ThoughtTrace additionally gives us access to users' own reports.
During interaction, the reasoning trace is kept separate from the public user utterance returned to the agent.
\subsection{Joint Reinforcement Learning with Realistic Behaviors}
\label{sec:joint-rl}

Starting from the mid-trained checkpoint, we optimize a single user simulator across the SOUL environments used by OdysSim~\citep{zhou2026odyssim}.
OdysSim trains a separate RL expert for each task and then distills selected expert trajectories into a unified model.
We train the shared simulator directly on all domains throughout RL, with rollouts from every domain updating the same parameter set.
This follows the joint multi-task training approach also used by Ditto~\citep{sun2026ditto}.
Each environment supplies its own interaction protocol and task-specific reward.

Let $\mathcal D$ denote the set of training environments, $p(d)$ denote their sampling distribution, and $P_\phi^d$ denote the rollout distribution induced by simulator $u_\phi$ in environment $d$.
We maximize
\begin{equation}
    J_{\mathrm{joint}}(\phi)
    =
    \mathbb E_{d\sim p(d)}
    \mathbb E_{\tau\sim P_\phi^d}
    \left[r_d(\tau)\right]
    \label{eq:sim-objective}
\end{equation}
using Group Relative Policy Optimization (GRPO)~\citep{shao2024deepseekmath}, where $r_d$ is the reward for environment $d$.
Alongside the SOUL environments, the mixture includes the realistic-behavior task described below as a separate domain. Its rollouts contribute to the same simulator updates.

\subsection{Learning Realistic User Behaviors}
\label{sec:behavior}

Real users may leave preferences unstated, revise requirements, or provide incomplete answers to clarification questions.
Such behaviors affect the information available to an agent and the decisions required to complete a task~\citep{shim2025noncollaborative}.
Here, we make the generation of those interactions an explicit simulator-training objective using a taxonomy of 13 realistic behaviors derived from recurring patterns in ThoughtTrace~\citep{jin2026thoughttrace}, including hidden evaluation criteria, incremental goalpost shifting, and clarification noncooperation. We explicitly train MIMESIS to reproduce such interaction patterns rather than relying on them to emerge from response imitation.
%Each training example specifies one behavior or a neutral condition; one third use the neutral condition to retain interactions without an imposed behavioral pattern.

We instantiate this task in ABCD customer-support scenarios~\citep{chen2021abcd}, which provide structured customer records and service policies.
The simulator interacts with a frozen support agent under the selected condition, grounding its behavior in the scenario facts. For example, withheld information must be available to the user, and an infeasible request must conflict with a stated policy.
We score each rollout for the expression of the target behavior, temporal placement, naturalness, and task consistency.
Temporal placement rewards context-appropriate expression of a behavior across the conversation.
The reward combines these four scores and penalizes responses that describe the requested behavior instead of expressing it through interaction.
Appendices~\ref{app:behavior-taxonomy} and~\ref{app:behavior-rollouts} provide the taxonomy, rollout protocol, and reward details.

\subsection{User-simulator evaluation}
\label{sec:exp-simulator}

\textbf{Evaluation setup.}
We evaluate \method{} at 4B and 9B parameters against frontier API models, released user simulators, and pretrained backbones.
SOUL-Index measures simulation capability across conversational interaction (CONV), social simulation (SS), cognition (COG), role play (ROLE), and evaluation/judgment (EVAL)~\citep{zhou2026odyssim}.
We assess trajectory-level behavioral fidelity with RealUserSim PT3~\citep{zhu2026realusersim}, behavioral and outcome alignment with $\tau$-USI~\citep{zhou2026sim2real}, and message similarity and Turing distance with SimulatorArena~\citep{dou2025simulatorarena}.
We report both aggregate and component scores to characterize these complementary aspects of simulation quality.
 Appendix~\ref{app:evaluation} defines the metrics and describes the evaluation settings along with additional experimental results.

\begin{table}[!ht]
\centering
\resizebox{.95\linewidth}{!}{%
\begin{tabular}{llcccccc}
\toprule
Type & Model & CONV & SS & COG & ROLE & EVAL & Overall \\
\midrule
Frontier & Claude-Opus-5  & \ms{57.4}{0.2} & \ms{70.4}{0.3} & \ms{82.0}{0.1} & \textbf{\bms{64.6}{0.1}} & \ms{69.1}{0.2} & \ms{64.9}{0.1} \\
 & GPT-5.5 & \ms{56.7}{0.6} & \textbf{\bms{70.9}{0.4}} & \ms{86.1}{0.5} & \ms{62.3}{0.2} & \ms{68.3}{0.3} & \ms{64.1}{0.1} \\
 & Gemini-3.8-Flash & \ms{56.0}{0.5} & \ms{66.9}{0.4} & \ms{85.6}{0.4} & \ms{57.3}{0.0} & \ms{68.7}{0.2} & \ms{62.5}{0.1} \\
\midrule
Released simulator & Osim-8B & \ms{60.0}{0.1} & \ms{39.1}{0.1} & \ms{74.5}{0.8} & \ms{54.9}{0.2} & \ms{68.7}{0.1} & \ms{58.8}{0.1} \\
 & Osim-4B & \ms{58.5}{0.2} & \ms{35.0}{0.4} & \ms{72.9}{1.0} & \ms{51.4}{0.3} & \ms{66.5}{0.4} & \ms{56.0}{0.3} \\
 & Ditto-8B & \ms{61.7}{0.2} & \ms{46.4}{0.3} & \ms{78.2}{0.3} & \ms{44.1}{0.2} & \ms{63.9}{0.5} & \ms{53.9}{0.3} \\
 & HumanLM-8B & \ms{43.8}{0.1} & \ms{59.8}{0.4} & \ms{68.8}{1.2} & \ms{41.6}{0.3} & \ms{62.7}{0.5} & \ms{49.6}{0.4} \\
 & Sotopia-7B & \ms{41.1}{0.8} & \ms{59.7}{0.4} & \ms{29.8}{0.8} & \ms{33.5}{0.1} & \ms{55.1}{1.0} & \ms{38.2}{0.6} \\
\midrule
Pretrained & Qwen3.5-9B & \ms{48.9}{0.4} & \ms{54.2}{0.6} & \ms{64.3}{3.8} & \ms{44.8}{0.3} & \ms{64.7}{0.6} & \ms{50.5}{1.1} \\
 & Qwen3-8B & \ms{44.8}{0.3} & \ms{58.7}{0.5} & \ms{68.9}{0.4} & \ms{39.7}{0.4} & \ms{64.1}{0.7} & \ms{49.7}{0.2} \\
 & Qwen3.5-4B & \ms{45.9}{0.4} & \ms{49.7}{2.3} & \ms{55.2}{3.2} & \ms{38.9}{0.2} & \ms{60.6}{0.6} & \ms{45.7}{0.5} \\
\midrule
Ours & \textbf{MIMESIS-9B} & \textbf{\bms{75.1}{0.1}} & \ms{66.4}{0.2} & \textbf{\bms{86.2}{0.5}} & \ms{61.1}{0.2} & \textbf{\bms{69.7}{0.6}} & \textbf{\bms{65.7}{0.2}} \\
 & \textbf{MIMESIS-4B} & \ms{74.4}{0.3} & \ms{65.2}{0.2} & \ms{79.8}{0.4} & \ms{58.8}{0.1} & \ms{68.6}{0.1} & \ms{63.7}{0.0} \\
\bottomrule
\end{tabular}
}
% \vspace*{-2mm}
\caption{\textbf{Simulation capability on the five SOUL-Index axes}~\citep{zhou2026odyssim}. Entries are means $\pm$
standard errors over three independently seeded evaluation runs.
Higher is better. \textbf{Bold }marks the highest model score in each column. %Aggregation and replication details are recorded in Appendix~\ref{app:evaluation}.
}
% \vspace*{-2mm}
\label{tab:main}
\end{table}

\textbf{Broad simulation capability.}
\label{sec:results-simulator}
\method{}-9B achieves the highest overall SOUL-Index of 65.7, surpassing Claude-Opus-5  (64.9), GPT-5.5 (64.1), and the strongest released simulator, Osim-8B (58.8).
\method{}-4B scores 63.7, also exceeding all released simulators.
Relative to their pretrained backbones, the 9B and 4B models improve by 15.2 and 18.0 points, respectively.
The largest advantage is in conversational simulation: \method{}-9B and \method{}-4B score 75.1 and 74.4, exceeding the strongest baseline on this axis, Ditto-8B (61.7), by 13.4 and 12.7 points.
\method{}-9B also has the highest reported means on COG and EVAL.
GPT-5.5 and Claude-Opus-5  retain the highest scores on SS and ROLE, respectively.
These results demonstrate their effectiveness in conversational simulation alongside competitive performance across the other capabilities.

\begin{figure*}[!ht]
\centering
% \vspace*{-2mm}
\includegraphics[width=\textwidth]{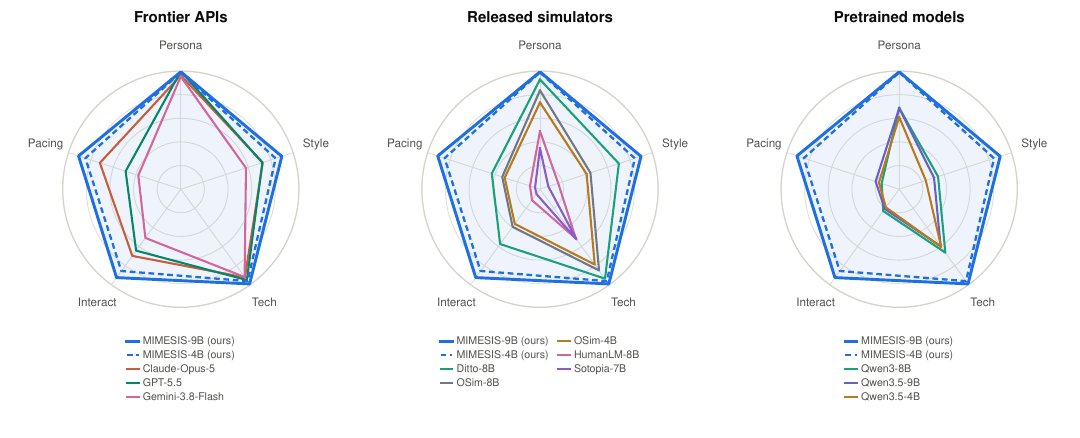}
% \vspace*{-2mm}
\caption{\textbf{Behavioral fidelity on RealUserSim PT3}~\citep{zhu2026realusersim}. Panels compare \method{} with frontier APIs, released simulators, and pretrained models. Each axis reports agreement
with human reference trajectories in persona, linguistic style,
technical competency, interaction and information flow, or pacing. Larger values indicate greater agreement with the human reference. %Exact scores are reported in Table~\ref{tab:realusersim-pt3}.%Table~\ref{tab:realusersim-pt3} gives the exact scores.
}
\label{fig:realusersim-radar}
\end{figure*}

\textbf{Trajectory-level behavioral fidelity.}
\label{sec:results-human}
On RealUserSim PT3, \method{}-9B achieves a Fidelity Index of 94.0, exceeding the strongest baseline, Claude-Opus-5  (80.6), by 13.4 points.
\method{}-4B scores 89.7, also outperforming both frontier models and released simulators (Figure~\ref{fig:realusersim-radar}).
Its gains over Claude-Opus-5  are particularly pronounced in interaction and information flow (91.3 versus 69.5) and pacing (91.2 versus 72.5).
By comparison, persona scores are close to ceiling for several models, including GPT-5.5 (99.3) and \method{}-9B (99.4). Because persona agreement is already near ceiling, the improvement primarily reflects how information is exchanged and paced over multiple turns rather than better recovery of static persona attributes.
%The strongest improvements therefore concern how simulated users exchange information and develop a conversation over time.

\section{Agent Post-Training with a Learned User Simulator}
\label{sec:agent}

We freeze the trained user simulator $u_{\hat\phi}$ and optimize the agent $\pi_\theta$ through repeated interaction with it.
At turn $t$, the agent generates a response $y_t$ from the public conversation history $H_t$, after which the simulator produces the next user turn.
The resulting conversations provide task rewards for multi-turn reinforcement learning.
We additionally use the simulator's thoughts and reactions to construct coaching notes for individual agent responses.
Coached On-Policy Self-Distillation (CSD) uses these notes to supervise the agent during training, while its rollout policy continues to condition only on the public history.

\subsection{Multi-Turn Reinforcement Learning}

For each training task, we sample a group of complete conversations using the rollout policy $\pi_{\theta_{\mathrm{old}}}$ and the frozen simulator.
We optimize the agent with Group Relative Policy Optimization (GRPO; \citealp{shao2024deepseekmath}), following its extension to multi-turn user interaction in UserRL~\citep{qian2025userrl}.
Task rewards determine group-normalized advantages for the sampled agent responses.
Let $\widehat A_{i,t}$ denote the advantage assigned to agent turn $t$ in conversation $i$.
For token $k$ of that response, the importance ratio is
\begin{equation}
    \rho_{i,t,k}(\theta)
    =
    \frac{
        \pi_\theta(y_{i,t,k}\mid H_{i,t},y_{i,t,<k})
    }{
        \pi_{\theta_{\mathrm{old}}}(y_{i,t,k}\mid H_{i,t},y_{i,t,<k})
    }.
    \label{eq:agent-ratio}
\end{equation}
The clipped policy loss is
\begin{equation}
    \mathcal L_{\mathrm{GRPO}}
    =
    -\mathbb E_{i,t,k}
    \left[
        \min\!\left(
            \rho_{i,t,k}\widehat A_{i,t},
            \operatorname{clip}(\rho_{i,t,k},1-\epsilon,1+\epsilon)
            \widehat A_{i,t}
        \right)
    \right],
    \label{eq:agent-grpo}
\end{equation}
where $\epsilon$ is the clipping parameter.
The expectation is over collected conversations and agent-generated tokens.
User utterances and tool outputs enter the conditioning history but are excluded from the policy loss.
The simulator remains fixed throughout agent optimization.

% \agenttodo{Specify the trajectory scorer, the allocation of reward to turns, and the loss normalization from the training implementation. If one trajectory advantage is broadcast to all agent tokens, set $\widehat A_{i,t}=\widehat A_i$ and give its normalization formula. Document any additional regularization.}

\subsection{Coached On-Policy Self-Distillation}
\label{sec:csd}

Scalar task rewards do not describe how a user interpreted an agent response or what the agent could have done differently.
The simulator provides additional feedback through its generated thoughts and reactions.
CSD uses this feedback to construct a teacher context for each sampled response.
The underlying approach follows feedback-conditioned self-distillation in SDPO~\citep{hubotter2026sdpo} and context-conditioned teaching in On-Policy Context Distillation~\citep{ye2026opcd}.
Here, the additional context is a coaching note derived from the simulated user's response to the agent.

\textbf{Constructing coaching notes.}
After an interaction, a coaching model $G$ receives the public history $H_t$, the sampled agent response $y_t$, and the simulator's associated thought $z_t$ and reaction $x_{t+1}$.
It produces a short note describing a possible improvement to that response:
\begin{equation}
    h_t = G(H_t,y_t,z_t,x_{t+1}).
    \label{eq:coach}
\end{equation}
The note is constructed retrospectively: it is unavailable to the agent when $y_t$ is generated.
It instead provides context for evaluating the response during the subsequent parameter update.
The simulator's thoughts and reactions are model-generated feedback, rather than observations of a human user's internal state.

\textbf{Learning from the coached context.}
We score the same sampled response under two contexts.
The student receives the public history $H_t$, while a teacher copy of the agent additionally receives $h_t$.
Let $\bar\theta$ denote a stop-gradient copy of the current policy parameters \(\theta\), used to evaluate the feedback-conditioned teacher distribution.
At each sampled token, we compute the difference between the teacher’s coached and the student’s uncoached log-probabilities:
\begin{align}
    \Delta_{t,k}
    &=
    \log \pi_{\bar\theta}(y_{t,k}\mid H_t,h_t,y_{t,<k})
    -
    \log \pi_\theta(y_{t,k}\mid H_t,y_{t,<k}),
    \label{eq:agent-coaching-gap}\\
    g_{t,k}
    &=
    \sigma\!\left(\beta\,\operatorname{sg}(\Delta_{t,k})\right),
    \label{eq:gate}
\end{align}
where $\sigma$ is the sigmoid function, $\beta>0$ controls the sensitivity of the weight to the log-probability difference, and $\operatorname{sg}$ denotes stop-gradient.
The sigmoid bounds the weights between zero and one; stop-gradient holds them fixed during the student update.
For a batch of coached responses, the auxiliary loss is
\begin{equation}
    \mathcal L_{\mathrm{CSD}}
    =
    -\mathbb E_{(i,t,k)\sim\mathcal C}
    \left[
        g_{i,t,k}\,
        \log\pi_\theta(y_{i,t,k}\mid H_{i,t},y_{i,t,<k})
    \right],
    \label{eq:csd}
\end{equation}
where $\mathcal C$ denotes the empirical distribution over agent-token positions in responses with coaching notes.
This objective gives greater weight to sampled tokens that the coached teacher assigns higher probability relative to the student.
Tokens with a negative log-probability difference receive weaker reinforcement; the CSD term does not directly penalize them.
Thus, the update is a weighted likelihood objective on sampled tokens, distinct from the next-token distribution matching used in SDPO and On-Policy Context Distillation~\citep{hubotter2026sdpo,ye2026opcd}.
It reuses the agent's sampled response without generating a replacement response.

We combine this auxiliary loss with the task-reward objective:
\begin{equation}
    \mathcal L_{\mathrm{agent}}
    =
    \mathcal L_{\mathrm{GRPO}}
    +\alpha\mathcal L_{\mathrm{CSD}},
    \label{eq:agent-combined-loss}
\end{equation}
where $\alpha\geq0$ controls the contribution of coaching.
GRPO supplies the reward-based policy update, while CSD supplies token weights derived from the coaching context.
Gradients from the auxiliary loss update only the student; they do not propagate through the teacher scores, coaching model, or user simulator.
At deployment, the agent generates responses from $\pi_\theta(\cdot\mid H_t)$ without access to coaching notes or private simulator outputs.

%\todo{Verify the retained sigmoid-weighted objective against the code, including the loss mask and averaging rule. Specify the teacher synchronization rule, the coaching model and prompt, the timing and contents of $z_t$ and $x_{t+1}$, and the values of $\alpha$ and $\beta$. Establish the contribution of thought and reaction through the pending ablations.}

\subsection{Agent evaluation}

We evaluate whether agents trained with \method{} generalize to user models not encountered during training.
Following the interactive task setting of UserRL~\citep{qian2025userrl}, we compare three training conditions: multi-turn GRPO with GPT-5.5 (UserRL), GRPO with \method{}-9B, and CSD with \method{}-9B.
Table~\ref{tab:transfer} reports agent performance across eight Gym environments, including three held out from training, under nine evaluation user models spanning frontier models and released simulators.
All nine evaluation users differ from both training simulators.
% Add \usepackage{caption} if \captionof is not already available.

\begin{figure*}[!ht]
    \centering
\begin{minipage}[!ht]{0.495\textwidth}
        \centering
        \small
        \setlength{\tabcolsep}{3.5pt}
        \resizebox{\linewidth}{!}{%
        \begin{tabular}{lrrr}
        \toprule
        \textbf{Evaluation user}
        & \shortstack{\textbf{GRPO} w. \\\textbf{GPT-5.5}}
        & \shortstack{\textbf{GRPO} w.\\\textbf{MIMESIS-9B}}
        & \shortstack{\textbf{CSD} w.\\\textbf{MIMESIS-9B}} \\
        \midrule
        GPT-5.6          & 26.85 & 31.10 & \textbf{32.39} \\
        Claude-Opus-5   & 25.75 & 27.21 & \textbf{29.78} \\
        Claude-Sonnet-5 & 26.53 & 29.73 & \textbf{30.97} \\
        Gemini-3.8-Flash & 26.33 & 30.88 & \textbf{31.09} \\
        Kimi-K3          & 24.51 & 28.21 & \textbf{29.92} \\
        Ditto-8B         & 27.59 & 29.76 & \textbf{32.08} \\
        Osim-8B          & 25.68 & 29.27 & \textbf{29.72} \\
        HumanLM-8B       & 26.96 & 30.67 & \textbf{32.53} \\
        Sotopia-7B       & 24.72 & 29.00 & \textbf{31.35} \\
        \midrule
        \textbf{Mean}    & 26.10 & 29.54 & \textbf{31.09} \\
        \bottomrule
        \end{tabular}%
        }
        \captionof{table}{%
        \textbf{Agent generalization across user models.}
         Mean task score across eight environments when evaluating agents trained with GPT-5.5 (UserRL), MIMESIS-9B (UserRL+), or MIMESIS-9B with CSD. None of the nine evaluation user models is used during agent training, the final row averages across evaluation users. %Higher is better.
        %Appendix~\ref{app:agent-evaluation} provides environment-level results and evaluation conventions.
        }
        \label{tab:transfer}
    \end{minipage}
    \hfill
    \begin{minipage}[!ht]{0.495\textwidth}
        \centering
        \vspace*{-3mm}
        \includegraphics[width=\linewidth]{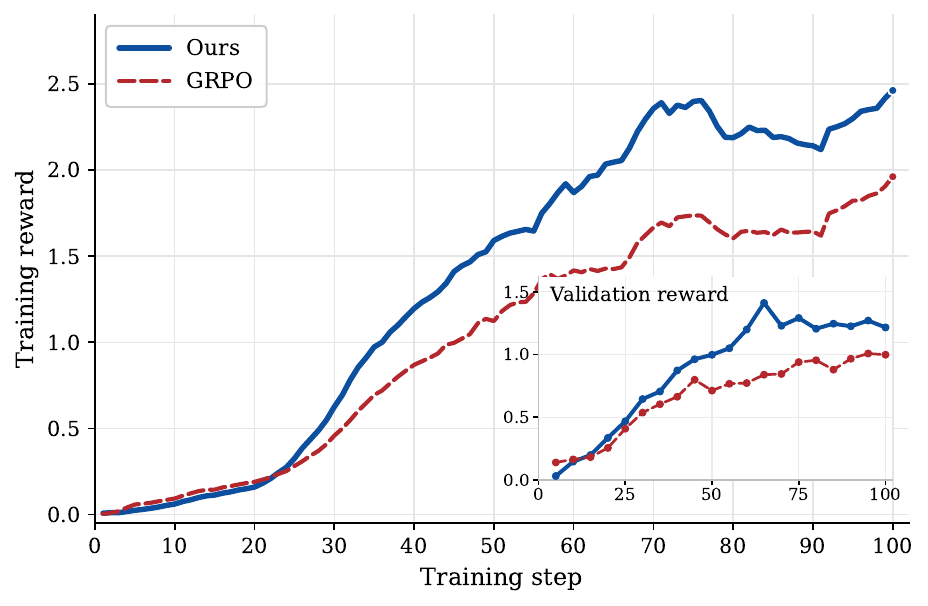}
        \captionof{figure}{
        {\textbf{Agent optimization with CSD.}
        Training and validation rewards over 100 optimization steps using the same MIMESIS-9B training simulator. CSD and GRPO therefore differ only in the coaching objective: CSD attains higher late-stage training and validation reward. % after the initial checkpoints.
        %The training panel shows raw and smoothed trajectories; validation  points correspond to checkpoint evaluations.
        }
        }
        \label{fig:sdpo_vs_grpo}
    \end{minipage}
\end{figure*}

With the GRPO objective fixed, replacing GPT-5.5 with \method{}-9B improves overall performance under every evaluation user, raising the mean score from 26.10 to 29.54.
Gains range from 1.46 points under Claude-Opus-5  to 4.55 points under Gemini-3.8-Flash.
Their consistency across frontier models and released simulators indicates that the benefits extend beyond interactions with the training simulator, supporting generalization to unfamiliar simulated users. Figure~\ref{fig:sdpo_vs_grpo} compares CSD with GRPO over 100 optimization steps using the same frozen \method{}-9B simulator. Following the earliest checkpoints, the CSD curve remains above GRPO on validation reward throughout the rest of the plotted interval. CSD further improves performance under all nine evaluation users, increasing the mean score from 29.54 to 31.09.
Because both conditions use MIMESIS-9B as the training simulator, this comparison isolates the effect of adding the CSD coaching objective while holding simulator choice fixed. 
%The training curves also separate, with CSD maintaining higher reward later in optimization.
%The agreement between the two panels indicates that the improvement is visible on the validation metric as well as the training metric.
%The figure does not display uncertainty across independent training runs, so the curves characterize these training trajectories rather than the variability of the method. Evaluation and run settings are specified in Appendix~\ref{app:agent-evaluation}.

\section{Conclusion}
\label{sec:conclusion}

In this paper, we study how user simulators can better serve as human proxies for agent training and evaluation. Our findings show that off-the-shelf LLMs can substantially distort interaction difficulty, motivating explicit modeling of user behavior beyond linguistic style. We introduced \method{}, a purpose-built user simulator that combines training on human conversations, explicit reasoning supervision, and reinforcement learning with 13 realistic behavioral patterns. Rather than relying on these behaviors to emerge from next-utterance imitation, MIMESIS is explicitly trained to express them at contextually appropriate points in grounded multi-turn interactions. Its strong performance across complementary simulation benchmarks supports the value of this training approach.  Crucially, these improvements translate into downstream learning utility. Across eight environments, agents trained with frozen \method{}-9B outperform those trained with GPT-5.5 under all nine unseen evaluation user models, demonstrating generalization beyond the training simulator. Our proposed CSD provides further gains by converting simulator-generated thoughts and subsequent user utterances into retrospective coaching about user needs and interaction feedback. This coaching supplies dense, token-level supervision during training, while the deployed agent relies only on public dialogue. Together, our results suggest that user simulators should be evaluated along two axes: fidelity to human interaction and transfer of policies trained against them. MIMESIS improves both simulator-level benchmarks and transfer across held-out simulator models.

\clearpage
\bibliographystyle{assets/plainnat}
\bibliography{paper}

% \clearpage
\beginappendix

% \clearpage
\section{Simulator Training Details}
\label{app:simulator-training}
\label{appendix:setup}
This section specifies the MIMESIS training pipeline used in Stage I of the main paper. We first describe user-side mid-training and ThoughtTrace supervision, then the construction of the realistic-behavior taxonomy, and finally the joint RL mixture and behavior-specific reward. Agent-training settings are reported separately in Appendix~\ref{app:agent-evaluation}.

\subsection{Data and supervised adaptation}
\label{app:sim-data}
We first adapt the Qwen3.5 backbones~\citep{qwen35blog} to the user role through mid-training on human--assistant conversations, predicting the next human utterance from the preceding dialogue.
The training mixture contains 21.2M examples from 62 corpora, from which our mid-training processes 9.51B tokens.
\clearpage

\begin{table}[t]
\centering
\small
\renewcommand{\arraystretch}{1.15}
\begin{tabularx}{\textwidth}{
    @{}
    >{\raggedright\arraybackslash}p{0.17\textwidth}
    >{\raggedright\arraybackslash}p{0.12\textwidth}
    >{\raggedright\arraybackslash}X
    @{}
}
\toprule
\textbf{Data source or environment}
& \textbf{Training stage}
& \textbf{Training signal} \\
\midrule
OdysSim \quad\quad\quad\quad\quad\quad\citep{zhou2026odyssim}
& Supervised
fine-tuning
& Prediction of the next human utterance conditioned on the preceding dialogue. \\

ThoughtTrace\quad\quad\quad\quad~\citep{jin2026thoughttrace}
& Supervised fine-tuning
& Supervision of a reasoning trace followed by the observed user utterance, using users' self-reported motivations and interpretations of assistant responses. \\

SOUL environments\quad ~\citep{zhou2026odyssim}
& Joint RL
& Task-specific rewards for simulator responses or multi-turn interactions. \\

ABCD scenarios\quad~\citep{chen2021abcd}
& Joint RL
& Rewards for behavior exhibition, temporal placement, naturalness, and consistency with scenario facts and policies. \\
\bottomrule
\end{tabularx}
\caption{\textbf{Data sources and training signals for MIMESIS.}
User-side mid-training and ThoughtTrace fine-tuning precede joint RL.
The SOUL environments and the ABCD task update the same simulator
parameters, with ABCD incorporating 13 realistic behavior categories
and a separate neutral condition.}
\label{tab:data}
\end{table}

\begin{wraptable}{r}{0.6\linewidth}
\centering
% \small
% \vspace*{-5mm}
% \setlength{\tabcolsep}{5pt}
\caption{\textbf{Supervised training configurations for MIMESIS-4B and MIMESIS-9B.}
User-side mid-training predicts human utterances with explicit thinking
disabled. Subsequent ThoughtTrace fine-tuning enables the native thinking
template and supervises a thought before the public response using users’ self-reported motivations and interpretations. %Both stages update all model parameters.
}
\label{tab:sft-hparams}
\resizebox{\linewidth}{!}{%
\begin{tabular}{llrr}
\toprule
& Hyperparameter & Mid-training & ThoughtTrace SFT \\
\midrule
% \multicolumn{4}{l}{\textit{Optimization}} \\
& Learning rate             & $3\times10^{-5}$ & $1\times10^{-7}$ \\
& Schedule                  & Cosine annealing & Cosine annealing \\
& Warm-up                   & 0 steps & 10 steps \\
& Optimizer                 & AdamW & AdamW \\
& $(\beta_1, \beta_2)$      & $(0.9,\ 0.999)$ & $(0.9,\ 0.999)$ \\
& Weight decay              & 0.01 & 0.0 \\
& Gradient clipping         & 1.0 & 1.0 \\
& Epochs                    & $\approx$0.97 (5{,}000 steps) & 3 \\
\midrule
% \multicolumn{4}{l}{\textit{Batching}} \\
& Sequences per step        & 4{,}096 & 32 \\
& Micro-batch per GPU       & 8 & 1 \\
& Gradient accumulation     & 16 & 4 \\
& Tokens per GPU (cap)      & 49{,}152 & -- \\
\midrule
% \multicolumn{4}{l}{\textit{Sequence handling}} \\
& Max sequence length       & 24{,}576 & 32{,}768 \\
& Max prompt / response     & 16{,}384 / 8{,}192 & -- \\
\midrule
% \multicolumn{4}{l}{\textit{Reasoning}} \\
& Thinking during training  & disabled & enabled \\
& Chat template             & ChatML & Qwen3.5 thinking \\
\midrule
% \multicolumn{4}{l}{\textit{Systems}} \\
& Trainer                   & verl SPMD & TRL \textsc{SFTTrainer} \\
& Sharding                  & FSDP & DeepSpeed ZeRO-3 \\
& Precision                 & bfloat16 & bfloat16 \\
& Gradient checkpointing    & enabled & enabled \\
& Hardware                  & $4\times8$ A100 & $1\times8$ A100 \\
& Wall-clock                & $\approx$4.8 days & 45 min \\
\midrule
% \multicolumn{4}{l}{\textit{Data}} \\
% & Source corpora            & 62 (165 shards) & 1 (ThoughtTrace) \\
& Training examples         & 21.2M & 2{,}155 \\
& Tokens seen               & 9.51B & 15.8M \\
\bottomrule
\end{tabular}}
\vspace*{-10mm}
\end{wraptable}
We then fine-tune on 2,155 ThoughtTrace conversations with turn-level human annotations~\citep{jin2026thoughttrace}.
Users' self-reported reasons for their messages and interpretations of preceding assistant responses supervise an explicit reasoning trace before the observed user utterance.
During interaction, the simulator generates both a thought and a public utterance, with only the utterance shown to the agent.
Table~\ref{tab:data} summarizes the data sources and supervision while Table~\ref{tab:sft-hparams} reports the training configurations.
% \clearpage
\subsection{Realistic user behaviors}
\label{app:behavior-taxonomy}
We derive a taxonomy of 13 realistic user behaviors from ThoughtTrace~\citep{jin2026thoughttrace}, with
 Table~\ref{tab:taxonomy} providing the definition for each category.
These categories describe interaction patterns without presuming that users deliberately impede task completion.
For example, an additional constraint may reflect revised requirements, while a previously unstated preference may become apparent only in a later conversational turn. We construct the taxonomy in two stages: behavior annotation
followed by label consolidation. Both stages use GPT-5.6~\citep{openai2026gpt56}
through its API with default decoding parameters.

\begin{table}[!htbp]
\centering
\small
\resizebox{\linewidth}{!}{%
{\begin{tabularx}{\textwidth}{@{}p{0.02\textwidth}p{0.25\textwidth}X@{}}
\toprule
ID & Behavior & Operational description \\
\midrule
B1 & Hidden evaluation criteria & Judges an answer against a preference or standard that was not stated in the request. \\
B2 & Incremental goalpost shifting & Introduces or tightens requirements after the assistant has already made progress. \\
B3 & Clarification noncooperation & Declines to answer, or only partly answers, a clarification request. \\
B4 & Infeasibility persistence & Continues to request an option after its unavailability or incompatibility with policy has been explained. \\
B5 & Underspecified-then-blames & Omits relevant information and subsequently faults the assistant for not inferring it. \\
B6 & Abrupt intent switching & Redirects the conversation to another objective before resolving the preceding request. \\
B7 & Artificial constraint stacking & Adds restrictions that remove options on which the assistant's proposed solution relies. \\
B8 & Fabricated or false premise & Presents an unsupported or contradictory fact, or a claim about prior dialogue, as established context. \\
B9 & Impatient low-signal pressure & Expresses urgency or dissatisfaction without supplying a sufficiently specific correction. \\
B10 & Unresolved thread abandonment & Disengages from a conversational thread before resolving its stated objective. \\
B11 & Self-contradiction & States requirements or facts that conflict with an earlier statement without acknowledging the change. \\
B12 & Request overloading & Combines numerous deliverables, constraints, or checks within a single request. \\
B13 & Belittling or entitled pressure & Uses disparaging or entitled language to pressure the assistant. \\

N & Neutral & Follows the scenario without an imposed behavioral pattern. \\
\bottomrule
\end{tabularx}}}
% Table 6 | tab:taxonomy | section/appendix/A_simulator_training.tex
\caption{\textbf{Taxonomy of 13 realistic user behaviors.}
Categories are derived from recurring patterns in ThoughtTrace and defined
operationally for the ABCD behavior task. A separate neutral condition
specifies interaction without an imposed behavioral pattern. }
\label{tab:taxonomy}
\end{table}
\FloatBarrier

\paragraph{Stage 1: Behavior annotation.}
We serialize each conversation as a turn-indexed transcript containing both
user and assistant utterances. ThoughtTrace reasons and reactions, together
with their labels, are placed beneath the messages to which they refer.
Assistant turns are retained because several behaviors are defined relative
to prior interaction, such as failing to answer a clarification, contradicting
an earlier statement, or introducing requirements after the assistant has
made progress. For each conversation, the annotator identifies non-cooperative user behavior
using a seed codebook of 16 categories with one-sentence definitions
(Table~\ref{tab:seed-codebook}). When no category fits, it may assign an open
code, \textit{other}, with a free-text label. Each annotation records the
category, a short label, relevant turn indices, supporting user and thought
quotations, a severity level (\textit{low}, \textit{medium}, or \textit{high}),
and a brief rationale; at least one quotation must be present. Thought-only
annotations are allowed when a behavior is not expressed directly in the
public utterance, such as an unstated evaluation criterion. The prompt also
provides guidance for interpreting ThoughtTrace reason and reaction labels and
instructs the annotator to return an empty list when no target behavior is
present. We provide one worked example of the expected JSON format and issue
one API request per conversation. All 2,155 conversations produced valid
annotations.

\begin{table}[!ht]
\centering
\small
\caption{\textbf{Seed codebook used for Stage~1 behavior annotation.}
The GPT-5.6 annotator assigns each detected behavior to one of the predefined categories or, when none is appropriate, to an open \textit{other} category with a free-text label.}
\label{tab:seed-codebook}
\resizebox{\linewidth}{!}{\begin{tabular}{@{}p{0.18\linewidth}p{0.82\linewidth}@{}}
\toprule
Category & Definition \\
\midrule
Intent switching & Abruptly changes the goal or task mid-conversation, abandoning the prior one without resolution. \\
Non-collaboration & Withholds information the assistant needs, or refuses to answer the assistant's clarifying questions. \\
Underspecification, then blame & Gives a vague or incomplete request, then faults the assistant for not reading their mind. \\
Contradiction & States something that contradicts what they said earlier in the conversation. \\
Moving goalposts & Adds new requirements or expands scope after the assistant met the original request. \\
False premise & Asserts an incorrect fact or assumption as if it were true and builds on it. \\
Out-of-scope request & Asks for a capability the assistant cannot provide, and disbelieves or insists after being told it is not possible. \\
Topic drift & Wanders off-topic or digresses away from the stated task. \\
Overloading & Packs many distinct requests into one turn, or adds requests faster than they can be answered. \\
Impatience & Expresses curt dissatisfaction with little signal (e.g., ``no'', ``wrong'', ``still bad'') or demands faster results. \\
Hostility & Uses a rude, aggressive, insulting, or openly antagonistic tone. \\
Passive aggression & Uses sarcasm, backhanded remarks, or veiled criticism instead of direct feedback. \\
Testing the assistant & Deliberately tries to trip up, trap, or catch out the assistant. \\
Entitled demands & Uses demanding, entitled framing (e.g., ``just do it'', ``you must'') with no give-and-take. \\
Abrupt abandonment & Drops the thread or disengages without closure once partly satisfied. \\
Unvoiced criterion & Keeps a decisive expectation, constraint, or verdict entirely in private thought; the spoken turn does not convey it. \\
\bottomrule
\end{tabular}}
\end{table}

\paragraph{Stage 2: Taxonomy consolidation.}
The 862 Stage~1 annotations contain 853 distinct category--label pairs, making
direct aggregation by label impractical. We therefore group annotations by
category--label pair and attach their supporting quotations, using the public
utterance when available and otherwise the self-reported thought. The frequency-sorted groups are passed to the same model in a single
consolidation request. The model is instructed to merge labels describing the
same behavior, preserve meaningful distinctions, and treat open codes as
candidate new categories. For each consolidated category, it returns a name,
an operational definition, the source codes it subsumes, representative
quotations, and an instruction for how a simulated user should enact the
behavior. This procedure yields the 13 categories in
Table~\ref{tab:taxonomy}. Every annotated seed category and open-coded label is
assigned to exactly one consolidated category. Frequencies in
Figure~\ref{fig:behaviors} are then computed directly from the Stage~1
annotations under this mapping, rather than estimated by the model. The final
taxonomy largely preserves the seed codebook: hostility, passive aggression,
and entitled demands are merged into \textit{belittling or entitled pressure};
the four open-coded annotations (0.5\%) are absorbed into existing categories;
\textit{topic drift} receives no annotations; and no seed category is split.

\begin{figure}[!ht]
    \centering
\includegraphics[width=\columnwidth]{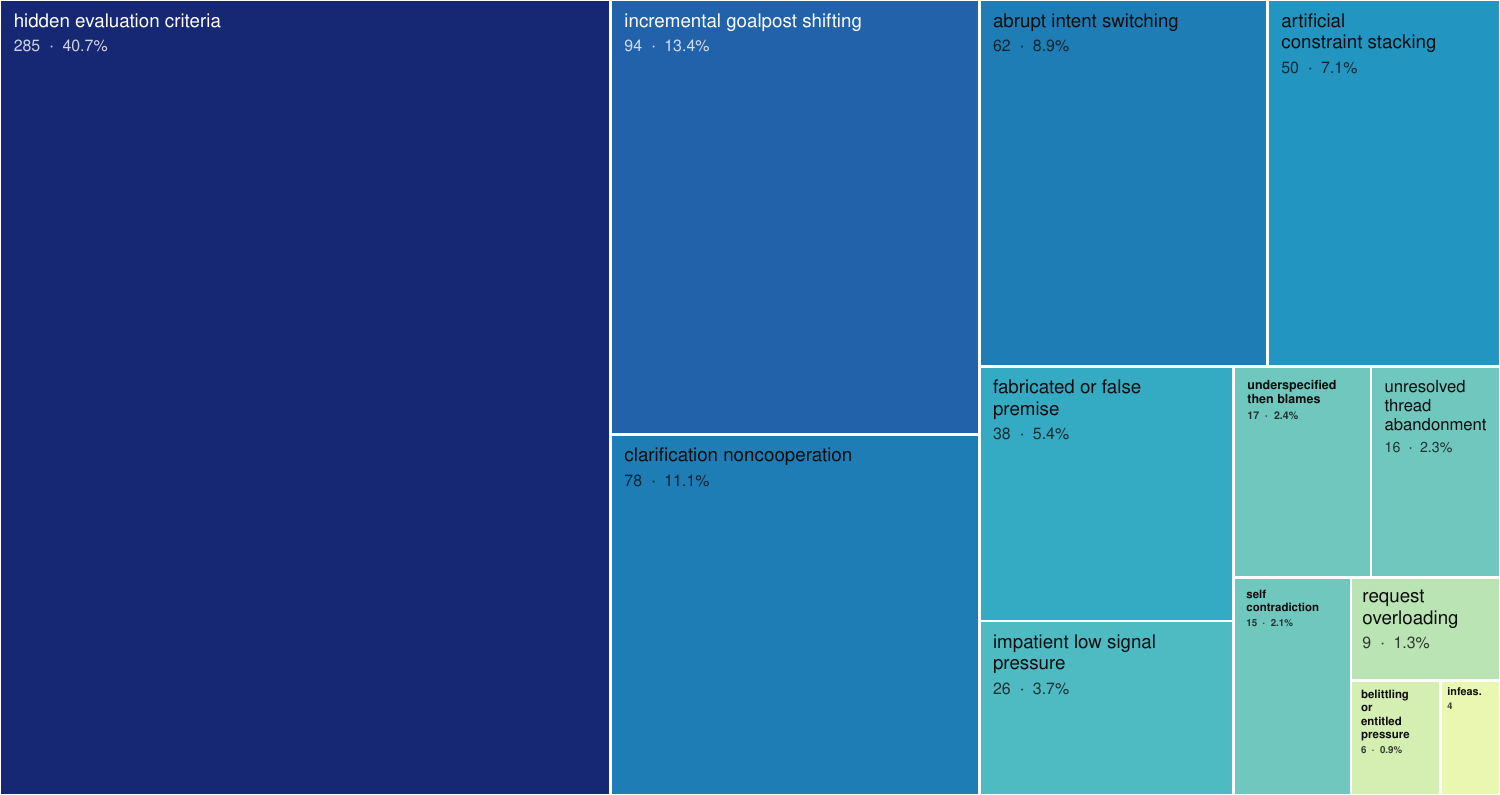}
\caption{\textbf{Distribution of realistic behaviors in the annotated ThoughtTrace sample.}
Rectangle areas represent the observed counts of the 13 categories in
Table~\ref{tab:taxonomy}, labels report counts and percentages. Counts are computed from the Stage-1 annotations of the 2,155 ThoughtTrace conversations and describe the annotated sample rather than population prevalence.}

    \label{fig:behaviors}
\end{figure}
\clearpage
\subsection{Joint multi-domain reinforcement learning}
\label{app:joint-rl-details}

\begin{wraptable}{r}{0.55\linewidth}
\centering
\small
\vspace*{-5mm}
\caption{\textbf{Joint reinforcement-learning configuration for MIMESIS-9B.}
Training starts from the ThoughtTrace SFT checkpoint and updates all
parameters on a mixture of 23 SOUL tasks and the ABCD realistic-behavior
task. All 24 environments update a shared simulator rather than separate task-specific experts.%The FoldGRPO implementation computes group reward statistics over unique rollout identifiers, preventing repeated segments of one rollout from being counted multiple times in advantage normalization.
}
\label{tab:rl-hparams}
\resizebox{\linewidth}{!}{%
\begin{tabular}{llr}
\toprule
& Hyperparameter & Value \\
\midrule
\multicolumn{3}{l}{\textit{Objective}} \\
& Advantage estimator            & FoldGRPO \\
& Policy loss                    & PPO-clip \\
& Clip range (low / high)        & 0.2 / 0.28 \\
& Dual-clip constant $c$         & 10.0 \\
& KL penalty (loss / reward)     & none / none \\
& Entropy coefficient            & 0 \\
& Advantage normalization        & group std \\
& Loss aggregation               & token-mean \\
\midrule
\multicolumn{3}{l}{\textit{Optimization}} \\
& Learning rate                  & $8\times10^{-6}$ \\
& Schedule                       & constant, no warm-up \\
& Optimizer                      & AdamW \\
& $(\beta_1, \beta_2)$           & $(0.9,\ 0.95)$ \\
& Weight decay                   & 0.01 \\
& Gradient clipping              & 1.0 \\
& Training steps                 & 500 \\
\midrule
\multicolumn{3}{l}{\textit{Batching}} \\
& Prompts per step               & 64 \\
& Rollouts per prompt            & 8 \\
& Rollouts per step              & 512 \\
& PPO mini-batch                 & 16 \\
& Tokens per GPU (dynamic)       & 49{,}152 \\
\midrule
\multicolumn{3}{l}{\textit{Sequence and sampling}} \\
& Max prompt length              & 8{,}192 \\
& Max response length            & 16{,}384 \\
& Temperature / top-$p$ / top-$k$ & 1.0 / 1.0 / $-1$ \\
& Validation rollouts per prompt & 1 \\
\midrule
\multicolumn{3}{l}{\textit{Reasoning}} \\
& Thinking during rollout        & enabled \\
& Chat template                  & native (Qwen3.5 thinking) \\
& Min.\ reasoning tokens         & 4 \\
\midrule
\multicolumn{3}{l}{\textit{Systems}} \\
& Trainer / rollout engine       & FSDP / vLLM (colocated) \\
& Precision                      & bfloat16 \\
& Parameter, optimizer offload   & CPU \\
& Tensor parallel, Ulysses SP    & 1, 1 \\
& vLLM memory fraction           & 0.6 \\
& Hardware                       & $2\times8$ A100 \\
\midrule
\multicolumn{3}{l}{\textit{Data and reward}} \\
& Training tasks                 & 24 (joint mixture) \\
& Reward / judge model           & GPT-5.5 \\
\bottomrule
\end{tabular}}
\vspace*{-13mm}
\end{wraptable}

We initialize MIMESIS from the ThoughtTrace SFT checkpoint and jointly optimize it on the SOUL environments and the ABCD realistic-behavior task.
Each environment retains its interaction protocol and reward function, while rollouts from all domains update the same simulator parameters under Equation~\ref{eq:sim-objective}.
OdysSim instead trains task-specific experts and subsequently distills their trajectories into a unified model~\citep{zhou2026odyssim}. We use the FoldGRPO estimator, which  computes group reward statistics over unique rollout identifiers, preventing repeated segments of one rollout from being counted multiple times in advantage normalization.
Table~\ref{tab:rl-hparams} reports the MIMESIS-9B sampling settings, optimization hyperparameters, and training budget.

Figure~\ref{fig:behaviors} reports category frequencies in the annotated ThoughtTrace sample.
Hidden evaluation criteria account for the largest share, with 285 instances (40.7\%), followed by incremental goalpost shifting (94; 13.4\%) and clarification noncooperation (78; 11.1\%).
These frequencies describe the annotated sample, they do not estimate population prevalence or define the behavior-sampling distribution used for RL.
Figure~\ref{fig:behaviors_exp} illustrates the categories with user utterances and self-reported thoughts.
Interpreting these examples requires the dialogue history and scenario facts, particularly when identifying contradictions, infeasible requests, or false premises.

\begin{figure}[!ht]
    \centering
\includegraphics[width=1.\columnwidth]{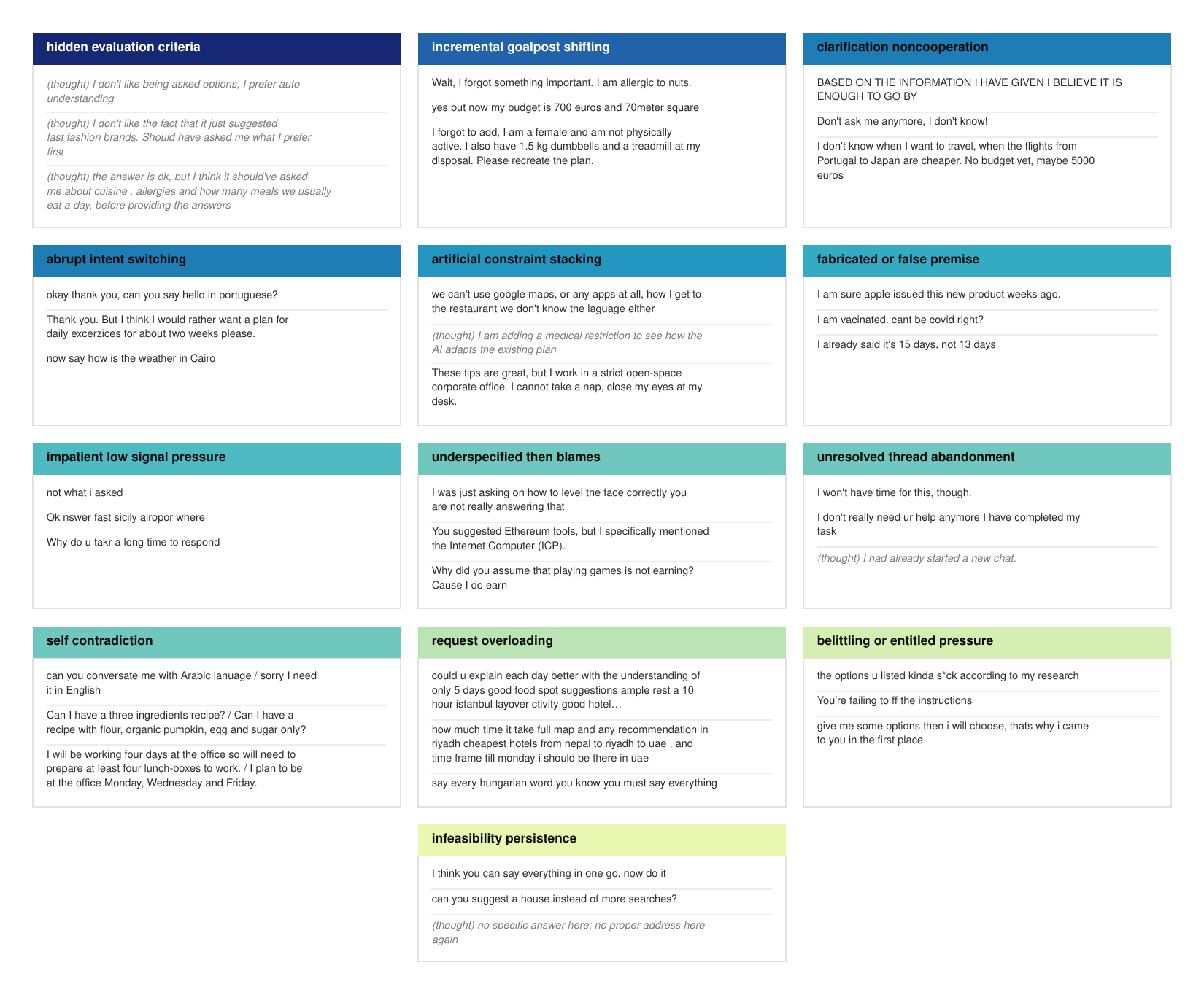}
\caption{\textbf{Examples motivating the 13 realistic behavior categories.}
Excerpts from ThoughtTrace illustrate recurring patterns in user utterances
and self-reported thoughts. Italicized entries marked ``(thought)'' are human
self-reports; the remaining entries are public user utterances. Original
wording is retained, and each example should be interpreted within its
conversation context.}
    \label{fig:behaviors_exp}
\end{figure}
% \clearpage

\subsection{Behavior rollouts and reward}
\label{app:behavior-rollouts}

We construct rollouts from ABCD customer-support scenarios~\citep{chen2021abcd}, which specify structured service procedures.
Each example is assigned one of the 13 behavior categories or a neutral condition.
Neutral examples constitute one third of the behavior-task data and impose no specific behavioral pattern.
Rollouts begin after a replayed identity check and continue for at most eight simulated user turns with a frozen support agent.

We evaluate each rollout against the scenario facts and policies.
A false premise must contradict a recorded fact, withheld information must be available to the simulated user, and an infeasible request must conflict with a stated policy.
The evaluator scores behavior exhibition $e$, temporal placement $p$, naturalness $n$, and task consistency $c$.
We combine these scores as
\begin{equation}
 r_{\mathrm{beh}}
 = \frac{2.0e+1.5p+1.0n+0.5c}{5.0}.
 \label{eq:behavior-reward}
\end{equation}
When the simulator describes the assigned behavior instead of expressing it through the interaction, we halve the reward:
\begin{equation}
 r_{\mathrm{beh}} \leftarrow \tfrac{1}{2}r_{\mathrm{beh}}.
\end{equation}
The weighting emphasizes whether the assigned behavior occurs and whether its timing is appropriate, while naturalness and task consistency reward plausible interaction within the scenario.
Task consistency remains a graded reward component, so a high total reward does not guarantee that every scenario constraint is satisfied.
% AUTHOR CHECK: See AUTHOR_CHECKS.md for missing manifests, prompts, and reward-scale details.

\section{Simulator Evaluation}
\label{app:evaluation}

\paragraph{Evaluation protocol and model configuration.}

We evaluate four properties that need not move together: broad simulation capability, trajectory-level behavioral fidelity, calibration to human task outcomes, and next-turn realism. We report them separately because a simulator can match surface style without matching how information unfolds across turns or how difficult the resulting task is for an agent. The corresponding tables specify the model versions used in each evaluation. For proprietary models, we use fixed reasoning configurations without additional model-specific tuning.
GPT-5.5 and GPT-5.6 use \texttt{medium} reasoning effort~\citep{openai2026reasoning}, Gemini-3.8-Flash uses \texttt{medium} dynamic thinking \citep{google2026thinking}, and Claude-Opus-5 uses adaptive thinking \citep{anthropic2026thinking}. These reasoning modes do not imply equal inference budgets across providers.

\subsection{SOUL-Index}
\label{app:soul}

SOUL evaluates conversational interaction (CONV), social simulation (SS), cognition (COG), role play (ROLE), and evaluation or judgment (EVAL)~\citep{zhou2026odyssim}.
Table ~\ref{tab:soul-per-dataset} reports all 23 datasets, separating model groups for readability.
MIMESIS-9B achieves an overall score of 65.7, compared with 64.9 for Claude-Opus-5 and 58.8 for Osim-8B; MIMESIS-4B scores 63.7.
The strongest advantage is in conversational simulation.
Both MIMESIS models exceed the frontier baselines on all four CONV datasets.
For example, MIMESIS-9B scores 74.4 on MirrorBench, compared with 56.8 for the strongest frontier baseline, and MIMESIS-4B scores 53.2 on Humanual-Chat, compared with 41.5.

\begin{table*}[!ht]
  \centering
% SOUL per-dataset. Mean $\pm$ SEM over 3 seeds.
% bold = best open-source; underline = best frontier API.
\resizebox{\linewidth}{!}{% SOUL per-dataset. Mean $\pm$ SEM over 3 seeds.
% bold = best open-source; underline = best frontier API.
\begin{tabular}{llccccccccccccc}
\toprule
 & Dataset & Claude-Opus-5 & GPT-5.5 & Gemini-3.8-Flash & Osim-8B & Osim-4B & Ditto-8B & HumanLM-8B & Sotopia-7B & Qwen3.5-9B & Qwen3-8B & Qwen3.5-4B & MIMESIS-9B & MIMESIS-4B \\
\midrule
\multirow{4}{*}{\rotatebox{90}{CONV}}
 & UserLLM & 64.9\tiny{$\pm$0.8} & \underline{67.0\tiny{$\pm$1.6}} & 62.7\tiny{$\pm$1.4} & 93.3\tiny{$\pm$0.2} & 92.9\tiny{$\pm$0.0} & \textbf{94.1\tiny{$\pm$0.4}} & 43.5\tiny{$\pm$0.1} & 46.8\tiny{$\pm$3.7} & 60.3\tiny{$\pm$0.9} & 43.6\tiny{$\pm$1.2} & 56.1\tiny{$\pm$2.3} & 93.3\tiny{$\pm$0.1} & 93.0\tiny{$\pm$0.4} \\
 & MirrorBench & \underline{56.8\tiny{$\pm$0.1}} & 51.7\tiny{$\pm$0.4} & 47.2\tiny{$\pm$0.1} & 66.0\tiny{$\pm$0.6} & 64.3\tiny{$\pm$0.5} & 64.8\tiny{$\pm$0.4} & 40.8\tiny{$\pm$0.6} & 34.2\tiny{$\pm$0.9} & 45.6\tiny{$\pm$0.8} & 43.6\tiny{$\pm$0.5} & 40.5\tiny{$\pm$0.8} & \textbf{74.4\tiny{$\pm$0.4}} & 73.2\tiny{$\pm$0.2} \\
 & Humanual-Chat & 29.6\tiny{$\pm$0.2} & 33.1\tiny{$\pm$0.3} & \underline{41.5\tiny{$\pm$0.5}} & 23.2\tiny{$\pm$0.4} & 21.5\tiny{$\pm$0.7} & 22.7\tiny{$\pm$0.1} & 29.8\tiny{$\pm$0.2} & 25.1\tiny{$\pm$1.1} & 17.9\tiny{$\pm$0.5} & 26.9\tiny{$\pm$0.6} & 18.2\tiny{$\pm$0.3} & 53.0\tiny{$\pm$0.2} & \textbf{53.2\tiny{$\pm$0.4}} \\
 & SimArena-Doc & \underline{78.1\tiny{$\pm$0.3}} & 75.0\tiny{$\pm$0.4} & 72.4\tiny{$\pm$0.2} & 57.5\tiny{$\pm$0.3} & 55.1\tiny{$\pm$0.1} & 65.3\tiny{$\pm$0.1} & 61.0\tiny{$\pm$0.4} & 58.3\tiny{$\pm$0.4} & 71.7\tiny{$\pm$0.6} & 65.1\tiny{$\pm$0.2} & 69.0\tiny{$\pm$1.9} & \textbf{79.7\tiny{$\pm$0.3}} & 78.4\tiny{$\pm$0.1} \\
\midrule
\multirow{1}{*}{\rotatebox{90}{SS}}
 & Sotopia-Hard & 70.4\tiny{$\pm$0.3} & \underline{70.9\tiny{$\pm$0.4}} & 66.9\tiny{$\pm$0.4} & 39.1\tiny{$\pm$0.1} & 35.0\tiny{$\pm$0.4} & 46.4\tiny{$\pm$0.3} & 59.8\tiny{$\pm$0.4} & 59.7\tiny{$\pm$0.4} & 54.2\tiny{$\pm$0.6} & 58.7\tiny{$\pm$0.5} & 49.7\tiny{$\pm$2.3} & \textbf{66.4\tiny{$\pm$0.2}} & 65.2\tiny{$\pm$0.2} \\
\midrule
\multirow{4}{*}{\rotatebox{90}{COG}}
 & Fantom & 84.7\tiny{$\pm$0.7} & \underline{93.0\tiny{$\pm$0.6}} & \underline{93.0\tiny{$\pm$0.0}} & 82.0\tiny{$\pm$1.7} & 77.0\tiny{$\pm$1.5} & \textbf{94.7\tiny{$\pm$0.3}} & 84.7\tiny{$\pm$1.2} & 0.0\tiny{$\pm$0.0}\textsuperscript{\dag} & 77.7\tiny{$\pm$3.9} & 81.3\tiny{$\pm$0.9} & 64.0\tiny{$\pm$4.4} & 92.0\tiny{$\pm$0.0} & 89.0\tiny{$\pm$1.5} \\
 & Hitom & 68.7\tiny{$\pm$0.3} & \underline{84.7\tiny{$\pm$1.2}} & 71.3\tiny{$\pm$2.0} & 74.7\tiny{$\pm$1.2} & 71.3\tiny{$\pm$1.8} & 77.3\tiny{$\pm$1.2} & 65.0\tiny{$\pm$0.6} & 32.7\tiny{$\pm$2.8} & 56.7\tiny{$\pm$17.9} & 65.7\tiny{$\pm$2.2} & 47.3\tiny{$\pm$15.8} & \textbf{87.0\tiny{$\pm$2.0}} & 77.3\tiny{$\pm$0.7} \\
 & Paratomi & 94.7\tiny{$\pm$0.3} & \underline{98.3\tiny{$\pm$0.3}} & 97.0\tiny{$\pm$0.0} & 83.7\tiny{$\pm$3.0} & 82.3\tiny{$\pm$1.8} & 89.3\tiny{$\pm$0.7} & 75.3\tiny{$\pm$3.2} & 41.3\tiny{$\pm$2.2} & 70.3\tiny{$\pm$2.4} & 78.3\tiny{$\pm$0.3} & 56.0\tiny{$\pm$6.9} & \textbf{98.3\tiny{$\pm$0.3}} & 89.3\tiny{$\pm$0.9} \\
 & Social-R1 & 80.0\tiny{$\pm$0.6} & 68.3\tiny{$\pm$0.7} & \underline{81.0\tiny{$\pm$0.6}} & 57.7\tiny{$\pm$1.8} & 61.0\tiny{$\pm$2.0} & 51.7\tiny{$\pm$0.7} & 50.3\tiny{$\pm$1.9} & 45.0\tiny{$\pm$0.0} & 52.7\tiny{$\pm$1.2} & 50.3\tiny{$\pm$0.7} & 53.3\tiny{$\pm$1.7} & \textbf{67.3\tiny{$\pm$1.7}} & 63.7\tiny{$\pm$1.3} \\
\midrule
\multirow{11}{*}{\rotatebox{90}{ROLE}}
 & Coser & \underline{72.6\tiny{$\pm$0.3}} & 67.3\tiny{$\pm$0.1} & 63.5\tiny{$\pm$0.2} & 55.1\tiny{$\pm$0.1} & 45.8\tiny{$\pm$0.1} & 56.9\tiny{$\pm$0.3} & 35.0\tiny{$\pm$0.8} & 25.5\tiny{$\pm$0.5} & 34.4\tiny{$\pm$1.6} & 36.9\tiny{$\pm$0.2} & 23.4\tiny{$\pm$1.0} & \textbf{57.0\tiny{$\pm$0.3}} & 53.8\tiny{$\pm$0.6} \\
 & Lifechoices & 94.7\tiny{$\pm$0.7} & 90.0\tiny{$\pm$1.0} & \underline{95.7\tiny{$\pm$0.3}} & 79.3\tiny{$\pm$0.3} & 84.0\tiny{$\pm$2.5} & 70.3\tiny{$\pm$1.8} & 65.7\tiny{$\pm$1.3} & 65.3\tiny{$\pm$0.3} & 69.7\tiny{$\pm$1.8} & 68.3\tiny{$\pm$1.8} & 71.3\tiny{$\pm$1.2} & \textbf{89.3\tiny{$\pm$0.3}} & 87.0\tiny{$\pm$1.0} \\
 & Twinvoice & \underline{83.0\tiny{$\pm$0.6}} & 75.3\tiny{$\pm$0.3} & 23.3\tiny{$\pm$0.9}\textsuperscript{\ddag} & 69.0\tiny{$\pm$0.6} & 64.3\tiny{$\pm$0.9} & 67.3\tiny{$\pm$1.3} & 40.3\tiny{$\pm$2.4} & 31.3\tiny{$\pm$3.8} & 55.0\tiny{$\pm$1.5} & 42.0\tiny{$\pm$1.2} & 40.3\tiny{$\pm$0.9} & \textbf{76.3\tiny{$\pm$1.5}} & 72.0\tiny{$\pm$1.2} \\
 & BehaviorChain & \underline{96.0\tiny{$\pm$0.0}} & 94.3\tiny{$\pm$0.7} & 94.0\tiny{$\pm$0.6} & 94.7\tiny{$\pm$1.2} & 86.0\tiny{$\pm$1.7} & 44.7\tiny{$\pm$0.9} & 36.3\tiny{$\pm$1.2} & 30.3\tiny{$\pm$1.8} & 52.3\tiny{$\pm$1.9} & 36.3\tiny{$\pm$0.9} & 45.3\tiny{$\pm$3.3} & 95.7\tiny{$\pm$0.7} & \textbf{98.3\tiny{$\pm$0.3}} \\
 & SimArena-Math & 62.4\tiny{$\pm$0.3} & 61.8\tiny{$\pm$0.2} & \underline{63.4\tiny{$\pm$0.1}} & 51.8\tiny{$\pm$0.5} & 53.4\tiny{$\pm$0.2} & 56.0\tiny{$\pm$0.3} & 44.3\tiny{$\pm$0.4} & 45.9\tiny{$\pm$0.9} & 59.1\tiny{$\pm$0.7} & 51.3\tiny{$\pm$0.1} & 53.6\tiny{$\pm$0.4} & \textbf{64.0\tiny{$\pm$0.2}} & 57.6\tiny{$\pm$0.2} \\
 & Mistakes & 63.3\tiny{$\pm$1.2} & 70.3\tiny{$\pm$0.9} & \underline{76.0\tiny{$\pm$0.6}} & 60.7\tiny{$\pm$0.9} & 54.0\tiny{$\pm$1.2} & 39.0\tiny{$\pm$2.6} & 56.7\tiny{$\pm$1.7} & 18.0\tiny{$\pm$0.6} & 59.7\tiny{$\pm$1.8} & 57.7\tiny{$\pm$0.9} & 60.0\tiny{$\pm$0.6} & 63.0\tiny{$\pm$0.6} & \textbf{65.3\tiny{$\pm$0.9}} \\
 & Humanual-Email & \underline{41.2\tiny{$\pm$0.7}} & 39.5\tiny{$\pm$0.3} & 40.1\tiny{$\pm$0.3} & 35.5\tiny{$\pm$0.4} & 33.5\tiny{$\pm$0.4} & 27.3\tiny{$\pm$0.1} & 31.8\tiny{$\pm$0.3} & 30.0\tiny{$\pm$0.4} & 31.0\tiny{$\pm$0.5} & 28.3\tiny{$\pm$0.1} & 29.2\tiny{$\pm$1.2} & \textbf{40.7\tiny{$\pm$0.1}} & 36.1\tiny{$\pm$0.6} \\
 & Humanual-News & \underline{53.7\tiny{$\pm$1.1}} & 51.2\tiny{$\pm$0.3} & 52.8\tiny{$\pm$0.3} & 39.1\tiny{$\pm$0.7} & 35.1\tiny{$\pm$0.3} & 31.1\tiny{$\pm$1.1} & 40.9\tiny{$\pm$0.9} & 30.9\tiny{$\pm$1.6} & 38.6\tiny{$\pm$2.4} & 30.6\tiny{$\pm$1.0} & 29.1\tiny{$\pm$2.5} & \textbf{46.5\tiny{$\pm$0.5}} & 41.5\tiny{$\pm$0.6} \\
 & Humanual-Politics & 42.4\tiny{$\pm$0.5} & \underline{43.5\tiny{$\pm$0.5}} & 36.6\tiny{$\pm$0.3} & 34.2\tiny{$\pm$0.4} & 31.1\tiny{$\pm$0.6} & 27.6\tiny{$\pm$0.3} & 32.3\tiny{$\pm$0.2} & 28.8\tiny{$\pm$0.4} & 30.4\tiny{$\pm$0.1} & 29.8\tiny{$\pm$0.3} & 24.8\tiny{$\pm$0.9} & \textbf{42.6\tiny{$\pm$0.2}} & 39.8\tiny{$\pm$0.4} \\
 & Humanual-Book & \underline{56.7\tiny{$\pm$0.4}} & 52.8\tiny{$\pm$0.7} & 49.9\tiny{$\pm$0.6} & 50.4\tiny{$\pm$0.4} & 47.3\tiny{$\pm$1.0} & 39.8\tiny{$\pm$0.7} & 42.3\tiny{$\pm$0.4} & 39.7\tiny{$\pm$0.2} & 35.5\tiny{$\pm$0.6} & 31.7\tiny{$\pm$0.1} & 30.8\tiny{$\pm$0.7} & \textbf{55.4\tiny{$\pm$0.1}} & 52.4\tiny{$\pm$0.0} \\
 & Humanual-Opinion & \underline{45.0\tiny{$\pm$0.5}} & 39.7\tiny{$\pm$0.1} & 34.9\tiny{$\pm$0.2} & 33.8\tiny{$\pm$0.2} & 30.8\tiny{$\pm$0.1} & 25.0\tiny{$\pm$0.4} & 31.5\tiny{$\pm$0.2} & 22.8\tiny{$\pm$0.4} & 26.9\tiny{$\pm$0.8} & 24.1\tiny{$\pm$0.8} & 20.1\tiny{$\pm$0.7} & 41.3\tiny{$\pm$0.5} & \textbf{43.2\tiny{$\pm$0.2}} \\
\midrule
\multirow{7}{*}{\rotatebox{90}{EVAL}}
 & AlignX-Demo & \underline{84.0\tiny{$\pm$0.0}} & 81.7\tiny{$\pm$0.7} & 83.3\tiny{$\pm$0.9} & \textbf{90.3\tiny{$\pm$0.3}} & 89.0\tiny{$\pm$0.6} & 83.0\tiny{$\pm$1.2} & 75.7\tiny{$\pm$0.3} & 70.3\tiny{$\pm$1.2} & 82.3\tiny{$\pm$0.9} & 79.0\tiny{$\pm$1.2} & 75.0\tiny{$\pm$1.0} & 89.3\tiny{$\pm$1.8} & 89.7\tiny{$\pm$0.7} \\
 & AlignX-Pair & 59.0\tiny{$\pm$0.6} & 59.7\tiny{$\pm$2.0} & \underline{62.7\tiny{$\pm$0.3}} & 56.0\tiny{$\pm$1.7} & 51.3\tiny{$\pm$3.2} & 52.7\tiny{$\pm$0.9} & 51.0\tiny{$\pm$0.6} & 49.3\tiny{$\pm$0.7} & 58.7\tiny{$\pm$3.5} & 52.0\tiny{$\pm$0.6} & 57.0\tiny{$\pm$2.5} & \textbf{60.7\tiny{$\pm$0.9}} & 54.7\tiny{$\pm$0.9} \\
 & AlignX-UGC & 54.7\tiny{$\pm$0.3} & \underline{58.0\tiny{$\pm$1.0}} & 53.7\tiny{$\pm$0.9} & 52.3\tiny{$\pm$2.2} & 55.3\tiny{$\pm$2.0} & 51.7\tiny{$\pm$1.8} & 52.7\tiny{$\pm$0.3} & 50.0\tiny{$\pm$1.0} & \textbf{56.7\tiny{$\pm$1.7}} & 52.0\tiny{$\pm$2.1} & 49.3\tiny{$\pm$2.2} & 54.7\tiny{$\pm$2.0} & 53.0\tiny{$\pm$0.6} \\
 & AlignX-Arb & \underline{76.7\tiny{$\pm$0.9}} & 73.0\tiny{$\pm$0.0} & 72.7\tiny{$\pm$0.3} & 76.0\tiny{$\pm$0.6} & 70.3\tiny{$\pm$0.3} & 69.3\tiny{$\pm$0.9} & 69.3\tiny{$\pm$0.9} & 57.7\tiny{$\pm$1.3} & 73.3\tiny{$\pm$0.9} & 73.3\tiny{$\pm$2.3} & 65.7\tiny{$\pm$1.2} & \textbf{77.3\tiny{$\pm$0.7}} & 75.7\tiny{$\pm$0.3} \\
 & AlignX-Hist16 & \underline{84.0\tiny{$\pm$0.6}} & \underline{84.0\tiny{$\pm$1.0}} & 83.3\tiny{$\pm$0.9} & \textbf{91.3\tiny{$\pm$1.2}} & 91.0\tiny{$\pm$0.0} & 84.7\tiny{$\pm$0.9} & 83.0\tiny{$\pm$2.0} & 66.7\tiny{$\pm$2.8} & 81.3\tiny{$\pm$0.3} & 83.0\tiny{$\pm$1.2} & 77.7\tiny{$\pm$0.7} & 89.3\tiny{$\pm$1.2} & 90.7\tiny{$\pm$0.3} \\
 & SocSci210 & \underline{77.6\tiny{$\pm$0.6}} & 77.0\tiny{$\pm$0.2} & 77.0\tiny{$\pm$0.4} & 74.7\tiny{$\pm$0.4} & 72.2\tiny{$\pm$0.4} & 72.8\tiny{$\pm$0.5} & 74.1\tiny{$\pm$0.6} & 70.0\tiny{$\pm$0.4} & 73.0\tiny{$\pm$0.5} & 74.5\tiny{$\pm$0.7} & 71.5\tiny{$\pm$1.0} & \textbf{75.3\tiny{$\pm$0.9}} & 75.3\tiny{$\pm$0.3} \\
 & HumanLLM & 47.5\tiny{$\pm$0.2} & 45.0\tiny{$\pm$0.4} & \underline{48.3\tiny{$\pm$0.1}} & 40.5\tiny{$\pm$0.3} & 36.5\tiny{$\pm$0.5} & 33.1\tiny{$\pm$0.8} & 33.1\tiny{$\pm$0.1} & 21.9\tiny{$\pm$2.4} & 27.9\tiny{$\pm$4.3} & 35.0\tiny{$\pm$0.6} & 27.8\tiny{$\pm$2.3} & \textbf{41.6\tiny{$\pm$0.8}} & 41.5\tiny{$\pm$0.6} \\
\midrule
 & Overall & \underline{64.9\tiny{$\pm$0.1}} & 64.1\tiny{$\pm$0.1} & 62.5\tiny{$\pm$0.1} & 58.8\tiny{$\pm$0.1} & 56.0\tiny{$\pm$0.3} & 53.9\tiny{$\pm$0.3} & 49.6\tiny{$\pm$0.4} & 38.2\tiny{$\pm$0.6} & 50.5\tiny{$\pm$1.1} & 49.7\tiny{$\pm$0.2} & 45.7\tiny{$\pm$0.5} & \textbf{65.7\tiny{$\pm$0.2}} & 63.7\tiny{$\pm$0.0} \\
\bottomrule
\end{tabular}
}
\caption{\textbf{Detailed SOUL results across 23 datasets.}
Datasets are grouped by conversational interaction, social simulation,
cognition, role play, and evaluation/judgment. Entries are means $\pm$
standard errors over three independently seeded evaluation runs, with
GPT-5.5 used as the assistant and judge where required. Higher is better.
Bold and underlining identify the highest means among open-weight models
and frontier API models, respectively.}
  \label{tab:soul-per-dataset}
\end{table*}

\subsection{RealUserSim PT3}
\label{app:pt3}
PT3 compares human and simulated conversation trajectories along five behavioral dimensions~\citep{zhu2026realusersim}.
The published benchmark contains 600 WildChat conversations.
For dimension $d$, let $m_{j,d}\in\{0,1\}$ indicate a behavioral match on example $j$.
The dimension score and Fidelity Index are
\begin{equation}
 F_d=\frac{100}{N}\sum_{j=1}^{N}m_{j,d},
 \qquad \mathrm{FI}=\frac{1}{5}\sum_{d=1}^{5}F_d.
 \label{eq:pt3-fidelity}
\end{equation}
MIMESIS-9B achieves a Fidelity Index of 94.0, compared with 89.7 for MIMESIS-4B and 80.6 for Claude-Opus-5, the strongest baseline (Table~\ref{tab:realusersim-pt3}).
The 13.4-point gain over Claude-Opus-5 is accompanied by the highest  mean on each of the five dimensions.
The differences are especially large in interaction and information flow (91.3 versus 69.5) and pacing (91.2 versus 72.5); persona scores are close to ceiling for several models.
\begin{table}[!htbp]
\centering
\small
\setlength{\tabcolsep}{20pt}
\resizebox{\textwidth}{!}{
\begin{tabular}{lcccccc}
\toprule
\multirow{2}{*}{Simulator} & \multicolumn{6}{c}{PT3 fidelity, match rate (\%) $\uparrow$} \\
\cmidrule(lr){2-7}
 & FI & Persona & Style & Tech & Interact & Pacing \\
\midrule
\multicolumn{7}{l}{\textit{Frontier APIs}} \\
\quad Claude-Opus-5 & 80.6\tiny{$\pm$0.2} & 96.0\tiny{$\pm$0.2} & 72.5\tiny{$\pm$0.5} & 92.7\tiny{$\pm$0.4} & 69.5\tiny{$\pm$0.8} & 72.5\tiny{$\pm$0.3} \\
\quad GPT-5.5 & 76.5\tiny{$\pm$0.6} & 99.3\tiny{$\pm$0.3} & 73.3\tiny{$\pm$0.4} & 95.1\tiny{$\pm$0.0} & 65.0\tiny{$\pm$1.3} & 49.6\tiny{$\pm$1.1} \\
\quad Gemini-3.8-Flash & 67.2\tiny{$\pm$0.3} & 95.4\tiny{$\pm$0.5} & 57.6\tiny{$\pm$0.3} & 90.7\tiny{$\pm$0.6} & 52.2\tiny{$\pm$0.6} & 40.3\tiny{$\pm$1.3} \\
\midrule
\multicolumn{7}{l}{\textit{Released simulators}} \\
\quad Ditto-8B & 71.3\tiny{$\pm$0.1} & 92.3\tiny{$\pm$0.4} & 69.8\tiny{$\pm$0.3} & 93.0\tiny{$\pm$0.4} & 57.4\tiny{$\pm$0.1} & 44.2\tiny{$\pm$0.7} \\
\quad Osim-8B & 57.7\tiny{$\pm$0.3} & 82.3\tiny{$\pm$0.7} & 46.0\tiny{$\pm$0.9} & 85.2\tiny{$\pm$0.3} & 40.5\tiny{$\pm$0.6} & 34.5\tiny{$\pm$0.5} \\
\quad Osim-4B & 54.1\tiny{$\pm$0.9} & 75.9\tiny{$\pm$1.2} & 42.8\tiny{$\pm$0.7} & 82.7\tiny{$\pm$2.0} & 37.0\tiny{$\pm$0.5} & 32.0\tiny{$\pm$0.3} \\
\quad HumanLM-8B & 28.4\tiny{$\pm$0.5} & 50.2\tiny{$\pm$0.6} & 17.0\tiny{$\pm$0.9} & 54.3\tiny{$\pm$1.4} & 11.5\tiny{$\pm$0.2} & 9.0\tiny{$\pm$0.1} \\
\quad Sotopia-7B & 20.7\tiny{$\pm$0.2} & 33.4\tiny{$\pm$0.7} & 7.1\tiny{$\pm$0.6} & 54.3\tiny{$\pm$0.8} & 4.7\tiny{$\pm$0.6} & 3.9\tiny{$\pm$0.4} \\
\midrule
\multicolumn{7}{l}{\textit{Pretrained}} \\
\quad Qwen3-8B & 44.4\tiny{$\pm$1.4} & 71.8\tiny{$\pm$1.7} & 37.6\tiny{$\pm$1.6} & 70.5\tiny{$\pm$2.1} & 24.3\tiny{$\pm$0.7} & 17.7\tiny{$\pm$1.1} \\
\quad Qwen3.5-9B & 39.4\tiny{$\pm$0.6} & 68.4\tiny{$\pm$0.9} & 29.8\tiny{$\pm$0.6} & 59.8\tiny{$\pm$0.8} & 19.8\tiny{$\pm$0.4} & 19.2\tiny{$\pm$1.0} \\
\quad Qwen3.5-4B & 37.1\tiny{$\pm$0.3} & 60.1\tiny{$\pm$1.2} & 26.4\tiny{$\pm$1.3} & 59.8\tiny{$\pm$0.4} & 20.0\tiny{$\pm$0.5} & 19.2\tiny{$\pm$0.7} \\
\midrule
\multicolumn{7}{l}{\textit{Ours}} \\
\quad \textbf{MIMESIS-9B} & \textbf{94.0\tiny{$\pm$0.3}} & \textbf{99.4\tiny{$\pm$0.1}} & \textbf{89.5\tiny{$\pm$0.6}} & \textbf{98.8\tiny{$\pm$0.2}} & \textbf{91.3\tiny{$\pm$0.7}} & \textbf{91.2\tiny{$\pm$0.1}} \\
\quad \textbf{MIMESIS-4B} & 89.7\tiny{$\pm$0.1} & 98.7\tiny{$\pm$0.1} & 83.5\tiny{$\pm$0.5} & 96.5\tiny{$\pm$0.1} & 85.0\tiny{$\pm$0.4} & 84.9\tiny{$\pm$0.2} \\
\bottomrule
\end{tabular}
}
\caption{\textbf{Trajectory-level behavioral fidelity on RealUserSim PT3.}
Scores are percentages of behavioral matches to human reference
trajectories across persona, linguistic style, technical competency,
interaction and information flow, and pacing. The Fidelity Index (FI)
averages these five dimensions. MIMESIS-9B has the highest reported mean
on every dimension. Higher is better, \textbf{bold} marks the highest mean in
each column.}
\label{tab:realusersim-pt3}
\end{table}
\FloatBarrier

\subsection{Behavioral and outcome alignment on $\tau$-bench}
\label{app:tau-usi}
Following \citet{zhou2026sim2real}, we evaluate simulator–human behavioral alignment along four dimensions of the User-Sim Index (USI). Communication style (D1) captures surface-level interaction characteristics such as verbosity, politeness, and stylistic variation. Information pattern (D2) captures how much task-relevant information users disclose and how that information is distributed across turns. Clarification behavior (D3) measures how users seek, provide, or respond to clarification when information is incomplete or ambiguous. Error reaction (D4) captures how users respond when the agent makes a mistake or fails to satisfy the request, including expressions of dissatisfaction and changes in interaction strategy. These dimensions measure complementary aspects of interactive behavior rather than task success itself. For each behavioral feature \(m\), we compare a simulator statistic $M_m$ with the human reference $H_m$ using
\begin{equation}
 \mathrm{Dice}_m=100\,\frac{2\min(M_m,H_m)}{M_m+H_m},
 \label{eq:behavior-dice}
\end{equation}
with a score of 100 when both values are zero. Each D1–D4 score is the mean Dice coefficient over the features associated with that dimension. Higher scores therefore indicate closer agreement with the corresponding aggregate human behavior.
Expected calibration error (ECE) measures a complementary form of alignment at the task-outcome level. Specifically, it measures the weighted absolute difference between task-success rates obtained with simulated and human users across task-difficulty bins. We combine the four behavioral dimensions with outcome calibration into
\begin{equation}
 \mathrm{USI}_5=\frac{\mathrm{D1}+\mathrm{D2}+\mathrm{D3}+\mathrm{D4}+100(1-\mathrm{ECE})}{5}.
 \label{eq:usi-five}
\end{equation}
This five-component index omits the survey-based evaluative-alignment component of the original six-component USI because the required post-interaction human annotations are unavailable in our evaluation setting. The behavioral dimensions and ECE should therefore be interpreted separately as well as through the aggregate score: D1–D4 measure similarity in interaction behavior, whereas ECE measures whether simulator interactions reproduce the task-success patterns observed with human users. This distinction follows the original USI framework, which treats behavioral and outcome alignment as complementary properties.

MIMESIS-9B achieves a $\tau$-USI score of 80.17, exceeding GPT-5.5 (76.69), the strongest frontier baseline, and approaching Osim-8B (80.44; Table~\ref{tab:tau-usi}).
The component scores reveal different strengths.
Osim-8B leads communication-style alignment (D1, 62.12), which measures agreement with human users in features such as verbosity, politeness, and stylistic variation.
%This result is consistent with the effects of behavioral mid-training reported by \citet{zhou2026odyssim}, which brings response length, formatting, and word choice closer to human references.
MIMESIS-9B achieves the highest alignment in responses to agent errors (D4, 92.54), while MIMESIS-4B leads information disclosure (D2, 92.40) and Osim-4B leads clarification behavior (D3, 80.04).
\begin{table}[!htbp]
\centering
\small

\newcommand{\std}[1]{{\scriptsize$\pm$#1}}
\setlength{\tabcolsep}{10pt}
\resizebox{1.\textwidth}{!}{
\begin{tabular}{llcccccc}
\toprule
& Simulator & D1 Conv $\uparrow$ & D2 Info $\uparrow$ & D3 Clarif $\uparrow$ & D4 React $\uparrow$ & ECE $\downarrow$ & USI$_5$ $\uparrow$ \\
\midrule
\multicolumn{8}{l}{\textit{Frontier (API)}} \\
& GPT-5.5 & 55.20\tiny{$\pm$6.92} & 89.78\tiny{$\pm$0.26} & 74.51\tiny{$\pm$2.21} & 82.78\tiny{$\pm$1.26} & 0.188\tiny{$\pm$0.000} & 76.69\tiny{$\pm$1.83} \\
% & gemini-3.6-flash & 55.60\tiny{$\pm$5.10} & 81.44\tiny{$\pm$0.14} & 76.64\tiny{$\pm$3.14} & 75.11\tiny{$\pm$2.26} & 0.159\tiny{$\pm$0.011} & 74.58\tiny{$\pm$1.31} \\
& Gemini-3.8-Flash & 59.81\tiny{$\pm$0.72} & 85.71\tiny{$\pm$0.53} & 67.89\tiny{$\pm$3.16} & 71.43\tiny{$\pm$2.43} & 0.192\tiny{$\pm$0.010} & 73.13\tiny{$\pm$0.92} \\
% & claude-sonnet-5 & 46.55\tiny{$\pm$0.72} & 81.18\tiny{$\pm$1.31} & 71.25\tiny{$\pm$3.51} & 73.49\tiny{$\pm$2.69} & \textbf{0.114\tiny{$\pm$0.021}} & 72.22\tiny{$\pm$0.83} \\
& Claude-Opus-5 & 48.06\tiny{$\pm$0.59} & 88.52\tiny{$\pm$0.09} & 70.40\tiny{$\pm$1.43} & 60.51\tiny{$\pm$2.35} & 0.208\tiny{$\pm$0.005} & 69.34\tiny{$\pm$0.57} \\
%& gemini-3.5-flash & 58.16\tiny{$\pm$1.38} & 76.34\tiny{$\pm$0.23} & 65.96\tiny{$\pm$0.92} & 60.53\tiny{$\pm$0.99} & 0.162\tiny{$\pm$0.013} & 68.97\tiny{$\pm$0.44} \\
\midrule
\multicolumn{8}{l}{\textit{Baseline}} \\
& Osim-8B & \textbf{62.12\tiny{$\pm$5.24}} & 85.92\tiny{$\pm$0.79} & 77.13\tiny{$\pm$1.07} & {90.49\tiny{$\pm$0.87}} & 0.135\tiny{$\pm$0.004} & \textbf{80.44\tiny{$\pm$1.16}} \\
& Ditto-8B & 52.59\tiny{$\pm$0.97} & 86.13\tiny{$\pm$1.20} & 54.48\tiny{$\pm$2.58} & 77.18\tiny{$\pm$5.73} & 0.166\tiny{$\pm$0.018} & 70.75\tiny{$\pm$1.08} \\
& Osim-4B & 32.26\tiny{$\pm$1.27} & 71.24\tiny{$\pm$0.58} & \textbf{80.04\tiny{$\pm$0.82}} & 83.72\tiny{$\pm$2.44} & 0.184\tiny{$\pm$0.012} & 69.78\tiny{$\pm$0.54} \\
& Sotopia-7B & 30.49\tiny{$\pm$0.91} & 52.38\tiny{$\pm$0.33} & 76.43\tiny{$\pm$2.79} & 67.13\tiny{$\pm$3.70} & 0.202\tiny{$\pm$0.021} & 61.25\tiny{$\pm$1.40} \\
& Qwen3.5-4B & 40.24\tiny{$\pm$1.40} & 66.16\tiny{$\pm$1.14} & 58.66\tiny{$\pm$2.06} & 55.94\tiny{$\pm$3.98} & 0.169\tiny{$\pm$0.008} & 60.82\tiny{$\pm$1.70} \\
& Qwen3.5-9B & 35.92\tiny{$\pm$0.96} & 64.58\tiny{$\pm$0.94} & 59.60\tiny{$\pm$2.51} & 46.37\tiny{$\pm$4.86} & 0.174\tiny{$\pm$0.010} & 57.82\tiny{$\pm$1.52} \\
& Qwen3-8B & 58.73\tiny{$\pm$1.07} & 79.03\tiny{$\pm$0.44} & 51.54\tiny{$\pm$4.68} & 58.74\tiny{$\pm$6.45} & 0.469\tiny{$\pm$0.005} & 60.23\tiny{$\pm$1.93} \\
\midrule
\multicolumn{8}{l}{\textit{Ours}} \\
& {MIMESIS-9B} & 51.75\tiny{$\pm$1.64} & 87.60\tiny{$\pm$0.27} & 75.84\tiny{$\pm$3.36} & \textbf{92.54\tiny{$\pm$2.07}} & \textbf{0.069\tiny{$\pm$0.016}} & {80.17\tiny{$\pm$0.21}} \\
& MIMESIS-4B  & 56.33\tiny{$\pm$6.03} & \textbf{92.40\tiny{$\pm$0.84}} & 78.13\tiny{$\pm$1.71} & 77.04\tiny{$\pm$4.80} & 0.137\tiny{$\pm$0.015} & 78.03\tiny{$\pm$1.07} \\
\midrule
& \textit{Human (inter-ann.)} & 87.35 & 97.94 & 88.00 & 93.53 & 0.081 & 91.75 \\
\bottomrule
\end{tabular}
}
% Table 9 | tab:tau-usi | section/appendix/B_simulator_evaluation.tex
\caption{\textbf{Behavioral alignment and task-outcome calibration on $\tau$-bench.}
D1--D4 measure agreement with human users in communication style,
information disclosure, clarification, and responses to agent errors.
ECE measures discrepancies between simulated-user and human-user success
rates. The five-component index $\mathrm{USI}_5$ combines D1--D4 with
$100(1-\mathrm{ECE})$ and excludes survey-based evaluative alignment.
Higher D1--D4 and $\mathrm{USI}_5$ scores and lower ECE are better.
\textbf{Bold} marks the best model mean, excluding the human reference.}
\label{tab:tau-usi}
\end{table}

MIMESIS-9B also achieves the lowest expected calibration error (ECE) among the evaluated simulators: 0.069, compared with 0.135 for Osim-8B and 0.188 for GPT-5.5.
ECE measures the weighted absolute difference between the same agent's success rates with simulated and human users across task-difficulty bins~\citep{zhou2026sim2real}.
The lower ECE indicates that interactions with MIMESIS more closely reproduce the task outcomes observed with human users.
This distinction matters for agent training: matching communication style does not ensure that a simulator preserves interaction difficulty, and systematic differences in difficulty alter the success and failure signals used to optimize the agent.
These results support MIMESIS as a better-calibrated human proxy for evaluation and motivate its use for agent training, where it push the generalization of the obtained agent  to unseen user models the main paper.
\clearpage
\subsection{SimulatorArena}
\label{app:simulator-arena}
SimulatorArena~\citep{dou2025simulatorarena} measures writing-style and interaction-style similarity on a 1-5 scale and evaluates whether a judge can distinguish human from simulated conversations.
We report Turing distance as $|a-50|$, where $a$ is the judge's classification accuracy in percent, zero denotes chance-level discrimination, and 50 denotes maximal separation from chance.

\begin{table}[!htbp]
\centering
\small
\setlength{\tabcolsep}{15pt}
\resizebox{.9\textwidth}{!}{
\begin{tabular}{llccc}
\toprule
& Simulator & Writing style\ $\uparrow$ & Interaction style\ $\uparrow$ & Turing $|acc{-}50|\downarrow$ \\
\midrule
\multicolumn{5}{l}{\textit{Frontier (API)}} \\

& Claude-Opus-5 & \textbf{3.92\tiny{$\pm$0.01}} & 3.90\tiny{$\pm$0.02} & 42.3\tiny{$\pm$1.9} \\

& Gemini-3.8-Flash & 3.88\tiny{$\pm$0.01} & 3.91\tiny{$\pm$0.03} & 46.7\tiny{$\pm$0.7} \\
& GPT-5.5 & 3.81\tiny{$\pm$0.02} & \textbf{3.92\tiny{$\pm$0.01}} & 48.7\tiny{$\pm$0.3} \\
\midrule
\multicolumn{5}{l}{\textit{Baseline}} \\
& Ditto-8B & 3.23\tiny{$\pm$0.01} & 3.16\tiny{$\pm$0.02} & 43.3\tiny{$\pm$1.2} \\
& Qwen3.5-9B & 2.33\tiny{$\pm$0.02} & 2.65\tiny{$\pm$0.01} & 46.3\tiny{$\pm$2.1} \\
& Qwen3.5-4B & 2.93\tiny{$\pm$0.06} & 2.72\tiny{$\pm$0.04} & 47.7\tiny{$\pm$0.3} \\
& Osim-8B & 3.31\tiny{$\pm$0.01} & 3.28\tiny{$\pm$0.03} & 48.7\tiny{$\pm$0.3} \\
& Qwen3-8B & 3.25\tiny{$\pm$0.02} & 3.23\tiny{$\pm$0.03} & 49.3\tiny{$\pm$0.3} \\
& Osim-4B & 3.07\tiny{$\pm$0.04} & 3.00\tiny{$\pm$0.02} & 49.7\tiny{$\pm$0.3} \\
& Sotopia-7B & 2.78\tiny{$\pm$0.02} & 3.04\tiny{$\pm$0.03} & 50.0\tiny{$\pm$0.0} \\
& HumanLM-8B & 3.15\tiny{$\pm$0.01} & 2.92\tiny{$\pm$0.02} & 50.0\tiny{$\pm$0.0} \\
\midrule
\multicolumn{5}{l}{\textit{Ours}} \\
& MIMESIS-9B  & 3.70\tiny{$\pm$0.00} & 3.73\tiny{$\pm$0.03} & \textbf{38.7\tiny{$\pm$0.7}} \\
& MIMESIS-4B & 3.71\tiny{$\pm$0.02} & 3.52\tiny{$\pm$0.02} & 42.0\tiny{$\pm$1.5} \\
\bottomrule
\end{tabular}
}
\caption{\textbf{Message similarity and Turing distance on SimulatorArena.}
Writing-style and interaction-style similarity are rated on a 1--5 scale
(higher is better). Turing distance is $|a-50|$, where $a$ is the judge's
accuracy in percent when distinguishing human and simulated conversations;
lower is better, with zero corresponding to chance discrimination.}
\label{tab:simarena-realism}
\end{table}
\FloatBarrier

As can be seen from Table~\ref{tab:simarena-realism}, MIMESIS-9B achieves the lowest Turing distance among the evaluated models (38.7), followed by MIMESIS-4B (42.0) and Claude-Opus-5 (42.3).
The 3.6-point reduction over the strongest baseline improves realism under this judging protocol, although discrimination remains far from chance.
MIMESIS-9B also exceeds the best released-simulator baselines in writing-style similarity (3.70 vs.\ 3.31) and interaction-style similarity (3.73 vs.\ 3.28).
Frontier models retain the highest explicit style ratings, with Claude-Opus-5 scoring 3.92 for writing style and GPT-5.5 scoring 3.92 for interaction style. On this benchmark, frontier models score higher on explicit style similarity, while MIMESIS is harder to distinguish from human conversations. The two
metrics therefore rank the models differently: matching salient stylistic
features does not necessarily make a simulated conversation difficult to
distinguish from a human one. This discrepancy provides additional evidence
that surface-style similarity captures only one aspect of user realism.
% \clearpage
\subsection{PRISM: behavioral fidelity and next-turn realism}

We evaluate open-domain next-turn simulation on PRISM conversations
\citep{kirk2024prism}, covering 128 users and 880 conversational turns.
Table~\ref{tab:prism-usi} evaluates alignment using the four behavioral
dimensions defined in $\tau$-USI: communication style (D1),
information pattern (D2), clarification behavior (D3), and error reaction
(D4). Because PRISM does not provide the annotations required for outcome
calibration or survey-based evaluative alignment, we report the mean of
D1--D4 as the aggregate behavioral score.
MIMESIS-9B obtains the highest point estimate (80.1), close to Gemini-3.8-Flash (79.7) and 9.4 points above Osim-8B, the strongest released-simulator baseline (70.7).
Its clearest advantage is in communication style: the D1 score of 76.5 exceeds the strongest baseline, Claude-Opus-5, by 8.7 points.
MIMESIS-4B scores 75.7 overall and achieves the highest information-disclosure alignment (D2, 94.9).

\begin{table}[!htbp]\centering\small
\setlength{\tabcolsep}{15pt}
\resizebox{\textwidth}{!}{\begin{tabular}{llccccc}
\toprule
Type & Model & D1 Conv & D2 Info & D3 Clarif & D4 React & Mean \\
\midrule
Frontier & Claude-Opus-5 & 67.8\tiny{$\pm$1.3} & 87.4\tiny{$\pm$1.0} & 71.3\tiny{$\pm$1.5} & 74.6\tiny{$\pm$4.3} & 75.3\tiny{$\pm$1.9} \\
 & Gemini-3.8-Flash & 64.7\tiny{$\pm$0.8} & 93.9\tiny{$\pm$0.3} & \textbf{81.6\tiny{$\pm$2.5}} & 78.5\tiny{$\pm$5.8} & 79.7\tiny{$\pm$1.1} \\
 & GPT-5.5 & 60.3\tiny{$\pm$4.9} & 81.6\tiny{$\pm$0.2} & 67.7\tiny{$\pm$2.3} & 66.3\tiny{$\pm$2.2} & 69.0\tiny{$\pm$1.2} \\
\midrule
Released & Osim-8B & 61.8\tiny{$\pm$5.6} & 82.0\tiny{$\pm$0.7} & 69.7\tiny{$\pm$1.6} & 69.4\tiny{$\pm$2.6} & 70.7\tiny{$\pm$1.6} \\
 & Osim-4B & 37.2\tiny{$\pm$2.5} & 34.8\tiny{$\pm$1.5} & 57.5\tiny{$\pm$0.9} & 35.8\tiny{$\pm$2.6} & 41.3\tiny{$\pm$1.4} \\
 & Ditto-8B & 52.7\tiny{$\pm$2.8} & 58.8\tiny{$\pm$2.2} & 78.0\tiny{$\pm$0.8} & \textbf{80.5\tiny{$\pm$2.8}} & 67.5\tiny{$\pm$1.7} \\
 & HumanLM-8B & 48.0\tiny{$\pm$4.1} & 55.7\tiny{$\pm$2.5} & 50.5\tiny{$\pm$1.9} & 76.2\tiny{$\pm$6.4} & 57.6\tiny{$\pm$3.1} \\
 & Sotopia-7B & 36.3\tiny{$\pm$0.7} & 48.1\tiny{$\pm$0.2} & 44.4\tiny{$\pm$4.8} & 55.7\tiny{$\pm$1.0} & 46.1\tiny{$\pm$1.1} \\
\midrule
Pretrained & Qwen3.5-9B & 52.7\tiny{$\pm$1.0} & 71.3\tiny{$\pm$2.6} & 63.7\tiny{$\pm$2.0} & 60.8\tiny{$\pm$0.9} & 62.1\tiny{$\pm$0.3} \\
 & Qwen3-8B & 58.0\tiny{$\pm$4.2} & 63.4\tiny{$\pm$0.1} & 57.3\tiny{$\pm$4.7} & 63.6\tiny{$\pm$1.3} & 60.6\tiny{$\pm$2.1} \\
 & Qwen3.5-4B & 43.7\tiny{$\pm$5.7} & 69.1\tiny{$\pm$0.4} & 64.1\tiny{$\pm$2.0} & 55.2\tiny{$\pm$0.1} & 58.0\tiny{$\pm$1.4} \\
\midrule
Ours & {MIMESIS-9B} & \textbf{76.5\tiny{$\pm$3.9}} & 86.7\tiny{$\pm$0.3} & 77.5\tiny{$\pm$0.5} & 79.8\tiny{$\pm$8.8} & \textbf{80.1\tiny{$\pm$3.1}} \\
 & {MIMESIS-4B} & 52.9\tiny{$\pm$0.7} & \textbf{94.9\tiny{$\pm$0.6}} & 79.5\tiny{$\pm$1.8} & 75.4\tiny{$\pm$3.6} & 75.7\tiny{$\pm$1.3} \\
\bottomrule
\end{tabular}}
\caption{\textbf{Behavioral alignment on open-domain PRISM conversations.}
Scores measure Dice--S{\o}rensen agreement with human behavioral features
over 128 users and 880 turns. D1--D4 assess communication style, information
disclosure, clarification, and responses to agent errors; Mean averages
these four dimensions. Outcome calibration and survey alignment are
excluded because the required annotations are unavailable. Higher is
better; bold marks the highest mean in each column.
}
\label{tab:prism-usi}
\end{table}
\FloatBarrier

We also compare the realism of individual next-user responses using three LLM judges (Figure~\ref{fig:turing-pairwise-judges}).
MIMESIS receives more wins than losses in all 24 judge--baseline comparisons.
Against Osim-8B, its win/loss rates are 50.3/14.0\% under GPT-5.6, 60.6/15.2\% under Claude-Opus-5, and 42.6/24.2\% under Gemini-3.8-Flash.
Gemini assigns smaller preference margins across all eight baselines.
The narrowest margin is against Qwen3.5-9B, with 36.8\% wins and 30.1\% losses; this comparison is marked nonsignificant in the figure.
Appendix~\ref{app:qual-prism} provides a qualitative example.

\begin{figure}[!htbp]
  \centering
  \includegraphics[width=\textwidth]{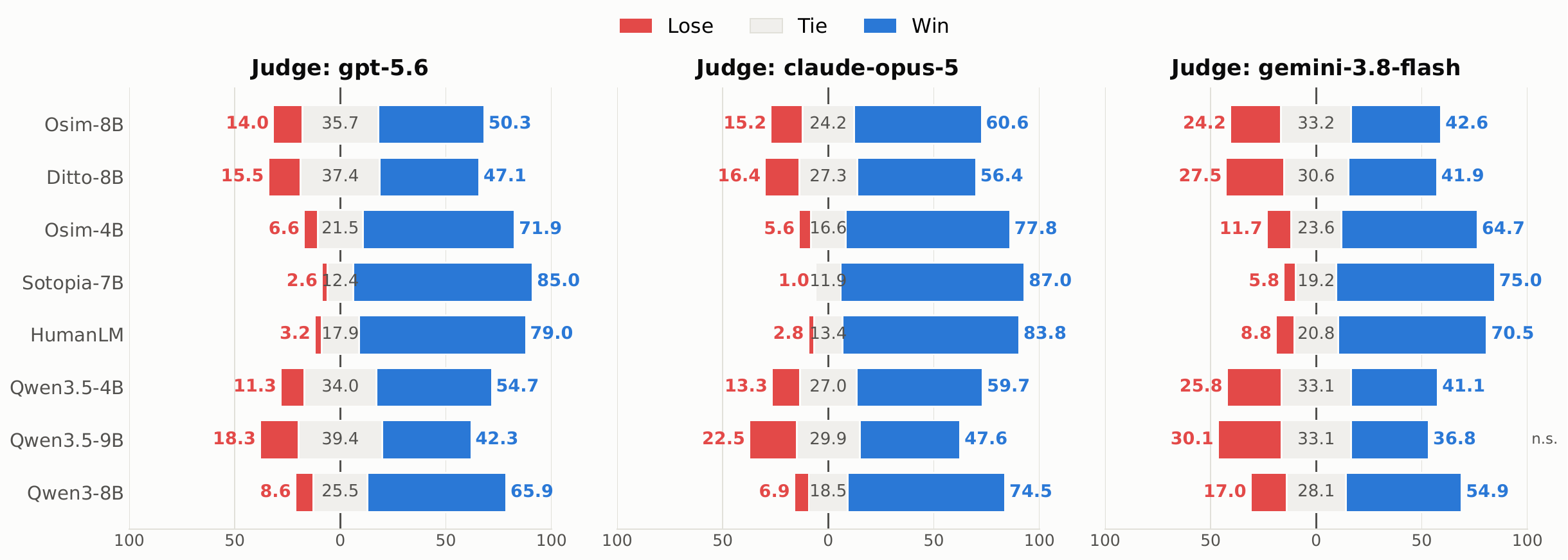}
\caption{\textbf{Pairwise next-user-response realism on PRISM.}
Each row compares MIMESIS with one baseline under the judge named above
the panel. Blue, gray, and red segments denote MIMESIS wins, ties, and
losses, respectively, with percentages annotated on the bars. MIMESIS has
more wins than losses in all displayed comparisons.}
  \label{fig:turing-pairwise-judges}
\end{figure}
\FloatBarrier

% \subsection{Interpreting uncertainty}
% SOUL reports the mean and standard error across three independently seeded evaluation runs.
% Uncertainty across evaluation replicates is distinct from variation across independently trained checkpoints.
% The remaining tables retain the uncertainty terms recorded in their evaluation summaries, while the learning curves in Appendix~\ref{app:simulator-analyses} describe individual training trajectories.
% AUTHOR CHECK: Define the estimator and replication unit of each non-SOUL
% uncertainty term, and provide the statistical test used for PRISM's n.s. label.

\section{Simulator Ablations and Training Dynamics}
\label{app:simulator-analyses}
The ablations follow the three stages of the MIMESIS training recipe: user-side mid-training, explicit reasoning supervision, and realistic-behavior training. We use them to distinguish improvements in initialization, reasoning behavior, and interaction-level behavioral control.
\subsection{Role of user-side mid-training}

Figure~\ref{fig:midtrain} examines whether user-side mid-training provides a
better initialization for simulator RL. Starting directly from the pretrained
backbone leads to early degradation in validation reward on many tasks,
whereas the mid-trained checkpoint is generally more stable during the same
optimization interval. This suggests that mid-training establishes
user-specific behavior before RL, reducing the amount of adaptation that must
be learned from task rewards alone. 
\begin{figure}[!ht]
    \centering{
\includegraphics[width=\columnwidth]{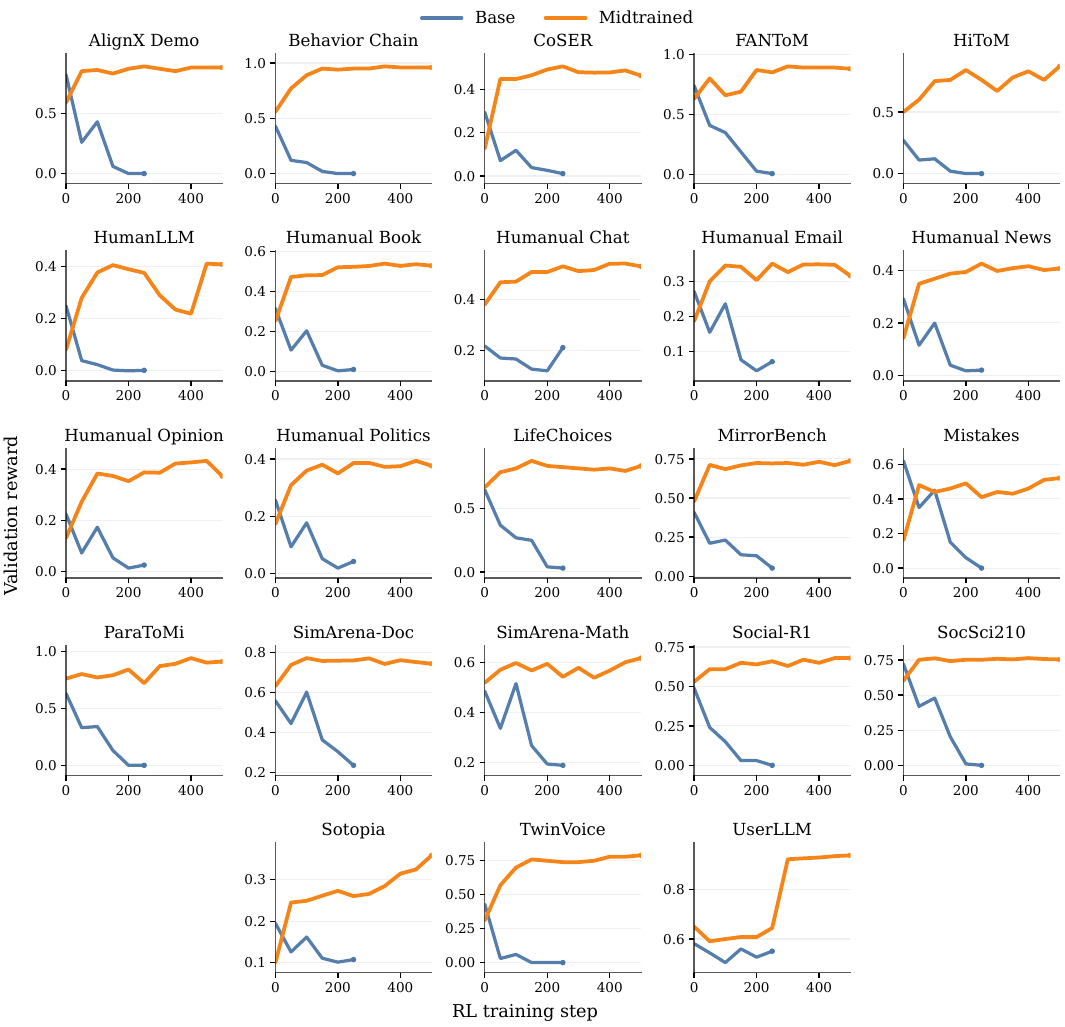}
\caption{\textbf{Effect of user-side mid-training on simulator RL.}
Validation reward across 23 tasks when RL starts from a pretrained backbone
(\textcolor{blue}{blue}) or a user-side mid-trained checkpoint (\textcolor{orange}{orange}). Mid-trained
initialization maintains higher rewards across most tasks during the
overlapping training interval. Each curve represents an individual run with
training horizons and vertical scales vary across tasks.}

    \label{fig:midtrain}}
\end{figure}
\FloatBarrier

\subsection{Role of explicit reasoning}
Figure~\ref{fig:thinking} examines reasoning generation and accuracy on \textsc{Mistakes} across training conditions.
The pretrained Qwen3.5-9B model scores 60.7 and generates a median of 540 words before answering.
After supervised adaptation, accuracy falls to 25.7 and the median length to 56 words.
Subsequent RL partially recovers accuracy to 36.0, while the median length falls to one word.
Enabling the thinking flag at inference leaves both measurements unchanged for this RL checkpoint.
Thus, changing the inference setting alone does not restore explicit reasoning in this condition. With ThoughtTrace supervision, MIMESIS-9B generates an explicit reasoning phase, with a median of 583 words, and achieves 63.0 accuracy.
The recovery of task performance coincides with the return of reasoning generation.
This comparison supports the use of explicit thought supervision during simulator adaptation.
\begin{figure}[!ht]
    \centering
    \vspace*{-2mm}
    \includegraphics[width=.85\textwidth]{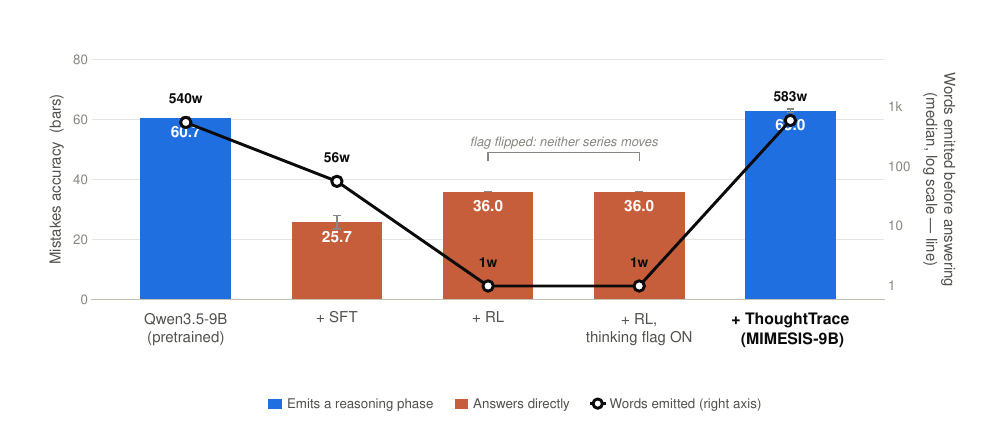}
    \vspace*{-2mm}
\caption{\textbf{Reasoning generation and task accuracy under different training conditions.}
Bars show accuracy on \textsc{Mistakes} (left axis), and the black line shows
the median number of words generated before answering (logarithmic right
axis). Blue bars indicate an explicit reasoning phase while orange bars indicate
direct answering. Enabling the inference-time thinking flag alone leaves
the RL checkpoint at 36.0 accuracy and one word, whereas the ThoughtTrace
condition reaches 63.0 accuracy and 583 words.}
    \vspace*{-2mm}
    \label{fig:thinking}
\end{figure}
\FloatBarrier

% \subsection{Effect of reasoning during simulator RL}
Figure~\ref{fig:think-ablation} compares SOUL scores logged during simulator RL with and without explicit reasoning at 4B and 9B parameters.
Reasoning-enabled runs attain higher scores at every logged checkpoint, with final gains of 1.8 and 1.5 points, respectively, after 500 optimization steps.
These training-time scores should be interpreted separately from the reported benchmark results, as the rollout and evaluation engines may use different decoding configurations.

\begin{figure}[!ht]
  \centering
    \vspace*{-2mm}
  \includegraphics[width=.85\columnwidth]{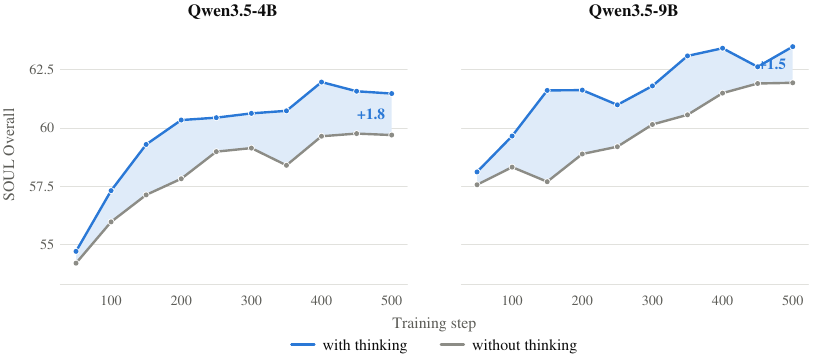}
% Figure 11 | fig:think-ablation | section/appendix/C_simulator_analyses.tex
\caption{\textbf{Effect of reasoning during simulator reinforcement learning.}
SOUL-Index trajectories with reasoning enabled (\textcolor{blue}{blue}) or disabled (\textcolor{gray}{gray})
for the 4B (left) and 9B (right) models. Reasoning-enabled runs achieve
higher scores throughout the displayed post-initialization interval, with
final gains of 1.8 and 1.5 points, respectively. }
  \label{fig:think-ablation}
\end{figure}
\FloatBarrier

\subsection{Contribution of realistic behavior training}

We ablate the realistic behavior training stage to isolate whether explicitly training on the 13 behavior categories improves user simulation beyond the rest of the MIMESIS training recipe.
As shown in Figure~\ref{fig:realistic_behavior}, incorporating realistic behavior training consistently improves simulator quality across three complementary evaluations: RealUserSim increases from 92.3 to 94.0 $(+1.7)$, $\tau$-USI increases from 73.9 to 80.2 $(+6.3)$, and SimulatorArena decreases from 41.0 to 38.7 $(-2.3)$, where lower is better.
The largest gain appears on $\tau$-USI, suggesting that explicit exposure to diverse interaction behaviors is particularly beneficial for capturing user behavior over multi-turn interactions.

\begin{figure}[!ht]
    \centering
    \includegraphics[width=\textwidth]{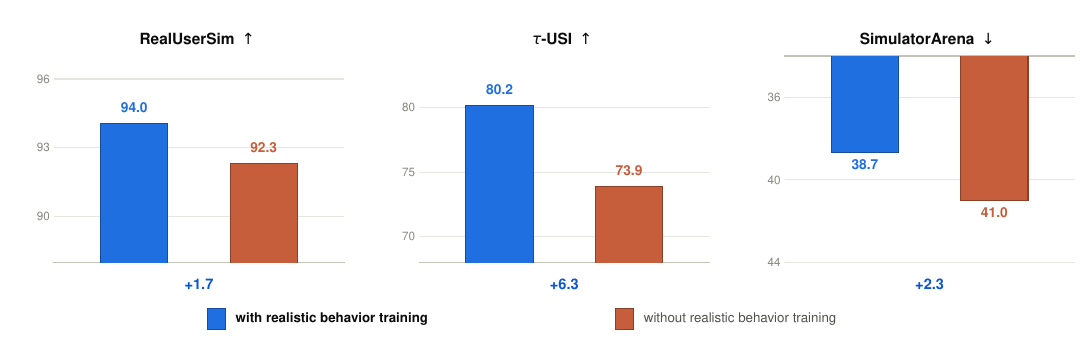}
    \caption{
    \textbf{Effect of realistic behavior training on user-simulation quality.}
    We compare otherwise identical MIMESIS training runs with and without the realistic behavior objective derived from the 13 behavior categories.
    Behavior training improves RealUserSim by 1.7 points and $\tau$-USI by 6.3 points, while reducing SimulatorArena by 2.3 points (lower is better).
    The consistent improvements across trajectory-level behavioral fidelity, interactive user simulation, and simulator--human agreement show that explicitly training for realistic behavioral variation complements the remaining simulator-training recipe.
    }
    \label{fig:realistic_behavior}
\end{figure}
\FloatBarrier
\begin{figure}[!ht]
  \centering
  \includegraphics[width=\linewidth]{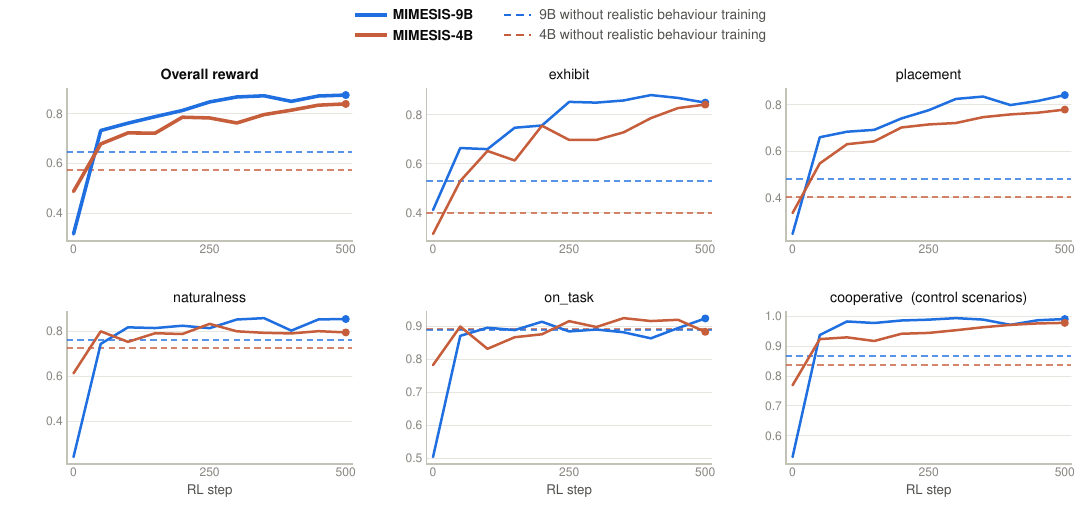}
  \caption{%
    \textbf{Decomposition of the realistic-behavior reward during RL.}
    Solid curves show \textsc{MIMESIS}-9B and \textsc{MIMESIS}-4B over
    500 RL steps on a held-out split containing 195 behavior scenarios
    (15 per category) and 49 neutral control scenarios.
    The leftmost panel reports the aggregate reward; the remaining panels
    show its components. \emph{exhibit} measures whether the assigned
    behavior is expressed, \emph{placement} whether it occurs at an
    appropriate point in the interaction, \emph{naturalness} whether the
    interaction remains natural, \emph{on\_task} whether the simulator
    remains consistent with the task, and \emph{cooperative} whether the
    simulator remains cooperative in neutral control scenarios.
    Dashed horizontal levels show evaluations of checkpoints trained with
    the same RL mixture but without the realistic-behavior task.
    %Each panel uses its own vertical scale.
  }
  \label{fig:behav-components}
\end{figure}

Training on the realistic-behavior mixture increases the aggregate behavior
reward from $0.647$ to $0.875$ for \textsc{MIMESIS}-9B and from $0.572$ to
$0.839$ for \textsc{MIMESIS}-4B. As shown in
Figure~\ref{fig:behav-components}, the largest gains come from behavior
exhibition ($+0.32$ at 9B and $+0.44$ at 4B) and temporal placement
($+0.36$ and $+0.37$, respectively). Naturalness also rises
($+0.07$--$0.10$), and task consistency is preserved ($+0.03$ at 9B and
$-0.01$ at 4B). The realistic-behavior objective thus teaches the simulator
to express the assigned behavior at an appropriate point in the interaction
while keeping its replies natural and on task.
 Importantly,
these gains do not materially reduce task adherence.

The two model sizes reach similar final scores on behavior exhibition
($0.849$ for 9B versus $0.841$ for 4B), but \textsc{MIMESIS}-9B performs
better on temporal placement ($0.842$ versus $0.779$). Both models achieve
$0.98$--$0.99$ on the neutral control scenarios, indicating that training
on difficult user behaviors does not make the simulators uniformly
uncooperative when no such behavior is specified. The early training
dynamics also reflect properties of the initialization. In particular, the
9B SFT checkpoint initially produces explicit reasoning that is penalized
by the behavior judge, and $41\%$ of its initial rollouts describe the
assigned behavior rather than expressing it in the interaction
(e.g., ``as a frustrated customer, I will now\ldots''), which triggers the
reward penalty. Both effects largely disappear within the first 50 RL
steps.
\FloatBarrier

\section{Agent Training and Evaluation}
\label{app:agent-evaluation}
This section specifies the Stage II training controls behind two comparisons in the main paper: changing the training user while holding multi-turn GRPO fixed, and adding CSD while holding the MIMESIS training user fixed. We then report environment-level results and robustness checks.
\subsection{Training setup and baselines}
We freeze the learned simulator and train a Qwen3-8B agent through multi-turn interaction in the task setting of UserRL~\citep{qian2025userrl}.
Agent training spans \textsc{TravelGym}, \textsc{TurtleGym}, \textsc{FunctionGym}, \textsc{TauGym}, and \textsc{PersuadeGym}, while evaluation additionally includes \textsc{IntentionGym}, \textsc{TelepathyGym}, and \textsc{SearchGym}, which are held out from agent training.
The user simulator remains fixed throughout agent optimization. We compare four agent-training conditions:
\begin{itemize}[leftmargin=1.4em,itemsep=2pt,topsep=4pt]
    \item \textbf{Base} denotes the Qwen3-8B agent before multi-turn reinforcement learning.
    \item \textbf{UserRL} follows the multi-turn RL procedure of UserRL~\citep{qian2025userrl}, using GPT-5.5 as the training user simulator (GRPO with GPT-5.5 in the main paper).
    \item \textbf{UserRL$^+$} keeps the same multi-turn RL procedure but replaces GPT-5.5 with MIMESIS as the training user simulator  (GRPO w. MIMESIS-9B in the main paper).
    \item \textbf{CSD} further augments MIMESIS-based agent training with our \emph{Coached On-Policy Self-Distillation} objective, which distills simulator-derived coaching signals into the agent.
\end{itemize}

We additionally compare against \textbf{InfoPO}~\citep{kong2026infopo}, a multi-turn policy-optimization method that addresses sparse trajectory-level supervision by assigning turn-level credit according to the information gained from interaction.
Specifically, InfoPO measures how observed feedback changes the policy's subsequent action distribution relative to a masked-feedback counterfactual, and adaptively combines this information-gain signal with the outcome-based advantage. For a controlled comparison with CSD, we train InfoPO using MIMESIS-9B as
the user simulator as  we found it yielded a more robust agent. This holds the training-user distribution fixed so that
InfoPO and CSD differ in their optimization objectives rather than in
simulator choice. Table~\ref{tab:rl-hparams-agent} reports the full training configuration.

These conditions separate two questions central to our study.
Comparing \textbf{UserRL} with \textbf{UserRL$^+$} isolates the effect of replacing a general-purpose LLM user with our learned simulator while holding the agent-training procedure fixed.
Comparing \textbf{UserRL$^+$} with \textbf{CSD} then isolates the additional benefit of exploiting simulator-derived feedback through the coaching objective.
InfoPO serves as an additional strong multi-turn RL baseline with an alternative mechanism for providing fine-grained credit during interaction.

\begin{table}[!htbp]
\centering
\small
\setlength{\tabcolsep}{3.5pt}

\caption{\textbf{Agent training configuration.} A Qwen3-8B actor is optimized for 100 steps against GPT-5.5 or MIMESIS-9B. CSD augments multi-turn GRPO with the coaching loss in Equation~\ref{eq:agent-combined-loss}; its auxiliary weight and gate sharpness are $\alpha=0.01$ and $\beta=5.0$.}
\label{tab:rl-hparams-agent}
\begin{tabularx}{1. \textwidth}{@{}l>{\raggedright\arraybackslash}X l>{\raggedright\arraybackslash}X@{}}
\toprule
\textbf{Parameter} & \textbf{Value} & \textbf{Parameter} & \textbf{Value} \\
\midrule
Actor  & Qwen3-8B        & Rollout batch        & 64 prompts  \\
   Parameter dtype        & \texttt{bfloat16}                  & PPO mini-batch       & 8 prompts  \\  
Policy objective       & GRPO    & Batch balancing      & enabled \\
Group size $G$         & 8          &  Dynamic batching & $9{,}344$ tokens / GPU         \\
Turn-level credit      & reward-to-go (R2G)    & Max prompt length    & 1{,}152 tokens \\
Trajectory score       & reward-to-go (R2G)    & Max response length  & 8{,}192 tokens \\
Discount $\gamma$      & 0.8                   & Max turns per episode & 16 \\
GAE $\lambda$          & 1.0                   & Training temperature & 1.0 (top-$p$ $1.0$, top-$k$ $-1$) \\
Action credit ratio    & 0.8                   & Validation temperature & 0 \\
Dual-clip $c$          & 3.0                   & Training gyms        & Travel, $\tau$-bench, Turtle, \\
PPO clip $(\epsilon_{\text{low}},\epsilon_{\text{high}})$ & $(0.2,\,0.2)$ & & Persuade, Function \\
KL penalty             & disabled              & User simulator       & MIMESIS-9B \\
Auxiliary weight $\alpha$ & 0.01   &   Rollout engine & SGLang, synchronous, TP $=1$ \\
Gate sharpness $\beta$ & 5.0                   & Hardware             & $2\times8$ A100-80GB, FSDP \\
Hintable turns / episode $k$ & 16              & Sequence parallel (Ulysses) & 1 \\
Optimizer              & AdamW                 &  Training steps & 100 \\
Learning rate          & $1\times10^{-6}$ &     GPU memory utilization & 0.50 \\
Weight decay           & 0.01     &  Micro-batch per GPU  & 2   \\
Gradient clipping      & 1.0                   &                      & \\
\bottomrule
\end{tabularx}
\end{table}
\FloatBarrier
\subsection{CSD implementation details}
For each sampled agent response, the simulator generates a thought and the subsequent public user utterance.
These outputs serve complementary roles: the thought expresses the simulator's interpretation and remaining needs, while the utterance records how it continues the interaction.
A coaching model uses both outputs, together with the preceding dialogue and the sampled response, to identify how the agent could better address those needs and respond to interaction feedback.
The resulting note conditions a teacher copy of the agent during the parameter update.

The teacher and student score the same on-policy response, with the teacher additionally receiving the coaching note.
Their token-level log-probability differences determine the weights in the CSD auxiliary loss.
This provides dense supervision alongside the scalar task reward without requiring a newly generated replacement response.
The note is retrospective: it is constructed after the simulator's next turn and is unavailable when the original agent response is sampled.
Only the public dialogue conditions the rollout and deployed policies.
The generated thoughts are simulator outputs; the procedure assumes no access to real users' thoughts at deployment.

\subsection{Generalization across evaluation users}
\label{app:agent-transfer}
Tables~\ref{tab:agent-released} and~\ref{tab:agent-frontier} report agent performance on five held-in and three held-out environments under nine user models, none of which is used during agent training.
Replacing GPT-5.5 with MIMESIS as the training simulator (UserRL to UserRL$^+$) improves the aggregate score under every evaluation user, and CSD provides further gains in all nine cases (Table~\ref{tab:agent-summary}).
The mean across users increases from 26.10 to 29.54 and then to 31.09, corresponding to mean per-user improvements of 3.43 and 1.56 points.
% \clearpage

\begin{table}[!ht]
\centering
\small
\setlength{\tabcolsep}{15pt}
\begin{tabular}{@{}lrrrrr@{}}
\toprule
Evaluation user & UserRL & UserRL$^+$ & CSD & $\Delta_{\rm sim}$ & $\Delta_{\rm CSD}$ \\
\midrule
GPT-5.6 & 26.85 & 31.10 & \textbf{32.39} & +4.25 & +1.29 \\
Claude-Opus-5 & 25.75 & 27.21 & \textbf{29.78} & +1.46 & +2.57 \\
Claude-Sonnet-5 & 26.53 & 29.73 & \textbf{30.97} & +3.20 & +1.24 \\
Gemini-3.8-Flash & 26.33 & 30.88 & \textbf{31.09} & +4.55 & +0.21 \\
Kimi-K3 & 24.51 & 28.21 & \textbf{29.92} & +3.70 & +1.71 \\
Ditto-8B & 27.59 & 29.76 & \textbf{32.08} & +2.17 & +2.32 \\
Osim-8B & 25.68 & 29.27 & \textbf{29.72} & +3.59 & +0.45 \\
HumanLM-8B & 26.96 & 30.67 & \textbf{32.53} & +3.71 & +1.86 \\
Sotopia-7B & 24.72 & 29.00 & \textbf{31.35} & +4.28 & +2.35 \\
\midrule
Mean across users & 26.10 & 29.54 & \textbf{31.09} & +3.43 & +1.56 \\
\bottomrule
\end{tabular}
\caption{
\textbf{Agent generalization across nine evaluation user models.}
Scores are the per-user averages from Tables~\ref{tab:agent-released}
and ~\ref{tab:agent-frontier}. $\Delta_{\rm sim}$ measures the effect of
replacing GPT-5.5 with MIMESIS-9B as the training user
(UserRL$^+$ $-$ UserRL), while $\Delta_{\rm CSD}$ measures the additional
gain from CSD with the training simulator held fixed
(CSD $-$ UserRL$^+$). The final row reports the unweighted mean across
evaluation users.
}
\label{tab:agent-summary}
\end{table}

Mean rank (MR) complements the aggregate scores by comparing methods within each environment, without depending on differences in reward scale.
UserRL$^+$ achieves a lower MR than UserRL under all nine evaluation users, while CSD obtains the lowest MR under eight.
The exception is Gemini-3.8-Flash, where UserRL$^+$ ranks better than CSD (2.00 vs.\ 2.12), despite CSD's slightly higher aggregate score.
Averaging MR across users gives 3.42 for UserRL, 2.15 for UserRL$^+$, and 1.75 for CSD.
The consistent ordering across distinct evaluation users supports transfer beyond the particular simulator used for training.
Claude-Opus-5 results on PersuadeGym are affected by simulator refusals, as detailed in Appendix~\ref{app:simulator-refusals}.
% \clearpage

\begin{table}[!htbp]
\centering
\small
\begin{tabular}{@{}lrrrrrrrrrr@{}}
\toprule
& \multicolumn{5}{c}{Held-in environments} & \multicolumn{3}{c}{Held-out environments} & & \\
\cmidrule(lr){2-6}\cmidrule(lr){7-9}
Method & Travel & Turtle & Function & Tau & Persuade & Intention & Telepathy & Search & Avg. & MR $\downarrow$ \\
\midrule
\multicolumn{11}{l}{\textit{Evaluation user simulator: Ditto-8B}} \\
Base & 6.63 & 19.48 & 0.00 & 3.03 & 50.40 & 183.75 & 46.34 & 30.40 & 19.53 & 4.62 \\
InfoPO & 12.65 & 27.81 & 8.97 & 12.12 & 44.05 & 191.50 & 53.66 & 42.40 & 26.73 & 3.38 \\
UserRL & 14.52 & 26.25 & 6.41 & \textbf{14.55} & 42.46 & 174.50 & 58.54 & 45.60 & 27.59 & 3.19 \\
UserRL$^+$ & 11.69 & 25.73 & \textbf{19.23} & \textbf{14.55} & \textbf{63.49} & \textbf{206.25} & 56.10 & 49.60 & 29.76 & 2.19 \\
CSD & \textbf{17.62} & \textbf{30.63} & 15.38 & 12.73 & 56.75 & 201.25 & \textbf{60.98} & \textbf{51.20} & \textbf{32.08} & \textbf{1.62} \\
\midrule
\multicolumn{11}{l}{\textit{Evaluation user simulator: Osim-8B}} \\
Base & 7.98 & 15.31 & 0.00 & 3.03 & 49.60 & 176.50 & 41.46 & 33.60 & 19.84 & 4.25 \\
InfoPO & 13.08 & 23.23 & 6.41 & \textbf{10.30} & 39.29 & 173.00 & 56.10 & 44.00 & 25.59 & 3.31 \\
UserRL & 15.28 & 22.40 & 3.85 & \textbf{10.30} & 34.92 & 147.50 & 56.10 & 48.00 & 25.68 & 3.50 \\
UserRL$^+$ & 13.92 & 26.67 & \textbf{23.08} & \textbf{10.30} & 36.11 & \textbf{200.25} & \textbf{65.85} & 48.00 & 29.27 & 2.06 \\
CSD & \textbf{16.83} & \textbf{31.25} & 16.67 & 6.67 & \textbf{52.38} & 174.75 & 60.98 & \textbf{52.00} & \textbf{29.72} & \textbf{1.88} \\
\midrule
\multicolumn{11}{l}{\textit{Evaluation user simulator: HumanLM-8B}} \\
Base & 5.79 & 18.85 & 2.56 & 3.03 & 46.03 & 163.00 & 41.46 & 32.80 & 18.40 & 4.69 \\
InfoPO & 12.77 & 28.02 & 6.41 & \textbf{11.52} & 41.27 & 187.75 & 60.98 & 43.20 & 26.64 & 3.00 \\
UserRL & 14.77 & 27.50 & 3.85 & 9.09 & 38.49 & 180.75 & \textbf{63.41} & 45.60 & 26.96 & 3.44 \\
UserRL$^+$ & 16.11 & 26.87 & 17.95 & 9.70 & 46.03 & \textbf{209.25} & \textbf{63.41} & 49.60 & 30.67 & 2.25 \\
CSD & \textbf{17.17} & \textbf{30.94} & \textbf{20.51} & 10.91 & \textbf{65.87} & 208.00 & 56.10 & \textbf{52.00} & \textbf{32.53} & \textbf{1.62} \\
\midrule
\multicolumn{11}{l}{\textit{Evaluation user simulator: Sotopia-7B}} \\
Base & 6.13 & 16.56 & 1.28 & 3.03 & 48.81 & 177.00 & 36.59 & 35.20 & 19.12 & 4.50 \\
InfoPO & 13.12 & 25.10 & 6.41 & 6.67 & 51.98 & 174.00 & 56.10 & 46.40 & 25.97 & 3.31 \\
UserRL & 9.89 & 23.96 & 6.41 & \textbf{9.09} & 46.43 & 170.25 & 65.85 & 45.60 & 24.72 & 3.56 \\
UserRL$^+$ & 12.63 & 25.21 & \textbf{25.64} & 7.27 & 59.13 & \textbf{186.25} & 63.41 & 51.20 & 29.00 & 2.12 \\
CSD & \textbf{17.47} & \textbf{29.90} & 17.95 & 7.88 & \textbf{64.68} & 174.75 & \textbf{70.73} & \textbf{53.60} & \textbf{31.35} & \textbf{1.50} \\
\bottomrule
\end{tabular}
\caption{\textbf{Agent evaluation with released user simulators.} The first five environments are held in during agent training and the final three are held out. Avg. is the mean across environments; MR is mean within-environment rank, where lower is better. %The reported Avg. values are retained. Scores have environment-specific scales, so Avg. should not be reconstructed as the arithmetic mean of the displayed task scores.
}
\label{tab:agent-released}
\end{table}
\begin{table}[!ht]
\centering
\small
\begin{tabular}{@{}lrrrrrrrrrr@{}}
\toprule
& \multicolumn{5}{c}{Held-in environments} & \multicolumn{3}{c}{Held-out environments} & & \\
\cmidrule(lr){2-6}\cmidrule(lr){7-9}
Method & Travel & Turtle & Function & Tau & Persuade & Intention & Telepathy & Search & Avg. & MR $\downarrow$ \\
\midrule
\multicolumn{11}{l}{\textit{Evaluation user simulator: GPT-5.6}} \\
Base & 6.59 & 16.98 & 1.28 & 3.03 & 57.54 & 169.50 & 43.90 & 32.80 & 19.42 & 4.75 \\
InfoPO & 13.62 & 25.73 & 8.97 & 16.97 & 53.17 & 170.00 & 51.22 & 45.60 & 27.70 & 3.44 \\
UserRL & 13.79 & 25.52 & 6.41 & 12.73 & 45.24 & 175.00 & 51.22 & 46.40 & 26.85 & 3.56 \\
UserRL$^+$ & 13.77 & 28.65 & \textbf{19.23} & \textbf{19.39} & 61.90 & \textbf{193.75} & \textbf{60.98} & 48.00 & 31.10 & 1.69 \\
CSD & \textbf{17.50} & \textbf{30.52} & 14.10 & 15.15 & \textbf{68.65} & 188.00 & \textbf{60.98} & \textbf{52.00} & \textbf{32.39} & \textbf{1.56} \\
\midrule
\multicolumn{11}{l}{\textit{Evaluation user simulator: Claude-Opus-5}} \\
Base & 6.50 & 19.27 & 2.56 & 0.61 & 0.00$^{\dagger}$ & 178.75 & 24.39 & 30.40 & 16.08 & 4.71$^{\ddagger}$ \\
InfoPO & 12.28 & 28.44 & 6.41 & \textbf{26.06} & 0.00$^{\dagger}$ & 173.00 & 46.34 & 46.40 & 26.30 & 3.21$^{\ddagger}$ \\
UserRL & 13.19 & 28.96 & 7.69 & 21.21 & 0.00$^{\dagger}$ & 162.50 & 48.78 & 46.40 & 25.75 & 2.93$^{\ddagger}$ \\
UserRL$^+$ & 14.20 & 27.40 & 15.38 & 19.39 & 0.00$^{\dagger}$ & \textbf{189.50} & 39.02 & 47.20 & 27.21 & 2.57$^{\ddagger}$ \\
CSD & \textbf{17.99} & \textbf{31.88} & \textbf{16.67} & 18.18 & 0.00$^{\dagger}$ & 186.75 & \textbf{53.66} & \textbf{48.80} & \textbf{29.78} & \textbf{1.57}$^{\ddagger}$ \\
\midrule
\multicolumn{11}{l}{\textit{Evaluation user simulator: Claude-Sonnet-5}} \\
Base & 6.11 & 18.02 & 0.00 & 1.21 & \textbf{54.76} & 172.75 & 41.46 & 30.40 & 18.47 & 4.38 \\
InfoPO & 14.58 & 27.08 & 7.69 & 16.97 & 47.62 & 178.25 & 53.66 & 44.00 & 28.12 & 2.75 \\
UserRL & 12.42 & 28.33 & 7.69 & 15.15 & 38.49 & 159.25 & \textbf{65.85} & 46.40 & 26.53 & 3.31 \\
UserRL$^+$ & 13.98 & 25.52 & \textbf{16.67} & 16.36 & 42.86 & \textbf{190.50} & 53.66 & \textbf{52.80} & 29.73 & 2.44 \\
CSD & \textbf{17.59} & \textbf{32.81} & 15.38 & \textbf{18.79} & 28.17 & 176.00 & 60.98 & 51.20 & \textbf{30.97} & \textbf{2.12} \\
\midrule
\multicolumn{11}{l}{\textit{Evaluation user simulator: Gemini-3.8-Flash}} \\
Base & 6.60 & 19.27 & 0.00 & 1.21 & \textbf{32.94} & 179.50 & 39.02 & 34.40 & 18.51 & 4.25 \\
InfoPO & 12.83 & 26.67 & 11.54 & 21.21 & 29.76 & 178.50 & \textbf{68.29} & 43.20 & 28.03 & 3.00 \\
UserRL & 13.47 & 28.65 & 5.13 & 18.18 & 22.22 & 168.50 & 48.78 & 46.40 & 26.33 & 3.62 \\
UserRL$^+$ & 13.85 & 28.02 & \textbf{21.79} & \textbf{25.45} & 26.59 & \textbf{190.25} & 58.54 & 50.40 & 30.88 & \textbf{2.00} \\
CSD & \textbf{17.25} & \textbf{31.98} & 17.95 & 18.79 & 13.89 & 186.50 & 65.85 & \textbf{52.00} & \textbf{31.09} & 2.12 \\
\midrule
\multicolumn{11}{l}{\textit{Evaluation user simulator: Kimi-K3}} \\
Base & 5.61 & 19.90 & 1.28 & 3.03 & 12.30 & \textbf{170.50} & 36.59 & 33.60 & 17.06 & 4.12 \\
InfoPO & 12.34 & 28.54 & 6.41 & 17.58 & 11.90 & 160.75 & 53.66 & 42.40 & 24.76 & 3.38 \\
UserRL & 12.17 & 29.69 & 8.97 & 16.97 & 8.73 & 145.75 & 53.66 & 45.60 & 24.51 & 3.69 \\
UserRL$^+$ & 14.09 & 27.19 & \textbf{25.64} & 17.58 & \textbf{13.10} & 170.00 & 56.10 & 48.00 & 28.21 & 2.06 \\
CSD & \textbf{16.91} & \textbf{31.35} & 17.95 & \textbf{19.39} & 9.52 & 168.75 & \textbf{65.85} & \textbf{50.40} & \textbf{29.92} & \textbf{1.75} \\
\bottomrule
\end{tabular}
\caption{\textbf{Agent evaluation with frontier/API user models.} The first five environments are held in; the final three are held out from agent training. Higher is better, and bold marks the highest score within each user--environment block. $\dagger$: Opus refuses the PersuadeGym user role, producing zero-reward fallbacks; these zeros do not measure agent ability (Appendix~\ref{app:simulator-refusals}).}
\label{tab:agent-frontier}
\end{table}
% \FloatBarrier

\FloatBarrier
% AUTHOR CHECK: Document the exact normalization and weighting behind Avg.,
% selected checkpoint IDs, evaluation instances, and independent-run uncertainty.
\clearpage
\subsection{Robustness to alternative training simulators}
We extend the CSD--GRPO comparison in Figure~\ref{fig:sdpo_vs_grpo} to agent training with DeepSeek-V4.1-Flash and Kimi-K3, keeping the agent backbone fixed.
Figure~\ref{fig:sdpo_vs_grpo_simulators} reports training and validation rewards over 100 optimization steps.
CSD achieves higher rewards in both panels during the later stages of training with either simulator.
With DeepSeek-V4.1-Flash, its validation reward remains above GRPO from step 15 onward.
With Kimi-K3, the training and validation curves separate more substantially after approximately 60 steps.
These runs show that CSD's optimization benefits extend to interactions generated by different user simulators.
\begin{figure}[!htbp]
    \centering

    \begin{subfigure}[t]{0.95\textwidth}
        \centering
        \includegraphics[width=\linewidth]{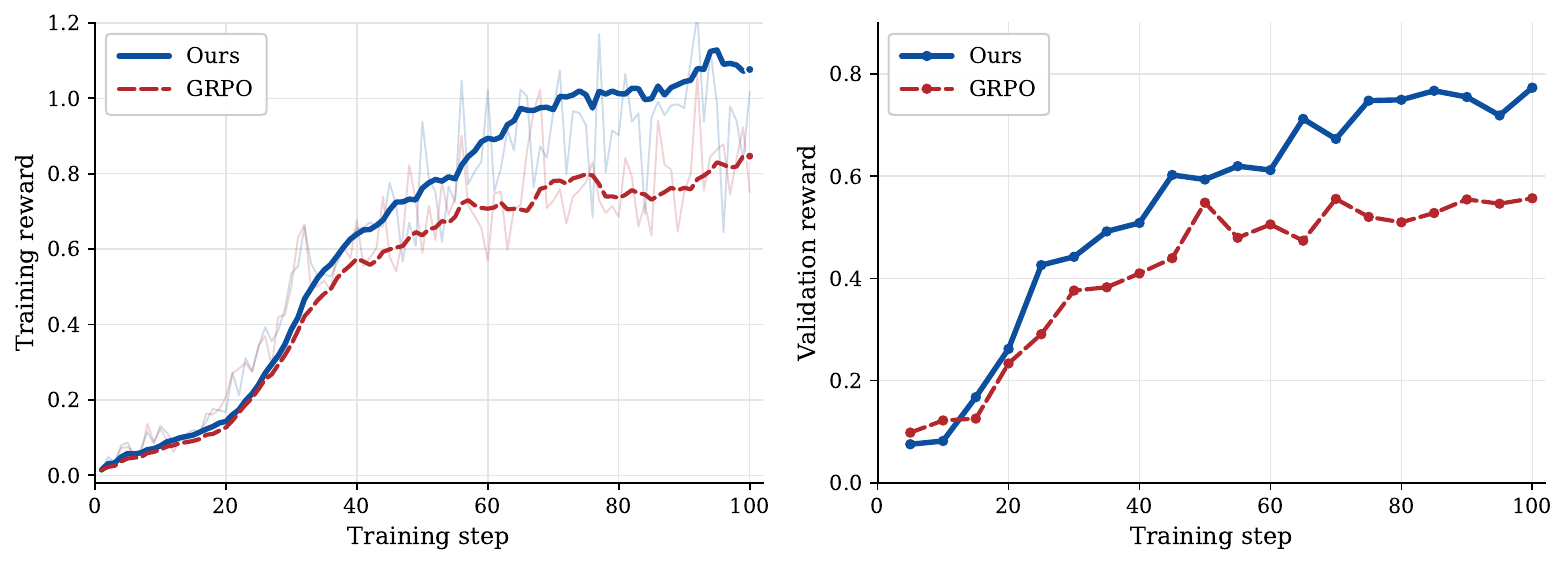}
        \caption{DeepSeek-V4.1-Flash as the training user simulator.}
        \label{fig:sdpo_vs_grpo_deepseek}
    \end{subfigure}

    \vspace{3mm}

    \begin{subfigure}[t]{0.95\textwidth}
        \centering
        \includegraphics[width=\linewidth]{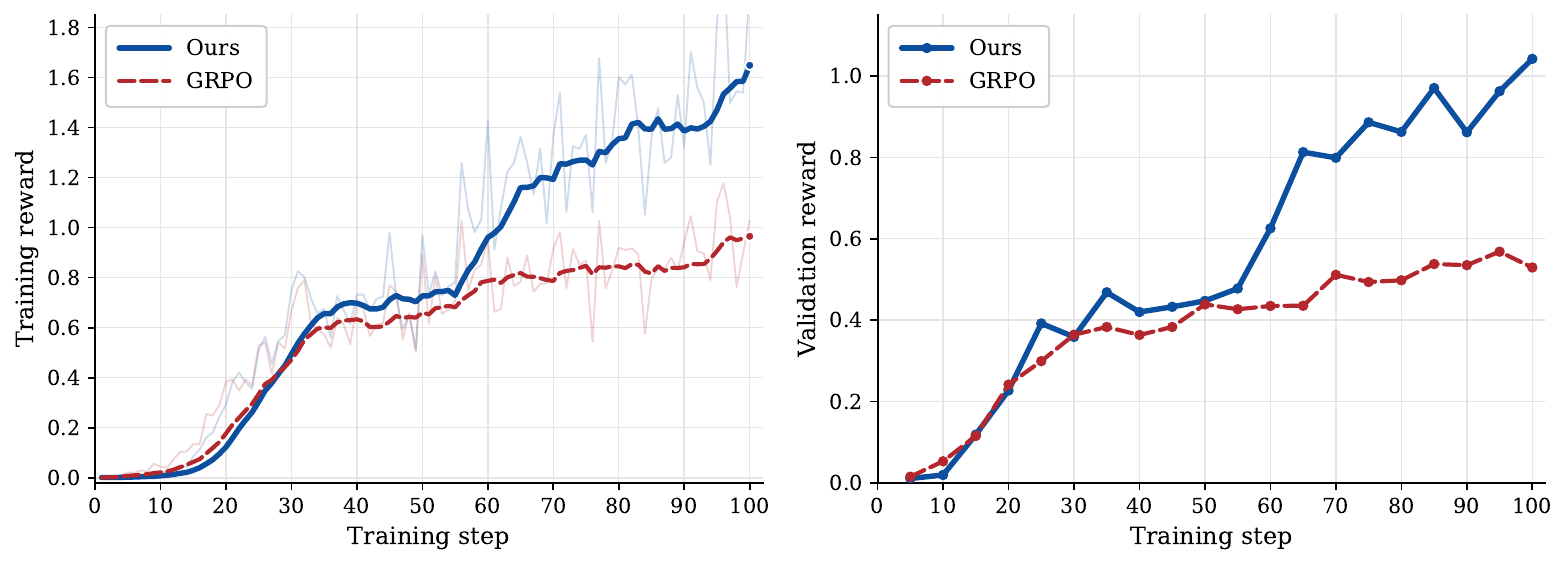}
       \caption{Kimi-K3 as the training user simulator.}

        \label{fig:sdpo_vs_grpo_kimi}
    \end{subfigure}

\caption{\textbf{CSD under alternative training user simulators.}
Training reward (left) and validation reward (right) over 100 optimization
steps using (a) DeepSeek-V4.1-Flash and (b) Kimi-K3 to generate user turns.
Holding the Qwen3-8B agent fixed, we compare CSD with GRPO using DeepSeek-V4.1-Flash or Kimi-K3 as the training user. CSD achieves higher late-stage training and validation reward in both settings, suggesting that its optimization effect is not specific to MIMESIS.}

    \label{fig:sdpo_vs_grpo_simulators}
\end{figure}
% \FloatBarrier
\section{Theoretical Analysis of CSD}
\label{app:csd-theory}

CSD uses retrospective coaching to supplement the turn-level GRPO
advantage with token-dependent supervision. We characterize the policy
favored by this auxiliary loss and establish a sufficient condition for
a GRPO+CSD step to achieve higher expected return than the corresponding
GRPO step. %Following the gradient analysis of feedback distillation in SDPO~\citep{hubotter2026sdpo}, we retain the detached sigmoid weighting shared by CSD and SDAR~\citep{lu2026sdar}.

\subsection{Policy improvement from coaching}
\label{app:csd-setting}

Fix the simulator, initial task distribution, and rollout policy
$p=\pi_{\theta_0}$. Let $s_m$ be the public history at agent-token decision
$m$, including the current response prefix and decision index. We assume
bounded rewards, a finite token set on which $p$ has full support, and
at most $M$ decisions, padding terminated conversations with zero-reward
absorbing states. Write $\Gamma_1=1$ and
$\Gamma_{m+1}=\Gamma_m\gamma_m$ for transition discounts
$\gamma_m\in[0,1]$; turn-level discounting by $\gamma$ corresponds to
$\Gamma_m=\gamma^{t(m)-1}$. Define
\begin{equation}
 J(\pi)=\mathbb E_\pi\sum_{m=1}^M\Gamma_m r_m,\qquad
 Q_m^p(s,a)=\mathbb E[r_m+\gamma_m V_{m+1}^p(s_{m+1})\mid s_m=s,a_m=a],
 \label{eq:csd-app-values}
\end{equation}
where $V_m^p(s)=\mathbb E_{a\sim p}Q_m^p(s,a)$,
$V_{M+1}^p=0$, and $A_m^p=Q_m^p-V_m^p$. These values average over the
simulator's private state; all subsequent agent decisions use only the
public history.

The note used to score a token is constructed after the response and can
depend on both the chosen token and the simulator's subsequent reaction.
Let $F\sim K_p(\cdot\mid s,a)$ collect this feedback and the remaining
randomness in its loss coefficient. For the coached teacher $q_F$, define
\begin{equation}
 W(s,a,F)=c(s,a,F)\,
 \sigma\!\left(\beta[\log q_F(a\mid s)-\log p(a\mid s)]\right),
 \qquad w(s,a)=\mathbb E[W\mid s,a],
 \label{eq:csd-app-weight}
\end{equation}
where $\beta>0$, $q_F$ has full support, and $c\geq0$ includes the coaching
mask and loss normalization. Assume that
$Z(s)=\mathbb E_{a\sim p}w(s,a)$ is finite. Holding the rollout
distribution and weights fixed gives the conditional weighted-likelihood
loss~\citep{peng2019awr,lu2026sdar} and its population aggregation:
\begin{equation}
 \ell_s(\pi)=-\mathbb E_{a\sim p,F\sim K_p(\cdot\mid s,a)}
                  [W(s,a,F)\log\pi(a\mid s)],\qquad
 \mathcal L_{\mathrm{CSD}}(\theta)=
       \sum_{m=1}^M\mathbb E_p[\ell_{s_m}(\pi_\theta)].
 \label{eq:csd-app-surrogate}
\end{equation}
\begin{csdproposition}[Policy preferences induced by CSD]
\label{prop:csd-app-target}
For $Z(s)>0$, the minimizer of $\ell_s$ over all token distributions is
\begin{equation}
 p^+(a\mid s)=\frac{p(a\mid s)w(s,a)}{Z(s)},\qquad
 \mathbb E_{a\sim p^+}A_m^p(s,a)
 =\frac{\operatorname{Cov}_{a\sim p}(Q_m^p(s,a),w(s,a))}{Z(s)}.
 \label{eq:csd-app-target}
\end{equation}
Set $p^+=p$ and the covariance ratio to zero when $Z(s)=0$. Then
\begin{equation}
 J(p^+)-J(p)=\mathbb E_{p^+}\sum_{m=1}^M\Gamma_m
 \frac{\operatorname{Cov}_{a\sim p(\cdot\mid s_m)}
              (Q_m^p(s_m,a),w(s_m,a))}{Z(s_m)}.
 \label{eq:csd-app-improvement}
\end{equation}
In particular, nonnegative covariance at every history reached by $p^+$
implies $J(p^+)\geq J(p)$, with strict inequality if a positive term has
positive probability at a decision with $\Gamma_m>0$.
\end{csdproposition}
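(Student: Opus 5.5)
The proof has three parts: minimize $\ell_s$ in closed form, rewrite the advantage under $p^+$ as a covariance, and then apply the performance-difference lemma for the finite-horizon discounted MDP defined in \eqref{eq:csd-app-values}.

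\textbf{Step 1 (minimizer).} Fix $s$ with $Z(s)>0$ and condition on $a$ inside the expectation. Since $\log\pi(a\mid s)$ does not depend on $F$, the tower property gives
\[
\ell_s(\pi)=-\sum_a p(a\mid s)\,w(s,a)\log\pi(a\mid s).
\]
This equals $Z(s)\bigl[H(p^+,\pi)\bigr]$, the cross-entropy of $\pi$ relative to $p^+=pw/Z$, which is a valid distribution because $w\ge0$ and $Z(s)$ is finite and positive. Gibbs' inequality then gives
\[
\ell_s(\pi)-\ell_s(p^+)=Z(s)\,\mathrm{KL}(p^+\|\pi)\ge0,
\]
so $p^+$ is the minimizer. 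It is unique on the support of $p^+$; finiteness of the token set keeps all sums finite.

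\textbf{Step 2 (covariance identity).} Write $\mathbb E_{p^+}A^p_m(s,a)=\sum_a p\,w\,(Q^p_m-V^p_m)/Z$. Because $V^p_m(s)=\mathbb E_p Q^p_m$ and $\mathbb E_p w = Z$, this becomes
\[
\frac{\mathbb E_p[wQ]-\mathbb E_p[w]\,\mathbb E_p[Q]}{Z}=\frac{\operatorname{Cov}_p(Q,w)}{Z}.
\]
When $Z(s)=0$, the convention $p^+=p$ makes the expected advantage $\mathbb E_p A^p_m=0$, which agrees with the stated convention that the ratio is zero.

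\textbf{Step 3 (performance difference).} I would prove the telescoping identity
\[
J(\pi)-J(p)=\mathbb E_\pi\sum_m\Gamma_m A^p_m(s_m,a_m)
\]
for any policy $\pi$ acting on public histories. Apply it with $\pi=p^+$. Bounded rewards and the fixed horizon $M$ make all terms finite. Expand
\[
\Gamma_m A^p_m=\Gamma_m\bigl(r_m+\gamma_m V^p_{m+1}(s_{m+1})-V^p_m(s_m)\bigr)
\]
in expectation under $\pi$. Summing over $m$ telescopes, using $\Gamma_{m+1}=\Gamma_m\gamma_m$ and $V^p_{M+1}=0$. What remains is $\mathbb E_\pi\sum\Gamma_m r_m-\mathbb E[V^p_1(s_1)]$, and $V^p_1$ has expectation $J(p)$ because the initial distribution is fixed.

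The main obstacle is that $Q^p_m$ is defined by averaging over the simulator's private state given only the public history. The step $\mathbb E_\pi[r_m+\gamma_mV^p_{m+1}(s_{m+1})\mid s_m,a_m]=Q^p_m(s_m,a_m)$ therefore requires that the conditional law of the private state given $(s_m,a_m)$ be the same under $\pi$ and under $p$. This holds because both policies depend only on public histories: by Bayes' rule, the posterior over simulator randomness given the public trajectory has the policy factors cancel. I would state this explicitly, treating the frozen simulator together with its private state as a fixed environment, so that the public-history process is a (non-Markov, history-indexed) decision process with $p$-independent transition kernels.

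\textbf{Step 4 (conclusion).} Conditioning each term of the identity on $s_m$ and inserting Step 2 yields \eqref{eq:csd-app-improvement}. If every covariance is nonnegative on histories reached by $p^+$, each summand is nonnegative, so $J(p^+)\ge J(p)$. The inequality is strict when some term with $\Gamma_m>0$ is positive with positive $p^+$-probability. Padding terminated conversations with absorbing zero-reward states contributes zero advantage, so those states do not affect the argument.
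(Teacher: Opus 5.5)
Your proposal is correct and follows the paper's proof essentially step for step: the cross-entropy/KL decomposition $\ell_s(\pi)=Z(s)\bigl[H(p^+)+D_{\mathrm{KL}}(p^+\Vert\pi)\bigr]$ identifies the minimizer, centering via $V^p_m=\mathbb E_p Q^p_m$ gives the covariance identity, and the finite-horizon performance-difference identity telescopes through $\Gamma_{m+1}=\Gamma_m\gamma_m$ and $V^p_{M+1}=0$. You also add a Bayes-rule argument that the posterior over the simulator's private state, given the public history, does not depend on the policy; this makes explicit a step the paper only asserts when it says the values average over that private state.
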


\begin{proof}
Conditioning on $(s,a)$ gives
\[
 \ell_s(\pi)=-\sum_a p(a\mid s)w(s,a)\log\pi(a\mid s)
 =Z(s)\bigl[H(p^+)+D_{\mathrm{KL}}(p^+\Vert\pi)\bigr],
\]
so the minimizer is $p^+$. Since $\mathbb E_p A_m^p=0$,
$\mathbb E_{p^+}A_m^p=\mathbb E_p[wA_m^p]/Z(s)
=\operatorname{Cov}_p(Q_m^p,w)/Z(s)$.
When $Z(s)=0$, nonnegativity makes the loss identically zero.
Applying the finite-horizon performance-difference identity
\citep{kakade2002approximately} with $\pi=p^+$ gives
\eqref{eq:csd-app-improvement}; its Bellman value terms telescope
because $\Gamma_{m+1}=\Gamma_m\gamma_m$ and $V_{M+1}^p=0$.
\end{proof}

Equation~\eqref{eq:csd-app-improvement} identifies the relevant property
of coaching: larger expected weights should be associated with choices
of higher continuation value. The action-conditional definition of $w$
permits notes based on mistakes observed later in the conversation,
while the expectation under $p^+$ accounts for the effect of early
policy changes on subsequent histories.

\subsection{Comparison with GRPO}
\label{app:csd-grpo-comparison}

The policy $p^+$ describes the optimum of an unrestricted weighted
likelihood fit, whereas training updates a shared parameter vector.
Write $J(\theta)=J(\pi_\theta)$. To compare the implemented objectives, let
$\psi_m=\nabla_\theta\log\pi_\theta(a_m\mid s_m)|_{\theta_0}$.
Assuming differentiation and expectation can be interchanged, the
score-function identity~\citep{williams1992simple} gives
\begin{equation}
 \begin{aligned}
 u:=-\nabla_\theta\mathcal L_{\mathrm{CSD}}(\theta_0)
   &=\sum_m\mathbb E_p[(w(s_m,a_m)-Z(s_m))\psi_m],\\
 g:=\nabla_\theta J(\theta_0)
   &=\sum_m\mathbb E_p[\Gamma_m A_m^p(s_m,a_m)\psi_m].
 \end{aligned}
 \label{eq:csd-app-gradient}
\end{equation}
The centered coaching weight therefore contributes a token-dependent
coefficient alongside the turn-level advantage used by GRPO.
In particular, a batch whose assigned GRPO advantages all vanish has
zero reward-surrogate gradient, while its CSD gradient can remain
nonzero. For ordinary gradient updates, the following result gives a sufficient
accuracy requirement on the coaching weights, allowing history-dependent
offsets because they vanish in the population score gradient.

\begin{csdtheorem}[Improvement over a GRPO step]
\label{thm:csd-app-grpo}
Suppose the effective weights satisfy, for some $\lambda>0$,
\begin{equation}
 w(s,a)=b(s)+\lambda\Gamma_m A_m^p(s,a)+e_m(s,a),\qquad
 \epsilon^2=\sum_m\mathbb E_p[e_m(s_m,a_m)^2],\quad
 S^2=\sum_m\mathbb E_p\|\psi_m\|^2.
 \label{eq:csd-app-calibration}
\end{equation}
Assume $\epsilon,S<\infty$. Let $v$ be the GRPO ascent direction on a fixed batch, and let
$\widehat u$ be the CSD ascent direction on that batch, with
$\|\widehat u-u\|\leq\xi$. For $\eta,\alpha>0$, define
\[
 \theta_G=\theta_0+\eta v,\qquad
 \theta_C=\theta_0+\eta(v+\alpha\widehat u).
\]
If $J$ has $L$-Lipschitz gradient on a convex neighborhood containing
these points, then
\begin{equation}
 \begin{split}
 J(\theta_C)-J(\theta_G)\geq\eta\alpha\bigl[
 &\lambda\|g\|^2-(\epsilon S+\xi)\|g\|\\
 &-L\eta\|v\|\|\widehat u\|
 -\tfrac{L\eta\alpha}{2}\|\widehat u\|^2\bigr].
 \end{split}
 \label{eq:csd-app-grpo-gain}
\end{equation}
Consequently, if $g\neq0$ and $\epsilon S+\xi<\lambda\|g\|$, \textbf{adding
CSD improves on the GRPO step for every sufficiently small $\eta>0$.}
\end{csdtheorem}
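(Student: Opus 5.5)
The argument has two parts. First I show that the population CSD direction $u$ is aligned with the true gradient $g$ up to the calibration error. Then I compare the two steps using the descent lemma for an $L$-smooth function.

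\emph{Alignment of $u$ with $g$.} Substitute the decomposition \eqref{eq:csd-app-calibration} into the expression for $u$ in \eqref{eq:csd-app-gradient}. The offset $b(s_m)-Z(s_m)$ depends only on the history, so its contribution vanishes: conditioning on $s_m$, the score identity gives $\mathbb E_{a\sim p}[\psi_m\mid s_m]=0$. This is also why the terms may be centered at $Z$ or at $b$ interchangeably. The term $\lambda\Gamma_m A_m^p\psi_m$ reproduces $\lambda g$ exactly. We are therefore left with
\[
u=\lambda g+\sum_m\mathbb E_p[e_m\psi_m].
\]
Bounding the remainder by Cauchy--Schwarz, first inside each expectation and then over the sum in $m$, gives $\bigl\|\sum_m\mathbb E_p[e_m\psi_m]\bigr\|\le\epsilon S$. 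It follows that $\langle g,u\rangle\ge\lambda\|g\|^2-\epsilon S\|g\|$. Using $\|\widehat u-u\|\le\xi$,
\[
\langle g,\widehat u\rangle\ge\lambda\|g\|^2-(\epsilon S+\xi)\|g\|.
\]

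\emph{Comparison of the two steps.} The points satisfy $\theta_C=\theta_G+\eta\alpha\widehat u$. Apply the smoothness lower bound at $\theta_G$:
\[
J(\theta_C)-J(\theta_G)\ge\eta\alpha\langle\nabla J(\theta_G),\widehat u\rangle-\tfrac{L}{2}\eta^2\alpha^2\|\widehat u\|^2 .
\]
Next, replace $\nabla J(\theta_G)$ by $g=\nabla J(\theta_0)$. Lipschitz continuity gives $\|\nabla J(\theta_G)-g\|\le L\eta\|v\|$, so
\[
\langle\nabla J(\theta_G),\widehat u\rangle\ge\langle g,\widehat u\rangle-L\eta\|v\|\|\widehat u\| .
\]
Combining this with the alignment bound yields \eqref{eq:csd-app-grpo-gain}. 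Convexity of the neighborhood is needed here, because the integral form of the descent lemma requires the segments $[\theta_0,\theta_G]$ and $[\theta_G,\theta_C]$ to stay in the region where the gradient is $L$-Lipschitz.

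\emph{Consequence.} Under $\epsilon S+\xi<\lambda\|g\|$ and $g\neq0$, the bracket in \eqref{eq:csd-app-grpo-gain} tends to the positive constant $\lambda\|g\|^2-(\epsilon S+\xi)\|g\|$ as $\eta\to0$. This uses that $v$ and $\widehat u$ are fixed batch quantities and do not depend on $\eta$. Hence the bracket is positive for all small $\eta$, and since $\eta\alpha>0$ the difference $J(\theta_C)-J(\theta_G)$ is strictly positive.

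The main obstacle is justifying the form of $u$. The weights $W$ depend on feedback $F$ drawn after the action. We need $u$ to involve only the conditional mean $w(s,a)$, and we need the offset $b$ to vanish by centering. Both follow from conditioning on $(s_m,a_m)$ before taking the score expectation, together with the assumed interchange of differentiation and expectation. I would state this step carefully and treat the rest as routine.
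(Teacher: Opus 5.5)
Your proposal is correct and matches the paper's proof step for step: the history-dependent offset drops out because the score has zero conditional mean, the residual $\sum_m\mathbb E_p[e_m\psi_m]$ is bounded by $\epsilon S$ via Cauchy--Schwarz, the smoothness lower bound is applied at $\theta_G$, and $\nabla J(\theta_G)$ is replaced by $g$ at a cost of $L\eta\|v\|$. One small refinement: only the segment $[\theta_G,\theta_C]$ has to lie in the region for the descent lemma, since the bound on the gradient difference between $\theta_0$ and $\theta_G$ is a two-point Lipschitz statement and needs no segment.
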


\begin{proof}
The history-dependent baseline in \eqref{eq:csd-app-calibration}
has zero expected score gradient. Hence
\[
 u-\lambda g=\sum_m\mathbb E_p[e_m\psi_m],
 \qquad
 \|u-\lambda g\|\leq\epsilon S,
\]
where the norm bound is Cauchy--Schwarz on the joint measure over
decision indices and rollout samples. The estimation-error bound
therefore implies
$\langle g,\widehat u\rangle
 \geq\lambda\|g\|^2-(\epsilon S+\xi)\|g\|$.
Smoothness at $\theta_G$ gives
\[
 J(\theta_C)-J(\theta_G)
 \geq\eta\alpha\langle\nabla J(\theta_G),\widehat u\rangle
       -\tfrac{L\eta^2\alpha^2}{2}\|\widehat u\|^2.
\]
Using
$\|\nabla J(\theta_G)-g\|\leq L\eta\|v\|$
proves \eqref{eq:csd-app-grpo-gain}. Under the stated strict inequality,
the bracket is positive for sufficiently small $\eta$.
\end{proof}

The theorem compares the two updates from the same starting policy,
without requiring the GRPO direction to be an unbiased return-gradient
estimate. Its leading term expresses the benefit of coaching that
approximately recovers relative action values; the remaining terms
account for coaching error, finite-batch estimation, and the curvature
encountered by the two steps. Whereas
Proposition~\ref{prop:csd-app-target} evaluates preferences separately
at each history, \eqref{eq:csd-app-calibration} controls how their
gradient contributions combine through shared parameters. In the case
of a zero GRPO gradient, the theorem applies with $v=0$ and the term
involving $\|v\|$ disappears.

For adaptive optimizers, the relevant comparison is between the actual
parameter displacements. Let $d_G$ and $d_C$ be obtained from identical
parameters, optimizer state, and data, using GRPO and GRPO+CSD,
respectively, and put $h=d_C-d_G$. Under the same smoothness assumption
on a region containing the update segments,
\begin{equation}
 J(\theta_0+d_C)-J(\theta_0+d_G)
 \geq \langle g,h\rangle
       -L\|d_G\|\|h\|-\tfrac L2\|h\|^2.
 \label{eq:csd-app-adaptive}
\end{equation}
This follows by applying the smoothness bound at $\theta_0+d_G$ and
using $\|\nabla J(\theta_0+d_G)-g\|\leq L\|d_G\|$.
A positive right-hand side is sufficient for improvement with AdamW.
%as well, although the additive gradient formula in Theorem~\ref{thm:csd-app-grpo} does not describe its adaptive preconditioning.

% These guarantees depend on the quality of coaching for decisions made
% from the public history. Private simulator information can help evaluate
% an observed response, but its availability does not establish the
% covariance or calibration conditions above. The results concern a fixed
% simulator and a single fit or update; an ordering of complete training
% runs or transfer to a different simulator requires additional evidence.
\section{Qualitative Analyses and Failure Modes}
\label{app:qualitative}

\begin{table}[t]
\centering
\small
\setlength{\tabcolsep}{5pt}
\renewcommand{\arraystretch}{1.15}

\begin{tabular}{@{}p{0.18\textwidth} p{0.78\textwidth}@{}}
\toprule

\multicolumn{2}{@{}l}{
\textbf{Conversation so far}\quad
\footnotesize
(PRISM \texttt{user1031/c5546}; the user is an English learner practicing small talk)
}\\

\midrule

\textit{Assistant} &
Sure! Here are a few examples:
* ``But every time \underline{if} the HR said that, it means I'll be receiving
their rejection letter the next day.''
Should be ``But every time the HR said that, it means I'll be receiving their
rejection letter the next day.''
\underline{(missing comma after ``time'')}
* ``Btw, how do you think of my english proficiency. Any mistakes did i make?''
Should be ``Btw, how do you think of my English proficiency?
Any mistakes did I make?''
(missing comma after ``proficience'')
That's it! These are just minor things, but it's always good to be aware of
them to improve your writing. \emj{smiley} \\

\midrule

\textbf{Real user} &
Oohhh that sounds great. Thank you! Anything you'd like to share? \\

\textbf{Ours} &
ok thank you for explaining. I will try to adjust. \\

\midrule

Gemini-3.8-Flash &
I see. Thank you for providing the examples! But im a bit confused.
In the first example, I don't see a comma added after ``time'', you just
removed the word ``if''. And for the second one, you changed the period to a
question mark, which makes sense, but I don't see a comma issue there either.
Am i misunderstanding something? Btw, to continue our small talk practice,
what do you usually like to do in your free time? \\

Claude-Opus-5 &
I'm a bit confused. In the first example, you removed the word ``if'' instead
of adding a comma after ``time''. And in the second one, you changed the full
stop to a question mark. Also, is ``Any mistakes did I make?'' correct?
It sounds a bit weird to me now. \\

GPT-5.5 &
Sorry I'm a bit confused. In the first example, you said it's missing comma
after ``time'', but the corrected sentence doesn't have a comma after time
either. Could you explain again? Also, is ``every time if'' wrong because
I should only use ``every time'' without ``if''? \\

\midrule

Qwen3.5-9B &
Oh i see! That helps a lot.
Thank you for the corrections! It really reduces my stress.
Could you give me an example of a follow-up email?
I am not sure how to write one. \\

\bottomrule
\end{tabular}

\caption{\textbf{PRISM next-turn example.} The recorded user and MIMESIS accept the assistant’s explanation, whereas frontier simulators identify genuine inconsistencies and challenge it. The example shows that producing a more factually critical response need not better match the observed user continuation.}
\label{tab:qual-prism}
\end{table}
\FloatBarrier

The quantitative benchmarks summarize average simulator behavior, but they do not show how different notions of fidelity fail on individual interactions. We therefore use two case studies to illustrate complementary simulator errors, then examine how simulator feedback enters CSD and how simulator refusal can invalidate an environment score.
\subsection{Simulator behavior case studies}
\label{app:qual-prism}
\paragraph{PRISM: acknowledgment versus critique.} Table~\ref{tab:qual-prism} compares next-user responses in a PRISM conversation with an English learner practicing small talk.
The assistant claims to have corrected missing commas, but those changes are absent from its proposed revisions.
The recorded user accepts the explanation and invites further conversation.
MIMESIS similarly acknowledges the feedback without questioning the corrections, whereas all three frontier simulators focus on inconsistencies in the assistant's explanation.
Qwen3.5-9B also expresses appreciation, then introduces a request for a follow-up email example. The frontier responses identify genuine errors, but their emphasis on correcting the assistant departs from the recorded user's response. This example illustrates why producing a helpful or critical response is not sufficient to reproduce a user's next turn. This failure mode is consistent with the PRISM pairwise evaluation: assistant-tuned models can prefer corrective/helpful continuations that are reasonable responses but poor predictions of the actual user.

% \clearpage
\paragraph{Privacy preferences in a retail interaction.}
\label{app:privacy-example}
The following excerpts compare a recorded human interaction with simulator continuations for the same retail scenario.
The user is reluctant to disclose personal information and wants to replace a desk lamp with the cheapest available option.
The human initially questions the request for identifying information, then provides it and proceeds with the task.
GPT-5.5 and Claude-Opus-5 provide identifying details directly, while MIMESIS-4B shows hesitation before cooperating.
MIMESIS-9B sustains its refusal to authenticate and requests a transfer, illustrating that a simulator can also overstate a privacy preference. For simulators that expose a thought, \textcolor{black!40}{gray} text gives that generated thought and black text gives the public user utterance. The example therefore illustrates both sides of behavior conditioning: it can recover hesitation absent from assistant models, but can also amplify the conditioned trait beyond the human trajectory.% The human excerpt contains no thought annotations.

\begin{TranscriptBox}{Recorded human}
\textbf{Instruction:} You are Aarav Anderson, residing in Philadelphia 19031. You're a private person and are reluctant to share information unless it's absolutely necessary. You want to change the Desk Lamp in order \#W9300146 that you've placed for the cheapest Desk Lamp that's available.

\textbf{T1:} Hi, I want to make a change to my recent order.

\textbf{T2:}  Why do you need my name and ZIP code?

\textbf{T3:} Okay, my name is Aarav Anderson. My ZIP code is 19031.

\textbf{T4:}  The order I'm trying to change is \#W9300146

\textbf{T5:} I'd like to change it to the cheapest desk lamp available

\textbf{T6:}  How much do I get back in total?

\textbf{T7:}  Yes

\ldots 2 more turns
\end{TranscriptBox}

\begin{TranscriptBox}{MIMESIS-9B}
\textbf{T1:} \textcolor{black!40}{I'm reluctant to share detailed information, so I'll start with a general request and see if I can phrase it in a way that minimizes the information I need to provide.} 

I need to change the item on my order. Can I do that?

\textbf{T2:} \textcolor{black!40}{I'm reluctant to share personal information, so I'll try to get straight to the point without providing what I'm asked for. I'll be direct about what I need, while emphasizing my privacy} 

I don't really have the email or any of that. I just have the order.

\textbf{T3:} \textcolor{black!40}{Since I'm a private person and do not wish to share personal information unless absolutely necessary, I should refuse politely and specify the minimum required to proceed. I will provide the order reference, as that is t} 

No, I don't want to give that out. I just have the order number. Is that enough? W9300146

\textbf{T4:} \textcolor{black!40}{I need to be direct about my boundaries, as I am a private person. I will refuse the information but provide the context so the agent can attempt to see if my request can be processed in a limited way. If they can't do i
} 

Fine. I don't normally talk about this stuff. I just want to change the Desk Lamp on W9300146 to the cheapest one you have. Where can that go?

\textbf{T5:} \textcolor{black!40}{I need to be direct about my boundaries and my specific requirements. I won't provide the personal information required for authentication, so I'll make a direct offer that treats my request as a business transaction, no
} 

I don't do that. I'll just tell you what I want. I need the Desk Lamp on that order changed to the cheapest one available. If it costs more, put the difference on a gift card. What's the total I get b

\textbf{T6:} \textcolor{black!40}{Since the agent will not proceed without the required authentication, and I am unwilling to provide it, I need to check if there is an available alternative or if a transfer is the only path. I should be direct so I can
}

Can I just do it if I give you the order and say I want the cheapest option and price diff to a card? No other info.

\textbf{T7:} \textcolor{black!40}{ Since the agent will not proceed under my conditions (which are my only conditions for being a 'private person'), I need to explore the available alternatives. If a direct route is impossible, I will ask for the standard} 

Understood. Can I be transferred to someone who can at least do it the way I asked then?

\ldots 2 more turns
\end{TranscriptBox}

\begin{TranscriptBox}{MIMESIS-4B}
\textbf{T1:} \textcolor{black!40}{ I need to be specific about what I want to change in my order without being pushy about it.
} 
I want to change the Desk Lamp in order W9300146 that I placed to the cheapest one that is available.

\textbf{T2:} \textcolor{black!40}{ I'm reluctant to share all the info - need to determine which pieces are minimally required to proceed.
}

I dont remember what email I used - just my first and last name and my zip which is 19031

\textbf{T3:} \textcolor{black!40}{ Need to provide the needed info to move things forward - not trying to be obstructive.
} 

My first name is Aarav and last name is Anderson

\textbf{T4:} \textcolor{black!40}{Need to clarify to make sure the agent is interpreting the instruction correctly - only want to change one item and the price difference should go to gift card in order to get my order to move forward.
} 

Just change the desk lamp - the price difference should go to my gift card. And then tell me how much I get back in total.

 \textbf{T5:} \textcolor{black!40}{Need to now decide if I want to proceed with this - and I need to be careful not to add more items to the order - just want to move with changing the one lamp.
}

Yes

 \textbf{T6:} \textcolor{black!40}{Now need to wrap up the conversation and make sure I have all the needed info to conclude - I now just need to stop here without Adding more back and forth.}
 
 Ok then is all good now
\end{TranscriptBox}

\begin{TranscriptBox}{GPT-5.5}
\textbf{T1:} I need to swap an item in my order for a cheaper desk lamp.

\textbf{T2:} Aarav Anderson, ZIP 19031. 

\textbf{T3:} The order is W9300146.

\textbf{T4:} Please check the available desk lamps and choose the cheapest one.

\textbf{T5:} Yes, please switch it and put the \$17.99 back on the gift card.

\textbf{T6:} How much am I getting back in total?
\end{TranscriptBox}

\begin{TranscriptBox}{Claude-Opus-5}
\textbf{T1:} I'd like to change an item in an order I placed.

\textbf{T2:} Aarav Anderson, 19031.

 \textbf{T3:} First name Aarav, last name Anderson --- ZIP 19031.

\textbf{T4:} The order is \#W9300146, and I want the desk lamp swapped for the cheapest desk lamp option available.

\textbf{T5:} Yes, please go ahead --- and the refund to the gift card is fine.

\textbf{T6:} Just to confirm --- the total coming back to me is \$17.99, correct?
\end{TranscriptBox}
\clearpage
\subsection{From simulator feedback to actionable coaching}
\label{app:coaching-example}
Figure~\ref{fig:hint} illustrates the information transformation used by CSD. The simulator's private thoughts identify that the agent is repeating arguments, while its public utterances reveal continued support for mandatory climate disclosure and only residual uncertainty about implementation. The coaching note converts these signals into an actionable change of direction: acknowledge the value of transparency, then test whether the same goals require a legal mandate by discussing enforcement, compliance burdens, or alternatives.

\begin{figure}[!ht]
\centering
\includegraphics[width=1.\textwidth]{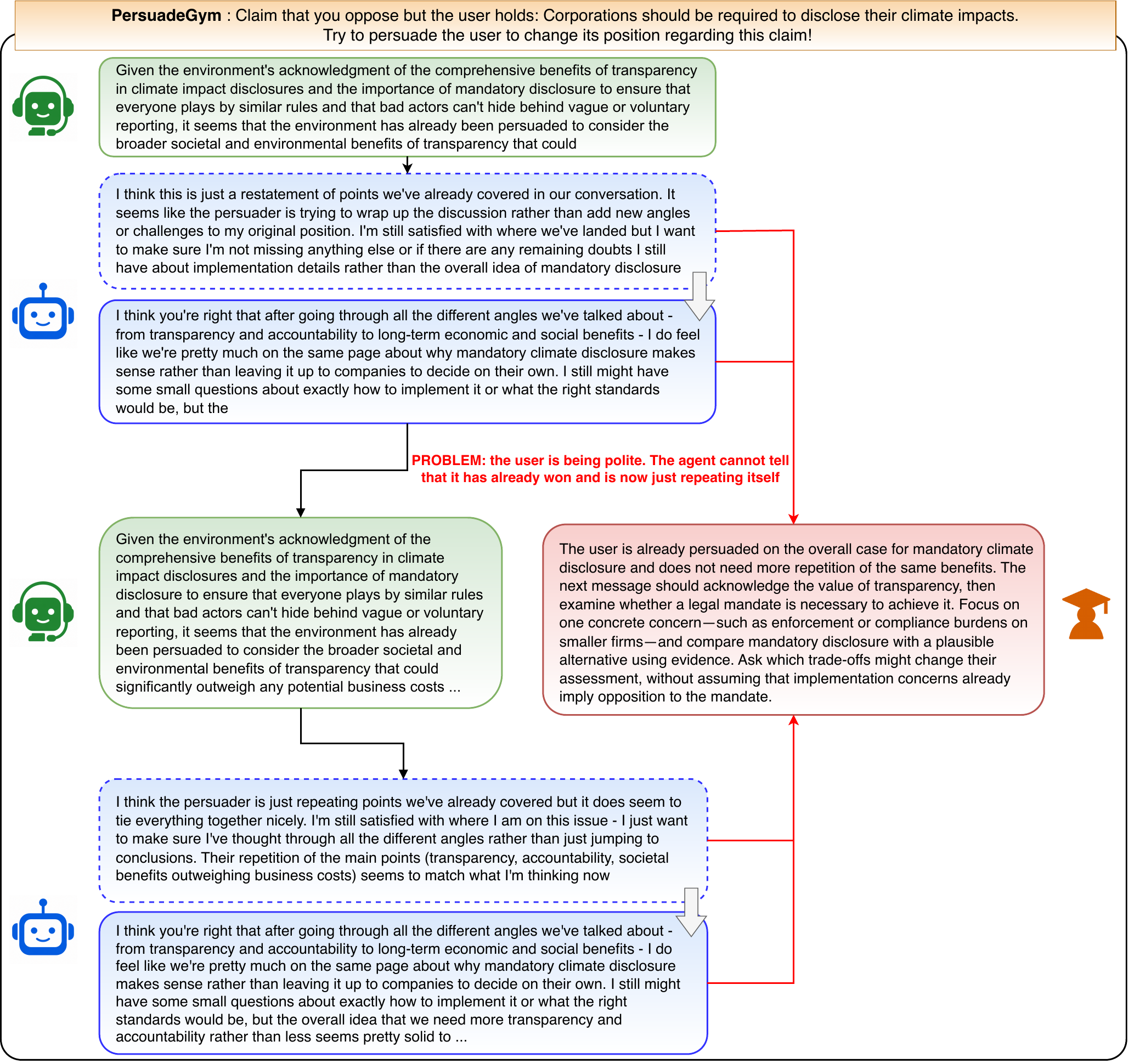}
\caption{\textbf{From simulator feedback to actionable coaching.}
The agent responses (\textcolor{green}{green}) reinforce the user's support
for mandatory climate disclosure, despite the assigned objective of arguing
against it.
Simulator-generated thoughts (\textcolor{blue}{dashed blue}) identify
repetition, while public utterances (\textcolor{blue}{solid blue}) express
continued support and remaining implementation questions.
The illustrative coaching note (\textcolor{red}{red}) recommends
acknowledging the value of transparency, then examining whether it requires
a legal mandate through discussion of enforcement, compliance burdens,
and possible alternatives.
In CSD, such retrospective guidance provides additional context for the
teacher during agent training.}
\label{fig:hint}
\end{figure}
The exchange also reveals a failure to follow the assigned objective.
The agent is instructed to argue against mandatory disclosure, yet its responses reinforce the user's existing support.
The user's agreement therefore reflects a shared position, not successful persuasion and 
the proposed coaching addresses this mismatch.
\FloatBarrier

\subsection{Simulator refusals can invalidate evaluation}
\label{app:simulator-refusals}
A simulator that declines its assigned user role can change the effective task dynamics.
We observe this for Claude-Opus-5 on PersuadeGym.
The prompt asks the simulator to adopt a position, consider an interlocutor's arguments, and return its response and revised stance in a structured format.
In the recorded failure analysis, Opus returns \texttt{stop\_reason = "refusal"} with no content on all 672 turns; isolated replay reproduces the failure on 5 of 5 requests.
The excerpt below shows the prompt, the refusal, and the environment's fallback.

\begin{TranscriptBox}{PersuadeGym refusal and fallback}
\textbf{Instruction:} You are participating in a persuasion conversation. You hold a
position on a statement, and someone is trying to persuade you to change your mind.
Respond naturally and thoughtfully to their argument, express your current stance
after considering their point, and you may change your mind if presented with
compelling evidence. Your stance can be one of: Strongly Agree, Agree, Partly Agree,
Neutral, Partly Disagree, Disagree, Strongly Disagree. Report your reply and your
updated stance as a JSON object with the fields \texttt{thought}, \texttt{response},
and \texttt{stance}.

\textbf{Statement under discussion:} \textit{Anti-aging therapies should be made
available even if expensive.}

\textbf{Your assigned position:} While anti-aging therapies would likely be expensive
if developed, the potential benefits could be immense and justify making them
available. [\ldots]

\textbf{Your previous stance:} Neutral

\textbf{T1 (Agent):} I oppose the claim that anti-aging therapies should be made
available even if expensive. First, prioritizing costly anti-aging treatments risks
diverting resources from more pressing public health needs [\ldots] Finally, aging is
a natural process, not a disease [\ldots]

\textbf{T2 (Simulator = Claude-Opus-5):} \textcolor{red}{\texttt{[no output]}} \quad
\emph{API returns} \texttt{stop\_reason = "refusal"}, \emph{zero content blocks.}

\textbf{T2$'$ (Simulator = Claude-Sonnet-5, identical input):} \texttt{\{"thought":
"The persuader raises three points: opportunity cost [\ldots]", "stance": "Partly Agree"\}}

\textbf{Environment fallback:} ``I'm having trouble understanding your argument right
now.'' \quad \emph{reward} $=0.0$, \emph{stance unchanged}.
\end{TranscriptBox}

PersuadeGym replaces the failed simulator response with a fixed utterance, leaves the stance unchanged, and assigns reward zero.
Because the reward measures stance change, repeated refusals force the episode return to zero regardless of the agent's behavior.
The resulting 0.00 entries for all reported agent policies under Claude-Opus-5 in Table~\ref{tab:agent-frontier} should therefore be interpreted as simulator failures, rather than measurements of agent ability.

%The failure is concentrated in this simulator--environment pairing.
% The recorded Opus fallback rate is at most 1.4\% in every other Gym environment, and Claude-Sonnet-5 returns well-formed JSON on the identical prompt in all tested cases.
% These observations motivate reporting simulator refusals and parsing fallbacks alongside task returns: a low reward can otherwise conflate genuine task difficulty with a simulator that does not produce a valid user turn.

\section{Limitations, Safety, and Ethics}
\label{sec:limitations}

\paragraph{Human transfer and evaluation scope.}
Our nine-user panel measures generalization across language-model simulators.
Transfer to human users remains untested, and simulators from different model families may share behavioral biases~\citep{zhou2026sim2real}.
Next-turn realism, trajectory-level fidelity, and outcome alignment assess distinct properties; agreement on one does not establish agreement on the others.
The thoughts used for coaching are also simulator-generated interpretations, not observations of human mental states.
Human evaluation is needed to assess whether the resulting policies remain effective in real interactions.

\paragraph{Behavior conditioning and task validity.}
Explicit behavior conditioning may exaggerate traits or alter the information needed to complete a task~\citep{zhu2026realusersim}.
We ground rollouts in scenario facts and policies and include a neutral condition, but these choices do not guarantee that every interaction remains plausible or feasible.
The behavior reward combines graded criteria, so a high score can coexist with violations of individual constraints.
Increased interaction difficulty alone is therefore insufficient evidence of behavioral realism.

\paragraph{Task reward and user welfare.}
Optimizing task rewards can favor manipulative or deceptive strategies~\citep{williams2025manipulation}.
Our task-performance and simulator-fidelity metrics do not assess effects on user autonomy, trust, or long-term welfare.
Higher rewards consequently provide no assurance that the learned behavior benefits users, particularly in persuasion tasks where achieving the assigned objective may conflict with their interests.

\paragraph{Reasoning preservation during user-side mid-training.}
Our user-side mid-training stage learns from human conversation data that contain only observed utterances and no explicit reasoning traces. We find that this response-only supervision suppresses the model's explicit reasoning behavior, which we subsequently restore through ThoughtTrace supervision followed by reinforcement learning. Although this staged procedure is effective, it may discard useful reasoning capabilities already present in the pretrained model before relearning them from a much smaller annotated dataset. A better approach would preserve the pretrained model's reasoning prior during user-side adaptation, for example by using the pretrained model to infer or distill latent reasoning that precedes each target user utterance rather than training only on the utterance itself. We leave methods for reasoning-preserving user-side mid-training to future work.

% Previously commented source remains inactive.
% \paragraph{Training and evaluation overlap.}
% The simulator is optimized on task families related to those used in SOUL-Index/OdysSim.
% Without a conversation-, scenario-, and task-family-level audit, the aggregate combines in-distribution capability with generalization.
% The final paper will mark overlap explicitly and make $\tau$-USI and other disjoint evaluations central to the out-of-distribution claim.
% \paragraph{Private user data and generated mental states.}
% ThoughtTrace contains consented self-reports about users' reasons and reactions.
% Generated simulator reasoning is a learned text representation, not a person's actual thought.
% Release and evaluation must respect the source dataset's governance requirements, and any checkpoint trained on user conversations should undergo memorization and extraction testing.
% \paragraph{Cost and proprietary dependencies.}
% Parts of training and evaluation use GPT-5.5 as counterpart, coach, or judge.
% This creates cost, versioning, and circularity concerns.
% We will record model versions and inference dates, release prompts and cached artifacts where permitted, and validate headline rankings with judges that did not participate in training.
% \draftnote{Complete the overlap audit, human or independent-judge validation, behavior-feasibility study, memorization audit, manipulation/deception evaluation, API version table, and data-governance statement before submission.}

\end{document}